\documentclass[lettersize,journal]{IEEEtran}

\usepackage{array}
\usepackage[caption=false,font=normalsize,labelfont=sf,textfont=sf]{subfig}
\usepackage{stfloats}
\usepackage{verbatim}
\usepackage{graphicx}
\usepackage{cite}

\usepackage{graphicx}%
\usepackage{multirow}%
\usepackage{amsmath,amssymb,amsfonts}%
\usepackage{amsthm}%
\usepackage{mathrsfs}%
\usepackage{xcolor}%
\usepackage[table,xcdraw]{xcolor}
\usepackage{textcomp}%
\usepackage{manyfoot}%
\usepackage{booktabs}%
\usepackage{algorithm}%
\usepackage{algpseudocode}%

\makeatletter
\renewcommand\fs@ruled{%
  \def\@fs@cfont{\bfseries}%
  \let\@fs@capt\floatc@ruled
  \def\@fs@pre{{\color{black}\hrule height.8pt depth0pt}\kern2pt}%
  \def\@fs@post{\kern2pt{\color{black}\hrule}\relax}%
  \def\@fs@mid{\kern2pt{\color{black}\hrule}\kern2pt}%
  \let\@fs@iftopcapt\iftrue
}
\renewcommand\floatc@ruled[2]{%
  {\color{black}\@fs@cfont #1} {\color{black}#2}\par
}
\restylefloat{algorithm}
\makeatother

\usepackage{listings}%
\usepackage[colorlinks=true,linkcolor=blue,citecolor=blue,urlcolor=blue]{hyperref}

\theoremstyle{thmstyleone}%

\theoremstyle{thmstyletwo}%

\theoremstyle{thmstylethree}%

\usepackage{subcaption}
\usepackage{url}
\usepackage{makecell}

\newcommand{\revision}[1]{\textcolor{black}{#1}} % black/magenta
\newtheorem{prop}{Proposition}
\newcommand{\tableref}[1]{\text{Table }\ref{#1}}
\newcommand{\figref}[1]{\text{Figure }\ref{#1}}
\def\methodname{UniE2F}

\begin{document} 

\title{\methodname: A Unified Diffusion Framework for Event-to-Frame Reconstruction with Video Foundation Models}

\author{Gang~Xu, Zhiyu~Zhu, and~Junhui~Hou,~\IEEEmembership{Senior Member,~IEEE}% <-this % stops a space
\thanks{Gang Xu and Zhiyu Zhu contributed equally to this work. \textit{(Corresponding author: Junhui Hou.)}}
\thanks{Gang Xu is with the Department of Computer Science, City University of Hong Kong, Hong Kong, China, and also with the Guangdong Laboratory of Artificial Intelligence and Digital Economy (SZ), Shenzhen, China (e-mail: xugang@gml.ac.cn).}
\thanks{Zhiyu Zhu is with the Department of Computer Science, City University of Hong Kong, Hong Kong, China, and also with the Department of Computer Science, City University of Hong Kong (Dongguan), Dongguan, China (e-mail: zhiyu.zhu@cityu-dg.edu.cn).}
\thanks{Junhui Hou is with the Department of Computer Science, City University of Hong Kong, Hong Kong, China (e-mail: jh.hou@cityu.edu.hk).}
\thanks{This work was supported in part by the National Natural Science Foundation of China under Grant 62422118, and in part by the Hong Kong Research Grants Council under Grant 11218121 and Grant N\_CityU1114/25.}
}

% % The paper headers
% \markboth{Journal of \LaTeX\ Class Files,~Vol.~14, No.~8, August~2021}%
% {Shell \MakeLowercase{\textit{et al.}}: A Sample Article Using IEEEtran.cls for IEEE Journals}

% \IEEEpubid{0000--0000/00\$00.00~\copyright~2021 IEEE}
% % Remember, if you use this you must call \IEEEpubidadjcol in the second
% % column for its text to clear the IEEEpubid mark.

\maketitle

\begin{abstract}
Event cameras excel at high-speed, low-power, and high-dynamic-range scene perception. However, as they fundamentally record only relative intensity changes rather than absolute intensity, the resulting data streams suffer from a significant loss of spatial information and static texture details. In this paper, we address this limitation by leveraging the generative prior of a pre-trained video diffusion model to reconstruct high-fidelity video frames from sparse event data. Specifically, we first establish a baseline model by directly applying event data as a condition to synthesize videos. Then, based on the physical correlation between the event stream and video frames, we further introduce the event-based inter-frame residual guidance to enhance the accuracy of video frame reconstruction. Furthermore, we extend our method to video frame interpolation and prediction in a zero-shot manner by modulating the reverse diffusion sampling process, thereby creating a unified event-to-frame reconstruction framework. \revision{Experimental results on real-world and synthetic datasets demonstrate that our method outperforms previous reconstruction approaches on most quantitative metrics and achieves superior qualitative results, while achieving competitive zero-shot performance compared with dedicated methods explicitly trained for interpolation and prediction tasks.} We also refer the reviewers to the \textbf{video demo} contained in the \textit{supplementary material} for video results. The code will be publicly available at \url{https://github.com/CS-GangXu/UniE2F}. 
\end{abstract}

\begin{IEEEkeywords}
Event-based Vision, Video Reconstruction, Diffusion Model, Video Foundation Models.
\end{IEEEkeywords}

\section{Introduction}

\begin{figure}[t]
\centering
\includegraphics[width=\linewidth]{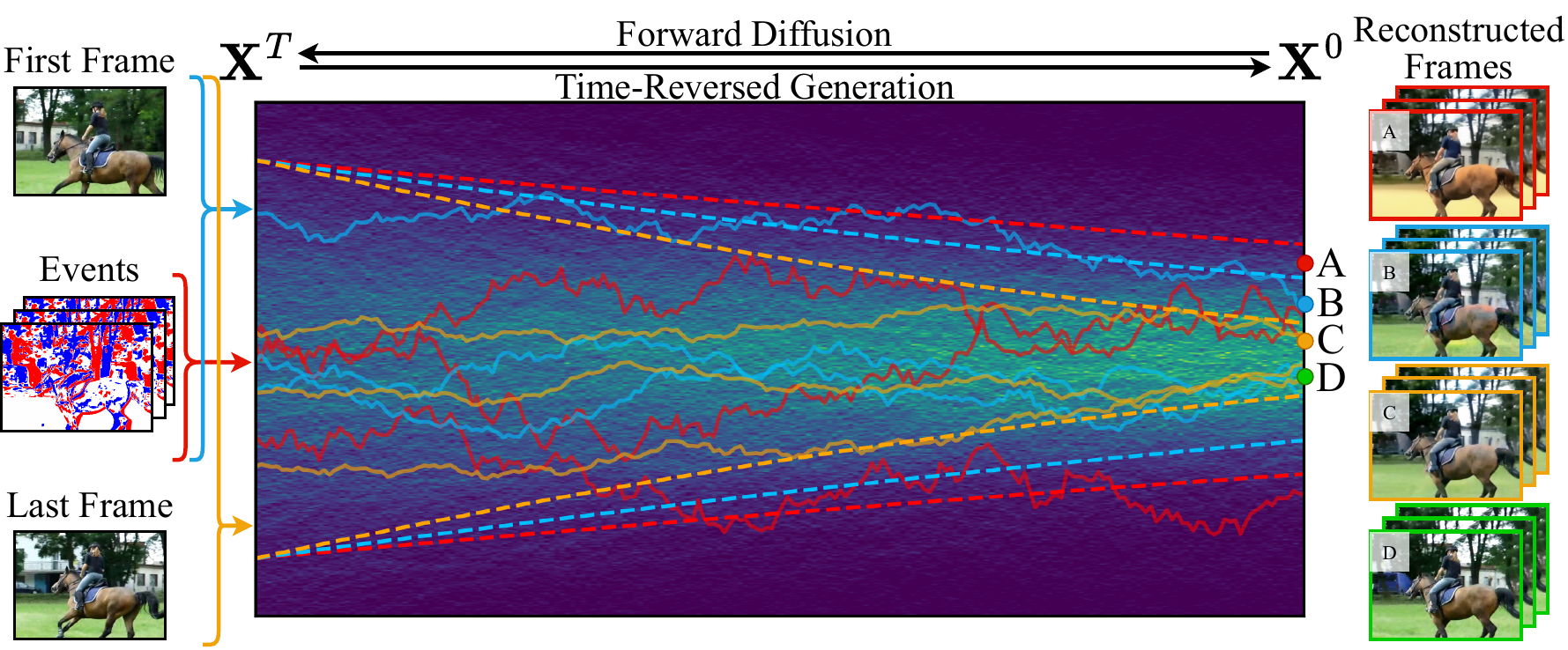}
% \caption{
% Illustration of the forward and backward diffusion processes for our \methodname~under the conditional event data. The right and left parts indicate the inputs and results of our algorithm, while in the central plot, the solid and dashed lines with the same color represent the reverse-time sampling SDE and ODE trajectories under the same setting, respectively. The proposed method can adapt to different types of event-assisted frame reconstruction tasks. (\textcolor{red}{\textbf{A}}) \textbf{event-based frame reconstruction}: with input of only event, to reconstruct RGB frame; (\textcolor{blue}{\textbf{B}}) \textbf{frame prediction}: with input of both event and the first frame to reconstruct the remaining frames; and (\textcolor{orange}{\textbf{C}}) \textbf{frame interpolation}: with input of event and the first and last frames to reconstruct the intermediate frames. (\textbf{D}) denotes the ground-truth frames corresponding to the conditional event data.
% }
\caption{
Illustration of the forward and backward diffusion processes of our \methodname~ under conditional event data. The left and right parts show the inputs and outputs, while in the central plot, solid and dashed lines of the same color denote the reverse-time SDE and ODE sampling trajectories under the same setting, respectively. The proposed method adapts to various event-assisted frame reconstruction tasks: (\textcolor{red}{\textbf{A}}) frame reconstruction, reconstructing RGB frames from events only; (\textcolor{blue}{\textbf{B}}) frame prediction, using events and the first frame to reconstruct subsequent frames; and (\textcolor{orange}{\textbf{C}}) frame interpolation, using events with the first and last frames to reconstruct intermediate frames. (\textbf{D}) denotes the ground-truth frames.
}

\label{fig:introduction}
\end{figure}

\IEEEPARstart{E}{vent} cameras, also known as dynamic vision sensors, measure high-frequency changes in pixel intensity as ``events'' and output a continuous flow encoding the time, location, and polarity of each change~\cite{brandli2014a, brandli2014real}. This asynchronous processing allows them to capture high dynamic range scenes (up to 140 dB) with exceptional temporal resolution (approx. 1 \textmu s) and low power consumption (5 mW) without motion blur~\cite{wan2022learning, zhang2023event}. Building on these advantages, event data has been widely adopted in object tracking~\cite{zhu2022learning, zhu2023cross}, semantic segmentation~\cite{chen2024segment}, and gaze estimation~\cite{zhu2025modeling}.
While these attributes make event cameras ideal for high frame rate video reconstruction~\cite{rebecq2019high, gallego2020event}, the data is inherently sparse as it only captures relative brightness changes. Consequently, this limited information content has led previous event-based video frame reconstruction approaches~\cite{rebecq2019high, stoffregen2020reducing, scheerlinck2020fast, cadena2021spade, weng2021event} to reconstruct images that differ \textit{significantly} from the richly detailed scenes observed in real-world environments.
Moreover, the current utility of event cameras extends beyond image reconstruction. Their microsecond-level resolution unlocks critical applications in temporal modeling, i.e., video frame interpolation (VFI)~\cite{dong2023video} and video frame prediction (VFP)~\cite{li2020video}. Where standard cameras struggle with high-speed motion, often suffering from blur and temporal gaps, event-based VFI~\cite{tulyakov2021time,stepan2022time} bridges these discontinuities by leveraging continuous event streams to synthesize intermediate frames. This enables applications ranging from smooth slow-motion smartphone photography to latency-critical tasks in autonomous navigation. Similarly, VFP~\cite{zhu2024video,wang2025event} utilizes event dynamics to forecast future states, compensating for hardware limitations in scientific observation. However, due to the limited model capability, previous work usually processes those tasks of reconstruction, VFI, and VFP have been treated as isolated tasks.

Recently, the diffusion model~\cite{ho2020denoising} has shown remarkable progress in image generation and image restoration~\cite{kawar2022denoising, chung2023diffusion, wang2023zeroshot, saharia2022image, saharia2022palette, gao2023implicit}, relying on a core mechanism of iterative noise addition and removal to generate realistic outcomes.
Through large-scale pre-training, the stable diffusion model~\cite{rombach2022high} has accumulated rich generative prior knowledge and a powerful capability to approximate diverse and complicated distributions.
With the aid of conditional information, such as text or image prompts, the stable diffusion model~\cite{rombach2022high} can achieve a more accurate generation process, thereby enhancing the fidelity of details and semantic consistency of the generated content. Furthermore, the stable video diffusion (SVD) model~\cite{blattmann2023stable}, benefiting from large-scale pre-training on video data, also demonstrates its capabilities to produce videos that are both realistic and visually pleasing based on text and image conditions.

In this paper, building on the powerful generative capacity of the SVD model, we propose the \textbf{Uni}fied Framework for \textbf{E}vent\textbf{2F}rame (\methodname), which leverages event data as the conditional input to guide the reconstruction process, bridging the gap between sparse event data and highly detailed real-world scenarios.
After fine-tuning the SVD model, we introduce the event-based inter-frame residual guidance, which exploits event data to effectively constrain the residual between consecutive reconstructed frames.
Specifically, during the reverse diffusion process, this mechanism first predicts the inter-frame residuals from event data.
Then, it iteratively refines the intermediate latent via the gradient descent algorithm based on the inter-frame residuals to improve the reconstruction accuracy.
In addition to reconstructing video frames purely from the event data, we further adapt the proposed method to event-based video frame interpolation~\cite{tulyakov2021time, tulyakov2022time, he2022timereplayer, wu2022video} and prediction~\cite{zhu2024video} in a \textit{zero-shot} manner.

Based on the powerful generative capacity of mapping sparse and asynchronous event data to continuous video frames, the prior information from the first and last frames for interpolation, or solely from the first frame for prediction, is utilized to guide the reverse diffusion process.
Specifically, we modulate the score function by incorporating deviation derived from the discrepancies between the estimated clean latent and the given reference latent, guiding the reverse sampling process to reconstruct intermediate or subsequent frames with enhanced temporal consistency and visual fidelity.
As shown in \figref{fig:introduction}, as more prior information is provided, our \methodname~can produce frames that more closely match the ground truth.
Through these designs, we construct a unified event-to-frame reconstruction framework that effectively handles diverse applications while reducing the need for task-specific models.
\revision{Experimental results on real-world and synthetic datasets demonstrate that our method outperforms previous reconstruction approaches on most quantitative metrics and achieves superior qualitative performance. Moreover, without any task-specific training for video frame interpolation or prediction, our method achieves competitive performance compared with dedicated methods explicitly trained for the corresponding tasks.}

In summary, the main contributions of this work are as follows.

\begin{itemize}
\item We propose a diffusion-based event-to-frame reconstruction framework that utilizes the event-based inter-frame residual guidance to align the physical correlations between those frames, thus boosting the performance.
\item We give a theoretical analysis that optimizing the proposed regularization-based event physical mechanism can indeed help minimize the error upper boundary.
\item We extend the proposed method to video frame interpolation and prediction in a \textit{zero-shot} manner, which formulates the reverse sampling of the diffusion model to build a unified event-to-frame reconstruction framework.
\end{itemize}

The rest of this paper is organized as follows: In Section \ref{sec:related work}, we review related work on event-based video frame reconstruction, interpolation, and prediction, and advancements in diffusion models. Section \ref{sec:preliminary} introduces event representation and the diffusion model. In Section \ref{sec:proposed method}, we present the proposed method, including the event-conditioned fine-tuning of the diffusion model, inter-frame residual guidance, and adaptation to video frame interpolation and prediction. Section \ref{sec:experiment} presents detailed experimental settings, results, and comparisons with state-of-the-art methods. In Section \ref{sec:ablation study}, we conduct comprehensive ablation studies to verify the effectiveness of each component in our network. Finally, Section \ref{sec:conclusion} concludes the paper and outlines potential future research directions.

\section{Related Work}
\label{sec:related work}

\noindent
\textbf{Event-based Video Frame Reconstruction.}
Event cameras offer unique advantages, including high dynamic range, high temporal resolution, and low power consumption~\cite{brandli2014a, brandli2014real, gallego2020event, pan2019bringing, pan2020high, chen2022ecsnet}. 
Existing event-based video frame reconstruction approaches can be broadly divided into two categories: diffusion-model-based methods and non-diffusion-based methods.
Early non-diffusion methods relied on the intensity gradients provided by events and optical flow to reconstruct scene intensity~\cite{cook2011interacting, kim2008simultaneous, munda2018real, bardow2016simultaneous}.
Recently, deep learning has driven major advances: Rebecq et al.~\cite{rebecq2019high} pioneered the use of convolutional recurrent neural networks for high-quality reconstructions; 
FireNet~\cite{scheerlinck2020fast} achieves faster inference with a lightweight design; 
Stoffregen et al.~\cite{stoffregen2020reducing} improved model generalization via data augmentation. 
By utilizing the self-supervised framework~\cite{paredes2021back}, spatially-adaptive denormalization~\cite{cadena2021spade}, transformer models~\cite{weng2021event}, and context-guided hypernetworks, these approaches have shown remarkable performance.
For the diffusion-based methods, based on the coarse reconstruction result from ETNet~\cite{weng2021event}, \cite{liang2023event} and \cite{zhu2024temporal} utilized a diffusion model~\cite{ho2020denoising} to enhance the high-frequency components of objects in reconstructed images.
In addition, Chen et al.~\cite{chen2024lase} leveraged language guidance and pretrained diffusion models to achieve semantic-aware event-to-video reconstruction.
E2VIDiff \cite{liang2024e2vidiff} introduces diffusion models with event-guided sampling to reconstruct colorful and perceptually realistic videos from achromatic event streams.
CUBE \cite{zhao2024controllable} utilizes the edge information from events and combined it with textual descriptions to guide the diffusion network to synthesize videos in a zero-shot manner.
\revision{
Recently, Kim et al.~\cite{kim2025event} proposed an event-guided unified framework that jointly performs low-light video enhancement, motion deblurring, and frame interpolation by effectively integrating multi-temporal event and frame features.
Bai et al.~\cite{bai2026ae2vid} proposed AE2VID, which introduces aperture modulation to actively generate dense event signals and jointly exploits aperture-modulation-triggered and motion-triggered events for more robust video reconstruction, particularly improving static-region reconstruction and alleviating error accumulation.
}

In contrast to these methods, which are limited to reconstructing video frames from events, our \methodname, leveraging rich pre-trained generative priors and score function modulation, not only reconstructs video frames in real‑world scenes from event inputs, but also performs video frame interpolation and future‑frame prediction in a zero‑shot manner, without any additional training.

\vspace{0.5em}
\noindent
\textbf{Event-based Video Frame Interpolation.}
Due to the high temporal resolution and low latency of event data, it has been utilized to interpolate intermediate frames to achieve accurate and temporally consistent video frame reconstruction.
Time Lens \cite{tulyakov2021time} introduces a unified CNN framework that combines event-based motion estimation with both warping- and synthesis-based interpolation to robustly generate high-quality frames under non-linear motion, motion blur, and illumination changes.
Then, in Time Lens++~\cite{tulyakov21cvpr}, the motion spline estimator and multi-scale feature fusion module were proposed to achieve temporally consistent interpolation and reduce ghosting artifacts.
Liu et al.~\cite{liu2025timetracker} utilized the continuous trajectory guided motion estimation module to track the continuous motion trajectory of each divided patch.
TimeLens-XL \cite{ma2024timelens} decomposes large inter-frame displacements into multiple small-step motions and recursively estimates the optical flow at each step using events, enabling tracking of nonlinear motion.
Additionally, CBMNet~\cite{kim2023event} was proposed to leverage cross-modal asymmetric bidirectional motion fields from events and images to interpolate video frames without motion approximations.
Based on the pretrained diffusion model, Chen et al.~\cite{chen2025repurposing} trained a trainable copy on a real‑world and event‑based interpolation dataset in order to control the diffusion network to synthesize intermediate frames.
\revision{More recently, Liu et al.~\cite{liu2026epa} introduced EPA, a perceptually aligned event-based video frame interpolation framework that leverages degradation-insensitive semantic features and bidirectional event-guided alignment to improve perceptual quality and generalization, particularly when the input key frames are degraded. Moreover, Fu et al.~\cite{fu2026one} proposed a bidirectional warping framework that estimates a holistic motion representation in a single forward pass and queries bidirectional optical flows at arbitrary timestamps, enabling efficient and high-quality event-based video frame interpolation. Liu et al.~\cite{liu2026video} proposed DSER++, which reconstructs an event-based reference for feature-level alignment with key frames and introduces asymmetric selective refinement to better handle occlusions and non-rigid motion.}

Distinct from previous work, we propose a new paradigm that simply modulates the reverse diffusion sampling process to transfer reconstruction priors to interpolation and prediction in a zero-shot manner.
By fully leveraging event streams, this unified paradigm completes event-driven reconstruction, interpolation, and prediction without task-specific datasets or complex network designs.

\vspace{0.5em}
\noindent
\textbf{Diffusion Model.}
Denoising Diffusion Probabilistic Model (DDPM) ~\cite{ho2020denoising} has demonstrated remarkable success in image synthesis, with its capability evidenced by generating high-quality images from random noise.
The Latent Diffusion Model (LDM) \cite{rombach2022high} performs forward and reverse diffusion processes on the latent space, significantly enhancing efficiency in high-resolution image synthesis.
Following LDM, Rombach et al.\cite{rombach2022high} introduced DiffIR~\cite{xia2023diffir}, applying diffusion on a compact prior representation to achieve more efficient and stable image restoration.
To enable high-resolution video generation, \cite{blattmann2023align} developed video LDM by introducing a temporal dimension to the pre-trained image LDM~\cite{rombach2022high} and fine-tuning it on video data.
Subsequently, the SVD model is accomplished by pretraining video LDM~\cite{blattmann2023align} on well-curated datasets and pipelines, resulting in significant performance improvements for high-quality video generation.

\vspace{0.5em}
\noindent
\textbf{Event Stacking Approaches.}
To convert the asynchronous event stream into a tensor suitable for convolutional networks, a common practice is to partition it into temporal bins, accumulate events in each bin to form “event frames,” and stack these along the channel dimension into an event volume. Wang et al.~\cite{wang2019event} first systematically introduced time-based and event-count-based stacking strategies (SBT/SBE) for event-based HDR and high frame-rate video reconstruction, establishing the basic form of event stacking; Nam et al.~\cite{nam2022stereo} proposed mixed-density stacks in stereo event-based depth estimation to balance short-term details and long-term context; Teng et al.~\cite{teng2022nest} further modeled bidirectional event summations as a learnable Neural Event Stack for image enhancement, alleviating the sensitivity of hand-crafted stacks to noise.

\section{Preliminary}
\label{sec:preliminary}

% Specifically, consider a frame sequence \(\{I_{k}\}_{k=0}^{N_{I}-1}\) where each frame \(I_{k} \in \mathbb{R}^{3 \times H \times W}\) is associated with a timestamp \(s_{k} \in [0,T]\), and these timestamps are evenly distributed over the same duration of $T$ seconds.

\noindent
\textbf{Event Representation.}
Given the event stream $\{e_{i}\}_{i=0}^{N_{E}-1}$ containing $N_{E}$ events over a duration of $T$ seconds, each event data $\{e_{i}\}_{i=0}^{N_{E}-1}$ is encoded in the format of $e_{i}= (x_{i}, y_{i}, t_{i}, p_{i})$, where $x_{i}$, $y_{i}$, $t_{i}$ and $p_{i}$ denote the pixel positions, timestamp, and the polarity of intensity change.
Notably, $x_{i} \in \{0, ..., W-1\}$, $y_{i}  \in \{0, ...,H-1\}$, $t_{i} \in [0, T]$, $p_{i} \in \{+1, -1\}$ for all $i \in \{0, ..., N_{E}-1\}$, where $H$ and $W$ denote the height and width of the sensor array of an event camera. Due to the sparse and asynchronous characteristics of the event stream, a typical preprocessing approach in previous work~\cite{rebecq2019high, stoffregen2020reducing, weng2021event} is to accumulate the sequence of event data into a 2D image-like tensor representation, allowing its compatibility with frame-based reconstruction algorithms.
Specifically, let us $\mathbf{V}\in\mathbb{R}^{F\times3\times H\times W}$ denote the sequence of $F$ video frames to be reconstructed, and let $\{s_{f}\}_{f=0}^{F-1}$ be the corresponding timestamps, uniformly distributed over the same duration of $T$ seconds.
By setting $s_{-1}=0$, we distribute the continuous events between two adjacent frames into the group $\mathcal{G}_{f} = \{e_{i} \mid \mathbf{s}_{f-1} \leq t_{i} < \mathbf{s}_{f}\}$, where $\mathcal{G}_{f}$ denotes the $f$-th group of events that spans a duration of $\Delta T$ seconds.
Then, to leverage the rich generative prior of the pre-trained diffusion model developed on 3-channel RGB frames, we transform $F$ event groups into the sequence of 3-channel event representations also with the number of $F$, which is denoted as $\mathbf{E} \in \mathbb{R}^{F \times 3 \times H \times W}$. 
For each event representation $\mathbf{E}_{f}$ ($f \in \{0, ..., F-1\}$), we encode the event data $\mathcal{G}_{f}$ into three channels corresponding to (i) the sum of all events, (ii) the sum of positive events only, and (iii) the sum of negative events only.

% Formally, the above encoding procedure can be formulated as
% \begin{equation}
% \begin{aligned}
% \mathbf{E}_{f, c, y, x}=& \sum_{i} p_i \cdot \mathbf{1}_{\{y=y_{i}, x=x_{i}\}} \cdot (\mathbf{1}_{\{c=0\}}  + \mathbf{1}_{\{c=1, p_{i}=1\}} + \mathbf{1}_{\{c=2, p_{i}=-1\}}),
% \end{aligned}
% \end{equation}
% where $x \in \{0, ..., W-1\}$, $y \in \{0, ...,.H-1\}$ and $c \in\{0,1,2\}$ denote the pixel location and channel index, respectively.
% % 
% Here, $\mathbf{1}_{\{\cdot\}}$ is the indicator function that returns 1 if the specified condition is met and 0 otherwise.

\vspace{0.5em}
\noindent
\textbf{Diffusion Model.}
As a type of generative model, the diffusion model~\cite{ho2020denoising} first utilizes a forward diffusion process to transform data from complex high-dimensional distributions to simpler ones (typically Gaussian) and then applies the reverse diffusion process to reconstruct the original data distribution from the simplified one.
According to~\cite{song2021scorebased}, the forward stochastic differential equation (SDE) process of the latent diffusion model~\cite{rombach2022high} can be formulated as
\begin{equation}
d\mathbf{x} = f(t)\mathbf{x}dt + g(t)d\mathbf{w},
\label{forward}
\end{equation}
where $\mathbf{x} \in \mathbb{R}^{n}$ is the noised latent state, $\mathbf{w} \in \mathbb{R}^{n}$ represents a standard Wiener process,  $t$ indicates the diffusion timestamp, $f(t)$ and $g(t)$ yield the drift and diffusion coefficients, which indicate the variation of data and noise components during the diffusion process.
Therefore, the reverse process of the latent diffusion model~\cite{rombach2022high} can be formulated via the ordinary differential equation (ODE) solution~\cite{wu2024motion}:

\begin{equation}
d\mathbf{x} = \left[f(t)\mathbf{x} - \frac{1}{2}\revision{g^2(t)}\nabla_{\mathbf{x}} \log q_t(\mathbf{x})\right] dt,
\end{equation}

where $\nabla_{\mathbf{x}} \log q_t(\mathbf{x})$ is usually approximated through training a score model $\mathcal{S}_\theta(\mathbf{x}, t)$ with parameter $\theta$.
Taking into account the special case of the variance exploding (VE) diffusion process~\cite{song2021scorebased} of the SVD model, the standard reverse sampling process can be simplified as
\begin{equation}
\mathbf{x}^{t-1}  = \revision{\mathcal{R}(\mathbf{x}^t,\mathbf{u}^t)} = \mathbf{x}^t - \frac{\mathbf{x}^t - \mathbf{u}^t}{\sigma_{t}} (\sigma_{t} - \sigma_{t-1}),
\label{x_t-1}
\end{equation}
\revision{where $\mathbf{u}^t = \mathcal{X}_{\theta}(\mathbf{x}^t,t)$ represents the estimate of the clean latent which is obtained from the noised latent $\mathbf{x}^t \sim \mathcal{N}(\mathbf{x}^0, \sigma_{t}^2\mathbf{I})$ through the U-Net and Elucidated Diffusion Model (EDM)-style preconditioning at step $t$, and $\sigma_{t}^2$ is the variance of the Gaussian noise in the diffusion process.}

\begin{figure*}[t]
\centering
\includegraphics[width=\linewidth]{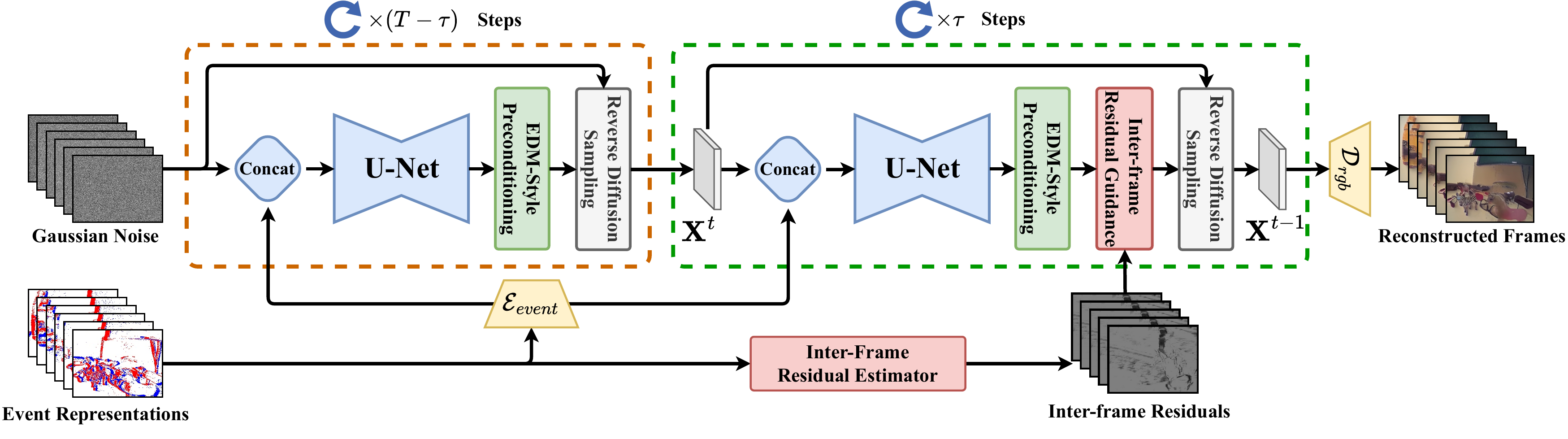}
\vspace{-20pt}
\caption{
\revision{Overview of the proposed \methodname~framework during inference.
The event representations are encoded by an event encoder and concatenated with the noisy latent $\mathbf{X}^{t}$ along the channel dimension to condition the U-Net at each step $t$.
During the first $T-\tau$ reverse diffusion steps, the model follows the standard reverse diffusion process.
During the last $\tau$ steps, the proposed event-based inter-frame residual guidance is introduced to refine the estimated clean latent before reverse sampling.
Meanwhile, the event representations are fed into the inter-frame residual estimator to predict the residuals used for guidance.}
}
\label{fig:method}
\vspace{-15pt}
\end{figure*}

\begin{figure}[t]
\centering
\includegraphics[width=0.8\linewidth]{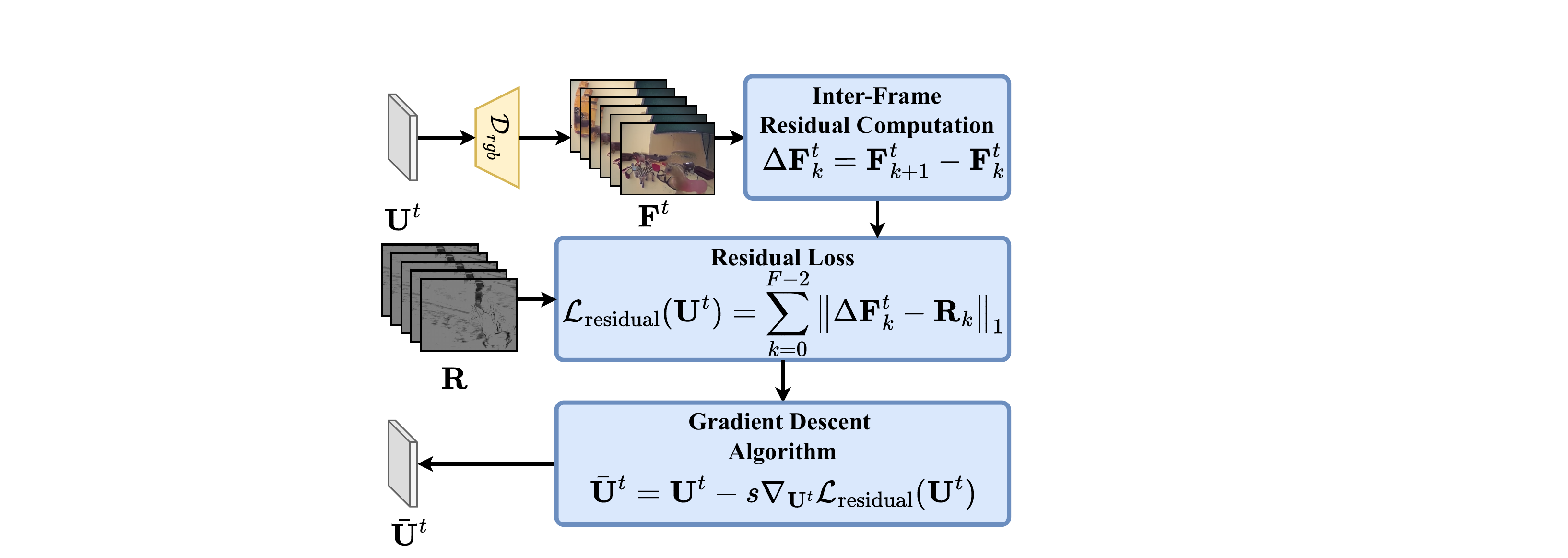}
\vspace{-5pt}
\caption{
\revision{Detailed illustration of the proposed event-based inter-frame residual guidance.
At reverse diffusion step $t$, the estimated clean latent $\mathbf{U}^{t}$ is decoded into the estimated clean frames $\mathbf{F}^{t}$, from which the inter-frame residuals $\Delta\mathbf{F}^{t}$ are computed.
The residual loss $\mathcal{L}_{\mathrm{residual}}$ is then formulated by measuring the $\ell_1$ discrepancy between $\Delta\mathbf{F}^{t}$ and the event-predicted inter-frame residuals $\mathbf{R}$.
Finally, the gradient of $\mathcal{L}_{\mathrm{residual}}$ with respect to $\mathbf{U}^{t}$ is used to obtain the refined clean latent $\bar{\mathbf{U}}^{t}$ for subsequent reverse sampling.}
}
\label{fig:ifrg}
\vspace{-15pt}
\end{figure}

\section{Proposed Method}
\label{sec:proposed method}

Leveraging the strong generative prior of the pre-trained SVD model, we first reconstruct video frames solely from the event representation $\mathbf{E}$. When the first and/or last frames (optionally)  of the ground-truth sequence $\mathbf{V}$ become available, we seamlessly adapt this event‐driven reconstruction to frame interpolation and prediction in a zero‐shot fashion, \revision{yielding a unified framework for event-to-frame reconstruction}.
Technically, we first introduce a baseline event-based video frame reconstruction model by fine-tuning the SVD model using event representations as conditional inputs, providing a foundation for the subsequent design (Sec. \ref{sec:proposed method:fine-tuning with event representation}). Then, we present the inter-frame residual guidance (Sec. \ref{sec:proposed method:inter-frame residual guidance}), \revision{as shown in~\figref{fig:method}}. Finally, we modulate the score function to enable the adaptation to video frame interpolation and prediction in a \textit{zero-shot} manner (Sec. \ref{sec:proposed method:adaptation to video frame interpolation and prediction}).

\subsection{Fine-tuning with Event Representation}
\label{sec:proposed method:fine-tuning with event representation}

To leverage the high temporal resolution property of event data to guide the pre-trained diffusion model with large generative priors for video frame reconstruction, we propose an event-conditioned fine-tuning strategy. Specifically, we first encode the event representation $\mathbf{E}$ with a dedicated encoder \revision{$\mathcal{E}_{event}$ to obtain the event feature $\mathcal{E}_{event}(\mathbf{E})$} that serves as the conditioning input.
\revision{
Subsequently, following the training scheme in previous works~\cite{song2021scorebased, karras2022elucidating, wu2024motion}, at each diffusion step $t$, we sample a noisy latent
$\mathbf{X}^t \in \mathbb{R}^{F \times 4 \times \frac{H}{8} \times \frac{W}{8}}$,
whose channel number and spatial resolution match those of the encoded clean latent and event feature, with the number of channels increased from 3 to 4 and the spatial resolution reduced from $H\times W$ by a factor of 8.
}
\revision{Conditioned on the event feature via concatenation along the channel dimension, the estimated clean latent $\mathbf{U}^t \in \mathbb{R}^{F\times 4\times \frac{H}{8}\times\frac{W}{8}}$ is obtained as $\mathbf{U}^t = \mathcal{X}_\theta\left(\mathbf{X}^t, \mathcal{E}_{event}(\mathbf{E}), t\right)$, where the operator $\mathcal{X}_\theta$ represents the composition of the raw U-Net output $\mathbf{F}^t = \mathcal{U}_\theta(\mathbf{X}^t, \mathcal{E}_{event}(\mathbf{E}), t)$ and the subsequent EDM-style preconditioning that converts U-Net's output into the clean-latent estimate via $\mathbf{U}^t = c_{\mathrm{skip}}(\sigma_t)\mathbf{X}^t + c_{\mathrm{out}}(\sigma_t)\mathbf{F}^t$, with $c_{\mathrm{skip}}(\sigma_t) = 1/(\sigma_t^2+1)$ and $c_{\mathrm{out}}(\sigma_t) = -\sigma_t/\sqrt{\sigma_t^2+1}$.}
Then the diffusion model is fine-tuned by minimizing the discrepancy between the estimated clean latent $\mathbf{U}^t$ and the ground-truth clean latent $\mathbf{X}^0 =\revision{\mathcal{E}_{rgb}(\mathbf{V})} \in \mathbb{R}^{F\times4\times\frac{H}{8}\times\frac{W}{8}}$, with the loss function $\mathcal{L}_\text{fine-tuning}$ formulated as
\begin{equation}
\begin{aligned}
\mathcal{L}_{\text{fine-tuning}} = \mathbb{E}_t \biggl\{\lambda(t) & \mathbb{E}_{\mathbf{X}^0} \mathbb{E}_{\mathbf{X}^t \mid \mathbf{X}^0} 
  \left[\left\|\mathbf{X}^0 - \mathbf{U}^t \right\|_2^2 \right] \biggr\}, \\
\lambda(t) =& \frac{\sigma_{t}^2 + \sigma_{\text{data}}^2}{\revision{(\sigma_{t} \cdot \sigma_{\text{data}})^2}},
\end{aligned}
\end{equation}
where $\lambda(t)$ serves as a weighting function.
Benefiting from this fine-tuning process, the pre-trained video diffusion model can effectively integrate event data, thereby enabling a high-fidelity synthesis of dynamic visual content.

\subsection{Inter-Frame Residual Guidance}
\label{sec:proposed method:inter-frame residual guidance}

\revision{Since each event is} triggered when pixel intensity changes reach a specific threshold, a notable correlation exists between the accumulated events and the corresponding inter-frame residual at the same pixel.
However, due to the differences in sensor sensitivities, gamma correction, or other ISP algorithms, it is intractable to manually solve the inverse process that directly calculates the frame residual.
\revision{Moreover, although event-conditioned temporal modeling enables the diffusion network to implicitly learn event-to-video dynamics, it cannot explicitly guarantee that the inter-frame variations generated during stochastic reverse diffusion remain consistent with the observed events.
Therefore, as illustrated in \figref{fig:method}, we propose to leverage event representations to predict inter-frame residuals and incorporate them into the reverse diffusion process as denoising guidance during the last $\tau$ sampling steps.}

Based on the reverse diffusion sampling process described in Eq. (\ref{x_t-1}), \revision{we refine the estimated clean latent $\mathbf{U}^{t}$ using event-derived inter-frame residuals during the last $\tau$ sampling steps}.
\revision{As illustrated in \figref{fig:method}, we initially utilize an off-the-shelf ResNet~\cite{he2016deep} model as the inter-frame residual estimator to map the event representation $\mathbf{E}$ to the inter-frame residuals $\mathbf{R}=\{\mathbf{R}_k\}_{k=0}^{F-2}$.}
\revision{The detailed procedure of the proposed inter-frame residual guidance is illustrated in \figref{fig:ifrg}. Specifically, at step $t$ ($t \leq \tau$) of the reverse diffusion process, the estimated clean latent $\mathbf{U}^{t}$---derived from $\mathbf{X}^{t}$---is fed into the autoencoder's decoder $\mathcal{D}_{rgb}$ to produce the estimated clean frame sequence $\mathbf{F}^{t}=\mathcal{D}_{rgb}(\mathbf{U}^{t})=\{\mathbf{F}^{t}_0,\ldots,\mathbf{F}^{t}_{F-1}\}$.}
\revision{The residual between each pair of consecutive frames is then calculated as $\Delta\mathbf{F}^{t}_k=\mathbf{F}^{t}_{k+1}-\mathbf{F}^{t}_k$, where $k\in\{0,\ldots,F-2\}$.} 
\revision{We then compute the inter-frame residual loss function, which is formulated as the sum of the L1 distances between the event-predicted inter-frame residuals $\mathbf{R}_k$ and the corresponding residuals $\Delta\mathbf{F}^{t}_k$ derived from the estimated clean frames in pixel space, i.e.,
\begin{equation}
\mathcal{L}_\text{residual}(\mathbf{U}^{t})
=
\sum_{k=0}^{F-2}
\left\|
\Delta\mathbf{F}^{t}_k-\mathbf{R}_k
\right\|_1.
\label{eq:residual_loss}
\end{equation}
}
This loss function is optimized via the gradient descent algorithm to update the estimated clean latent $\mathbf{U}^t$ at each sampling step $t$:
\begin{equation}
\label{Eq:reg}
\bar{\mathbf{U}}^{t}=\mathbf{U}^{t}-s \nabla_{\mathbf{U}^{t}} \mathcal{L}_\text{residual}(\mathbf{U}^{t}),
\end{equation}
\revision{where $s$ is the coefficient that controls the strength of the guidance. Moreover, the following Proposition establishes a residual-error propagation bound that explicitly relates the event-guided residual inconsistency to an upper bound on the frame reconstruction error, while accounting for both the initial-frame reconstruction error and the residual-estimation error.
\begin{prop}
Let $\mathbf{V}=\{\mathbf{V}_0,\ldots,\mathbf{V}_{F-1}\}$ and
$\mathbf{F}=\{\mathbf{F}_0,\ldots,\mathbf{F}_{F-1}\}$ denote the ground-truth and reconstructed frame sequences, respectively, with
$\Delta\mathbf{V}_k=\mathbf{V}_{k+1}-\mathbf{V}_k$ and
$\Delta\mathbf{F}_k=\mathbf{F}_{k+1}-\mathbf{F}_k$.
If the event-derived residual $\mathbf{R}_k$ satisfies
$\|\mathbf{R}_k-\Delta\mathbf{V}_k\|_1\leq\varepsilon_k$, then, for any $k\in\{0,\ldots,F-1\}$,
\begin{equation}
\left\|
\mathbf{F}_k-\mathbf{V}_k
\right\|_1
\leq
\left\|
\mathbf{F}_0-\mathbf{V}_0
\right\|_1
+
\mathcal{L}_{\mathrm{residual}}
+
\sum_{i=0}^{k-1}\varepsilon_i.
\label{eq:prop_error_bound}
\end{equation}
\end{prop}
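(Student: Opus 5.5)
The plan is a telescoping argument followed by the triangle inequality. Here $\mathcal{L}_{\mathrm{residual}}$ is read as the loss of Eq.~(\ref{eq:residual_loss}) evaluated at the reconstructed sequence $\mathbf{F}$, that is, $\sum_{i=0}^{F-2}\|\Delta\mathbf{F}_i-\mathbf{R}_i\|_1$.

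\textbf{Step 1: telescoping.} For any $k\in\{0,\ldots,F-1\}$, both sequences telescope:
\begin{equation*}
\mathbf{F}_k=\mathbf{F}_0+\sum_{i=0}^{k-1}\Delta\mathbf{F}_i,
\qquad
\mathbf{V}_k=\mathbf{V}_0+\sum_{i=0}^{k-1}\Delta\mathbf{V}_i.
\end{equation*}
For $k=0$ the sums are empty. Subtracting the two identities gives
\begin{equation*}
\mathbf{F}_k-\mathbf{V}_k=(\mathbf{F}_0-\mathbf{V}_0)+\sum_{i=0}^{k-1}(\Delta\mathbf{F}_i-\Delta\mathbf{V}_i).
\end{equation*}

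\textbf{Step 2: insert the event residual.} In each summand, add and subtract $\mathbf{R}_i$:
\begin{equation*}
\Delta\mathbf{F}_i-\Delta\mathbf{V}_i=(\Delta\mathbf{F}_i-\mathbf{R}_i)+(\mathbf{R}_i-\Delta\mathbf{V}_i).
\end{equation*}
Then apply the triangle inequality for $\|\cdot\|_1$ to the whole sum. This yields
\begin{equation*}
\|\mathbf{F}_k-\mathbf{V}_k\|_1\le\|\mathbf{F}_0-\mathbf{V}_0\|_1+\sum_{i=0}^{k-1}\|\Delta\mathbf{F}_i-\mathbf{R}_i\|_1+\sum_{i=0}^{k-1}\|\mathbf{R}_i-\Delta\mathbf{V}_i\|_1 .
\end{equation*}

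\textbf{Step 3: bound each sum.} Each $\varepsilon$-term is at most $\varepsilon_i$ by hypothesis, so the last sum is at most $\sum_{i=0}^{k-1}\varepsilon_i$. The middle sum runs over indices $0,\ldots,k-1\subseteq\{0,\ldots,F-2\}$, because $k\le F-1$. All its terms are nonnegative, so it is dominated by the full sum defining $\mathcal{L}_{\mathrm{residual}}$. Combining the three bounds gives (\ref{eq:prop_error_bound}).

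\textbf{Main obstacle.} There is no real analytic difficulty; the point needing care is interpretation. Since $\mathcal{L}_{\mathrm{residual}}$ is defined in terms of the estimates $\mathbf{F}^t$ at step $t$, the bound has to be stated for the sequence $\mathbf{F}$ at which the loss is evaluated. Any fixed $t$, or the final output, works the same way. I will state this convention explicitly. I will also note that $\|\cdot\|_1$ is the entrywise norm on frames, for which the triangle inequality holds.
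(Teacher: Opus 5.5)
Your proposal is correct and follows the paper's proof in Appendix A essentially step for step: telescope the frame error, insert $\mathbf{R}_i$ and apply the triangle inequality, then dominate the partial residual sum by the full nonnegative sum $\mathcal{L}_{\mathrm{residual}}$, evaluating the loss at the same decoded sequence being bounded. The only cosmetic difference is that you apply the triangle inequality once to the combined sum, whereas the paper applies it in two stages.
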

}

\revision{The derived bound explicitly separates the frame reconstruction error into the initial-frame reconstruction error, the event-guided residual inconsistency, and the residual-estimation error.}
\revision{Therefore, for fixed initial-frame and residual-estimation errors, reducing $\mathcal{L}_{\mathrm{residual}}$ tightens the derived upper bound on the frame reconstruction error, although this does not imply that each gradient update necessarily decreases the actual reconstruction error.}

\begin{proof}
See Appendix \textcolor{blue}{A}.
\end{proof}
\revision{By substituting the original $\mathbf{U}^t$ in the reverse diffusion} with the refined clean latent expectation term $\bar{\mathbf{U}}^{t}$, the sampling process is guided with: 
\begin{equation}
\mathbf{X}^{t-1} =\revision{\mathcal{R}(\mathbf{X}^t,\bar{\mathbf{U}}^t)} =\mathbf{X}^{t}-\frac{\mathbf{X}^{t}-\bar{\mathbf{U}}^{t}}{\sigma_{t}} \left (\sigma_{t} - \sigma_{t-1} \right ).
\end{equation}

\subsection{Adaptation to Video Frame Interpolation and Prediction}
\label{sec:proposed method:adaptation to video frame interpolation and prediction}

Through the training on the event-based video frame reconstruction task, our approach is expected to acquire a powerful generative capability to map sparse, asynchronous event data to realistic and continuous video frames. Exploiting this ability, we extend our method to video frame interpolation and prediction without additional fine-tuning. Specifically, for video frame interpolation (or prediction), by leveraging the prior information from the first and last reference frames (or solely the first frame), we modulate the score function to theoretically reformulate the reverse diffusion sampling during the inference phase. This revised formulation then guides the video diffusion model to reconstruct the intermediate (or subsequent) frames with enhanced temporal consistency and visual fidelity.

% \begin{center}
\begin{algorithm}[t]
\caption{Reverse Diffusion Sampling for Video Frame Interpolation and Prediction}
\begin{algorithmic}[1]
\State \textbf{input:} clean latents $\{\revision{\mathcal{E}_{rgb}(\mathbf{V}_{0})}, \revision{\mathcal{E}_{rgb}(\mathbf{V}_{F-1})}\}$ (resp. $\{\mathcal{E}(\mathbf{V}_{0})\}$) for the task of video frame interpolation (resp. prediction), 
\revision{clean-latent estimation operator} $\mathcal{X}_{\theta}(\cdot)$, encoded event representation $\mathcal{E}(\mathbf{E})$.
\State initialize $\mathbf{X}^{T} \sim \mathcal{N}(\mathbf{0},\revision{\sigma_{T}^{2}}\mathbf{I})$.
\For{$t = T,\ldots,1$}
\State \revision{compute} $\mathbf{U}^{t}=\mathcal{X}_{\theta}(\mathbf{X}^{t},\revision{\mathcal{E}_{event}(\mathbf{E})},t)$.
\For{$i = 0,\ldots,F-1$}
\If{\underline{\textit{video frame interpolation}}}
\State calculate $\revision{\tilde{\mathbf{U}}^{t}_{i}}$ via Eq. (\ref{mu_t_alpha}),
\ElsIf{\underline{\textit{video frame prediction}}}
\State calculate $\revision{\tilde{\mathbf{U}}^{t}_{i}}$ via Eq. (\ref{mu_t_beta}),
\EndIf
\EndFor
\State \revision{form $\tilde{\mathbf{U}}^{t}=\{\tilde{\mathbf{U}}^{t}_{i}\}_{i=0}^{F-1}$ and reverse $\mathbf{X}^{t}$ to $\mathbf{X}^{t-1}$ via $\mathbf{X}^{t-1}=\mathcal{R}(\mathbf{X}^t,\tilde{\mathbf{U}}^t)$.}
\EndFor
\State \Return reconstructed latent $\mathbf{X}^0$.
\end{algorithmic}
\label{alg:reverse diffusion sampling}
\end{algorithm}
% \vspace{-10pt}
% \end{center}

Here, we take video frame interpolation as an example to illustrate our approach.
Given the clean latents \revision{$\mathcal{E}_{rgb}(\mathbf{V}_{0})$} and \revision{$\mathcal{E}_{rgb}(\mathbf{V}_{F-1})$}, obtained by feeding the first and last video frames into the autoencoder’s encoder, we first compute the deviations between them and the corresponding intermediate estimations $\mathbf{U}^{t}_{0}$ and $\mathbf{U}^{t}_{F-1}$:
% \vspace{-0.3cm}
\begin{equation}
\begin{aligned}
\mathbf{D}^{t}_{0} =& \revision{\mathcal{E}_{rgb}(\mathbf{V}_{0})} - \mathbf{U}^{t}_{0}, \\
\mathbf{D}^{t}_{F-1} =& \revision{\mathcal{E}_{rgb}(\mathbf{V}_{F-1})} - \mathbf{U}^{t}_{F-1}.
\end{aligned}
\end{equation}
Since these deviations are highly correlated with the discrepancy between the estimated clean latents and the provided prior information, an effective score function can be designed by utilizing $\mathbf{D}^{t}_{0}$ and $\mathbf{D}^{t}_{F-1}$ to modulate the estimated latent representation $\mathbf{U}^{t}_{i}$ for $0 \leq i \leq F-1$.
Thus, the score function of the reverse diffusion sampling is formulated:
\begin{equation}
\begin{aligned}
\tilde{\mathbf{U}}^{t}_{i} = \,\, \alpha(t) &\left[ \frac{(\mathbf{D}^{t}_{0} + \mathbf{U}^{t}_{i}) + (\mathbf{D}^{t}_{F-1} + \mathbf{U}^{t}_{i})}{2} \right] \\
&+ [1-\alpha(t)]\mathbf{U}^{t}_{i}
\end{aligned}
\label{mu_t_alpha}
\end{equation}
where the weighting coefficient $\alpha(t)=1-e^{-\sigma_{t}}$ balances the influence of the deviation correction.
To guide the reverse sampling of the video diffusion model, \revision{we form the modulated clean latent sequence $\tilde{\mathbf{U}}^{t}=\{\tilde{\mathbf{U}}^{t}_{i}\}_{i=0}^{F-1}$ and use it in place of the original estimated clean latent $\mathbf{U}^{t}$ in the reverse diffusion update, yielding}
$\mathbf{X}^{t-1}=\mathcal{R}(\mathbf{X}^t,\tilde{\mathbf{U}}^t)$.
It is important to note that for video frame prediction, only \revision{$\mathcal{E}_{rgb}(\mathbf{V}_{0})$} is available. 
In this case, the score function is modified into
\begin{equation}
\tilde{\mathbf{U}}^{t}_{i} = \alpha(t)(\mathbf{D}^{t}_{0} + \mathbf{U}^{t}_{i}) + [1-\alpha(t)]\mathbf{U}^{t}_{i}.
\label{mu_t_beta}
\end{equation}
Alg. \ref{alg:reverse diffusion sampling} summarizes the reverse diffusion sampling process.

\section{Experiment}
\label{sec:experiment}

\subsection{Experiment Settings}
\label{sec:experiment:implementation details}

\noindent
\textbf{Dataset.} 
The training set was generated by synthesizing event–frame pairs from real-world videos. Specifically, 1,800 sequences (400–500 frames each) were drawn from TrackingNet~\cite{muller2018trackingnet}, and an event stream between consecutive frames was simulated with DVS-Voltmeter~\cite{lin2022dvs}. For evaluation, we constructed a synthetic test set of 212 sequences from TrackingNet~\cite{muller2018trackingnet} and a real-world test set of 107 sequences from HS-ERGB~\cite{tulyakov2021time}, each containing 12 frames. More details of the dataset are provided in Appendix \textcolor{blue}{B}.

\begin{table*}[t]
\small
\centering
\caption{Quantitative comparison with state-of-the-art event-based video frame reconstruction methods on real-world and synthetic datasets.
MSE is reported in units of $10^{-2}$.
The best result in each column is highlighted in \textbf{bold}.
}
\vspace{-5pt}
\setlength{\tabcolsep}{19pt}
\begin{tabular}{l|c|c|c|c|c|c}
\toprule
\multirow{2}{*}{Method} & \multicolumn{3}{c|}{Real-World} & \multicolumn{3}{c}{Synthetic} \\
\cline{2-7}
& MSE~$\downarrow$ & SSIM~$\uparrow$ & LPIPS~$\downarrow$
& MSE~$\downarrow$ & SSIM~$\uparrow$ & LPIPS~$\downarrow$ \\
\hline
\rowcolor[rgb]{ .95,  .95,  .95}
E2VID~\cite{rebecq2019high} & 12.75 & 0.4200 & 0.6210 & 6.78 & 0.5040 & 0.5420 \\
FireNet~\cite{scheerlinck2020fast} & 12.10 & 0.4110 & 0.6300 & 6.20 & 0.5560 & 0.5220 \\
\rowcolor[rgb]{ .95,  .95,  .95}
E2VID+~\cite{stoffregen2020reducing} & 6.50 & 0.4390 & 0.6180 & 5.50 & 0.5130 & 0.5430 \\
FireNet+~\cite{stoffregen2020reducing} & 7.37 & 0.3870 & 0.6620 & 5.81 & 0.4540 & 0.5790 \\
\rowcolor[rgb]{ .95,  .95,  .95}
ETNet~\cite{weng2021event} & 8.49 & 0.4230 & 0.6440 & 5.22 & 0.5110 & 0.5480 \\
SSL-E2VID~\cite{paredes2021back} & 10.08 & 0.3660 & 0.6260 & 6.94 & 0.4980 & 0.6510 \\
\rowcolor[rgb]{ .95,  .95,  .95}
SPADE-E2VID~\cite{cadena2021spade} & 7.27 & 0.4330 & 0.5990 & 9.92 & 0.4500 & 0.6340 \\
CUBE~\cite{zhao2024controllable} & 8.51 & 0.3640 & 0.6900 & 14.37 & 0.1920 & 0.7970 \\
\rowcolor[rgb]{ .95,  .95,  .95}
HyperE2VID~\cite{ercan2024hypere2vid} & 6.32 & 0.4770 & \textbf{0.5620} & 7.27 & 0.3860 & 0.6320 \\
\textbf{\methodname~(Ours)} & \textbf{6.12} & \textbf{0.4990} & 0.6740 & \textbf{1.67} & \textbf{0.7100} & \textbf{0.3940} \\
\bottomrule
\end{tabular}
\label{table:vfr:sota:hs-ergb and synthetic}
\vspace{-10pt}
\end{table*}

\begin{figure*}[t]
\centering
\includegraphics[width=\linewidth]{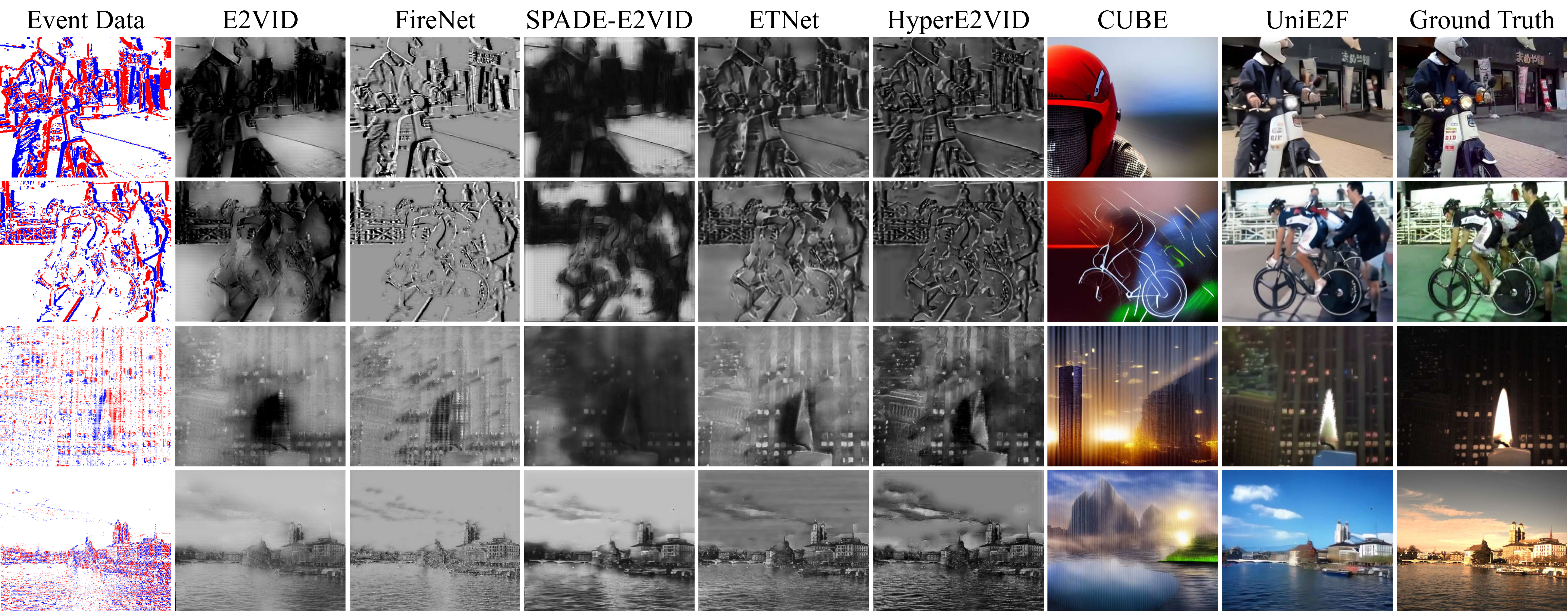}
\vspace{-15pt}
\caption{Visual comparison of event-based video frame reconstruction results on synthetic (1st and 2nd rows) and real-world (3rd and 4th rows) datasets. The event data is visualized as a polarity map.}
\label{fig:comparison with the state-of-the-art methods}
\vspace{-15pt}
\end{figure*}

\vspace{0.5em}
\noindent
\textbf{Implementation Details.} 
All the experiments were conducted on an NVIDIA RTX A6000 GPU.
In our experiments, the denoising U-Net architecture was initialized using the pre-trained weights of the SVD model. During training, we selectively fine-tuned the temporal transformer blocks while keeping all other parameters frozen, ensuring effective adaptation to the event-based domain. The network was optimized for 450,000 iterations % 
using the AdamW optimizer~\cite{loshchilov2018decoupled} with a learning rate of $1\times10^{-5}$. In both the training and inference stages, we set $F=12$ and employed 30 diffusion steps for the inference phase, with our event-based inter-frame residual guidance applied during the last 10 steps. Besides, we illustrate the training strategy for the inter-frame residual estimator in Appendix \textcolor{blue}{C}.
In terms of inference latency, reconstructing a sequence of 12 RGB frames with the resolution of 448$\times$320 takes about 48 seconds.
To quantitatively evaluate the model's performance, we employ three widely used metrics: MSE, SSIM~\cite{wang2004image}, and LPIPS~\cite{zhang2018unreasonable}.
A more detailed description of the implementation is provided in Appendix \textcolor{blue}{F}.

\begin{table*}[t]
\small
\centering
\caption{Quantitative comparison for video frame interpolation (VFI) and video frame prediction (VFP). MSE is reported in units of $10^{-2}$. The best result in each column is highlighted in \textbf{bold}. Here, ``$\dagger$'' denotes the model with pretrained weight and ``$^{*}$'' denotes the model retrained on our synthetic dataset.}
\setlength{\tabcolsep}{14.5pt}
\begin{tabular}{l|c|c|c|c|c|c|c}
\toprule
\multirow{2}{*}{Method} & \multirow{2}{*}{Mode}  & \multicolumn{3}{c|}{Synthetic} & \multicolumn{3}{c}{Real-World} \\
\cline{3-5} \cline{6-8}
 & & MSE $\downarrow$ & SSIM $\uparrow$ & LPIPS $\downarrow$ & MSE $\downarrow$ & SSIM $\uparrow$ & LPIPS $\downarrow$ \\
\hline
\rowcolor[rgb]{ .95,  .95,  .95}
CBMNet$^{\dagger}$~\cite{kim2023event} & VFI-4$\times$ & 2.50 & 0.5860 & \textbf{0.3030} & 0.32 & 0.8230 & \textbf{0.2540} \\
CBMNet$^{*}$~\cite{kim2023event} & VFI-4$\times$ & 1.74 & 0.6320 & 0.3550 & \textbf{0.23} & \textbf{0.8360} & 0.2810 \\
\rowcolor[rgb]{ .95,  .95,  .95}
TimeLens-XL$^{\dagger}$~\cite{ma2024timelens} & VFI-4$\times$ & 3.21 & 0.5350 & 0.3270 & 0.80 & 0.7270 & 0.2920 \\
TimeLens-XL$^{*}$~\cite{ma2024timelens} & VFI-4$\times$ & 2.91 & 0.5480 & 0.3160 & 0.78 & 0.7270 & 0.2900 \\
\rowcolor[rgb]{ .95,  .95,  .95}
\textbf{\methodname~(Ours)} & VFI-4$\times$ & \textbf{0.63} & \textbf{0.7340} & 0.3210 & 0.41 & 0.6770 & 0.4310 \\
\hline
CBMNet$^{\dagger}$~\cite{kim2023event} & VFI-11$\times$ & 4.91 & 0.4040 & 0.4580 & 8.92 & 0.5120 & 0.5800 \\
\rowcolor[rgb]{ .95,  .95,  .95}
CBMNet$^{*}$~\cite{kim2023event} & VFI-11$\times$ & 3.92 & 0.4530 & 0.5270 & 0.63 & \textbf{0.7500} & 0.4000 \\
RE-VDM$^{\dagger}$~\cite{chen2025repurposing} & VFI-11$\times$ & 5.03 & 0.4180 & 0.4130 & \textbf{0.57} & 0.7330 & \textbf{0.3480} \\
\rowcolor[rgb]{ .95,  .95,  .95}
\textbf{\methodname~(Ours)} & VFI-11$\times$ & \textbf{0.72} & \textbf{0.7400} & \textbf{0.3200} & 0.58 & 0.6500 & 0.4000 \\
\hline
Zhu et al.\cite{zhu2024videoframe}$^{\dagger}$ & VFP & 1.84 & 0.6140 & 0.3960 & \textbf{0.77} & \textbf{0.6620} & \textbf{0.3400} \\
\rowcolor[rgb]{ .95,  .95,  .95}
\textbf{\methodname~(Ours)} & VFP & \textbf{0.93} & \textbf{0.7100} & \textbf{0.3470} & 1.00 & 0.5940  & 0.4110 \\
\bottomrule
\end{tabular}
\vspace{-10pt}
\label{table:vfi and vfp:sota:hs-ergb and synthetic}
\end{table*}

\begin{figure*}[t]
\centering
\includegraphics[width=1\linewidth]{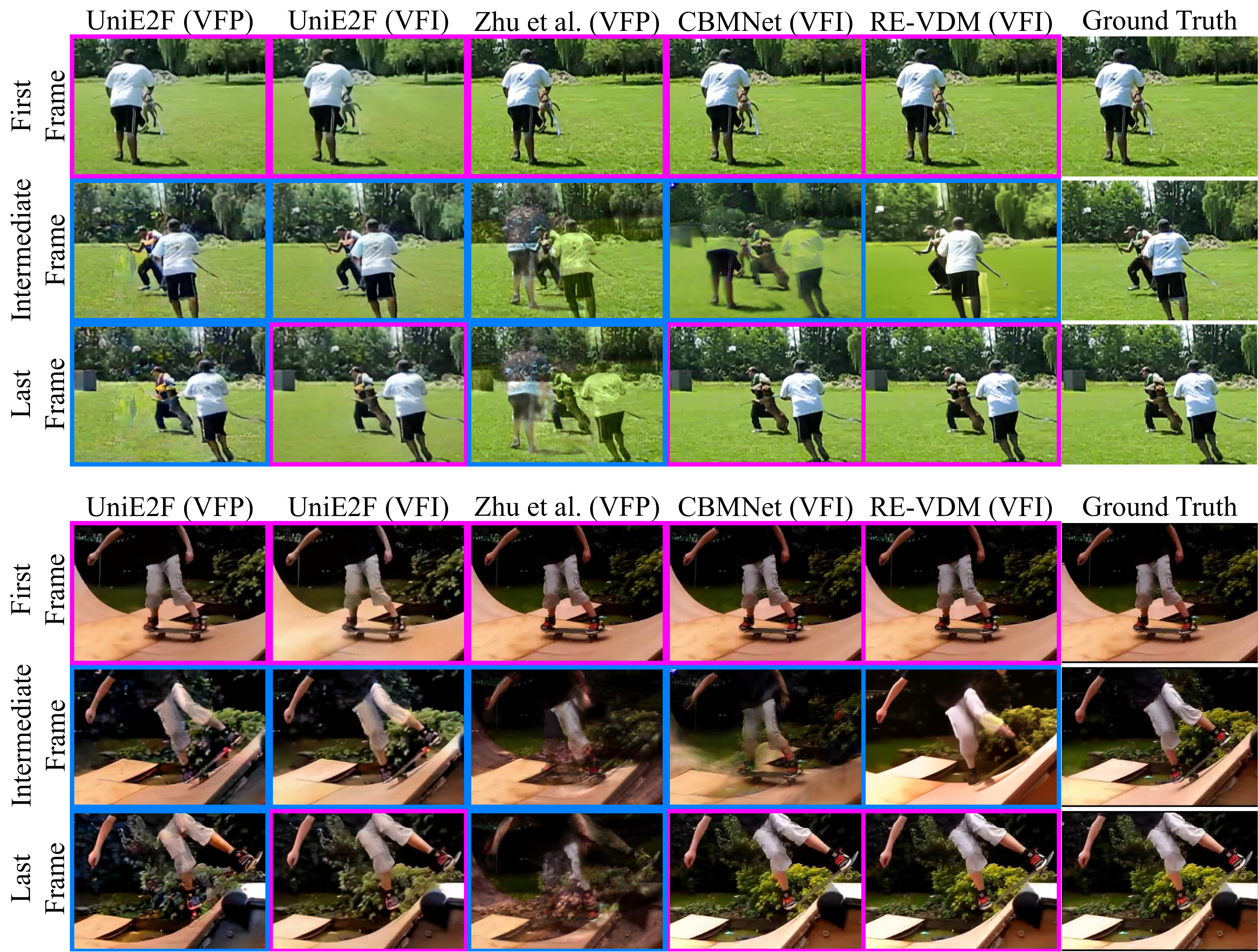}
\caption{Visual comparison on synthetic dataset: \methodname~is shown in both video frame interpolation (VFI) and video frame prediction (VFP) modes, while CBMNet is shown under the VFI setting. Note that the frames highlighted with purple borders denote the given frames, whereas frames highlighted with blue borders denote the predicted frames.}
\label{fig:vfi and vfp:sota:synthetic}
\vspace{-10pt}
\end{figure*}

\subsection{Results of Event-based Frame Reconstruction}
\label{sec:experiment:comparison with the state-of-the-art methods}

\noindent
\textbf{Comparison on Synthetic and Real-World Datasets.} 
We compared our method against various event-based reconstruction approaches, employing their official implementations and pre-trained parameters to ensure a fair evaluation. 
\revision{It should be noted that, because the compared methods differ in model scale and training data, the reported results should be understood as a comparison of different methods under their respective design paradigms, rather than as a strictly controlled comparison under identical pre-training conditions.
}
\revision{Besides}, since the methods in~\cite{liang2023event}, \cite{zhu2024temporal}, and \cite{chen2024lase} \textit{were not open-sourced}, they could not be included in our comparison.

The quantitative results, as summarized in~\tableref{table:vfr:sota:hs-ergb and synthetic}, reveal that our approach consistently outperforms the compared methods across multiple metrics.
\revision{Specifically, on the synthetic dataset, our approach yields significant performance gains across all the evaluation metrics. Moreover, on the real-world dataset, UniE2F achieves the best MSE of 0.0612 and SSIM of 0.4990, while its LPIPS score is relatively inferior to that of the best-performing baseline. We believe this discrepancy is mainly related to the intrinsic ambiguity of recovering RGB appearance from achromatic events. Unlike the compared methods, which mainly reconstruct grayscale intensity images, UniE2F leverages the generative prior of Stable Video Diffusion to recover RGB colors and textures. However, the synthetic-to-real domain gap further increases the uncertainty of this inherently ill-posed event-to-RGB mapping, such that the generated colors and textures may differ from the exact ground-truth appearance even when the overall intensity distribution and spatial structure are well preserved. Since LPIPS measures similarity in a deep perceptual feature space rather than directly evaluating pixel-wise or structural differences~\cite{zhang2018unreasonable}, such RGB appearance discrepancies can affect LPIPS differently from MSE and SSIM, resulting in the observed inconsistency among these metrics.}

Qualitative comparisons illustrated in~\figref{fig:comparison with the state-of-the-art methods} (with more visual comparison results provided in Appendix \textcolor{blue}{H}) reveal that our \methodname, leveraging a large-scale generative prior along with inter-frame residual guidance, achieves reconstruction with improved color fidelity and fewer artifacts.
In contrast, the compared methods, trained on grayscale images, tend to produce monochromatic outputs inconsistent with real-world color scenes, suffer from significant detail loss, and exhibit obvious artifacts.

\textit{It is worth noting} that although our method yields more natural and realistic reconstruction, a noticeable discrepancy in color tone remains between our result and the ground truth. This is primarily because event streams capture only intensity changes and inherently lack color information. Without the prior information about the colors of the scenes, 
achieving perfectly consistent color restoration remains an inherently challenging and ill-posed problem.

\vspace{0.5em}
\noindent
\textbf{Comparison on HQF, IJRR, and MVSEC Datasets.} 
We further compare UniE2F qualitatively with existing methods  on three widely used real-world datasets: HQF~\cite{stoffregen2020reducing}, IJRR~\cite{mueggler2017event}, and MVSEC~\cite{zhu2018multivehicle}, which provide single-channel intensity frames as ground truth. As shown in ~\figref{fig:vfr:hqf_ijrr_and_mvsec}, our UniE2F can reconstruct frames with more realistic colors and clear details, making them closer to real scenes. In contrast, the other methods can only produce single-channel grayscale reconstructions, which lack color information and look less natural.

\begin{figure*}[p]
\centering
\includegraphics[width=0.9\linewidth]{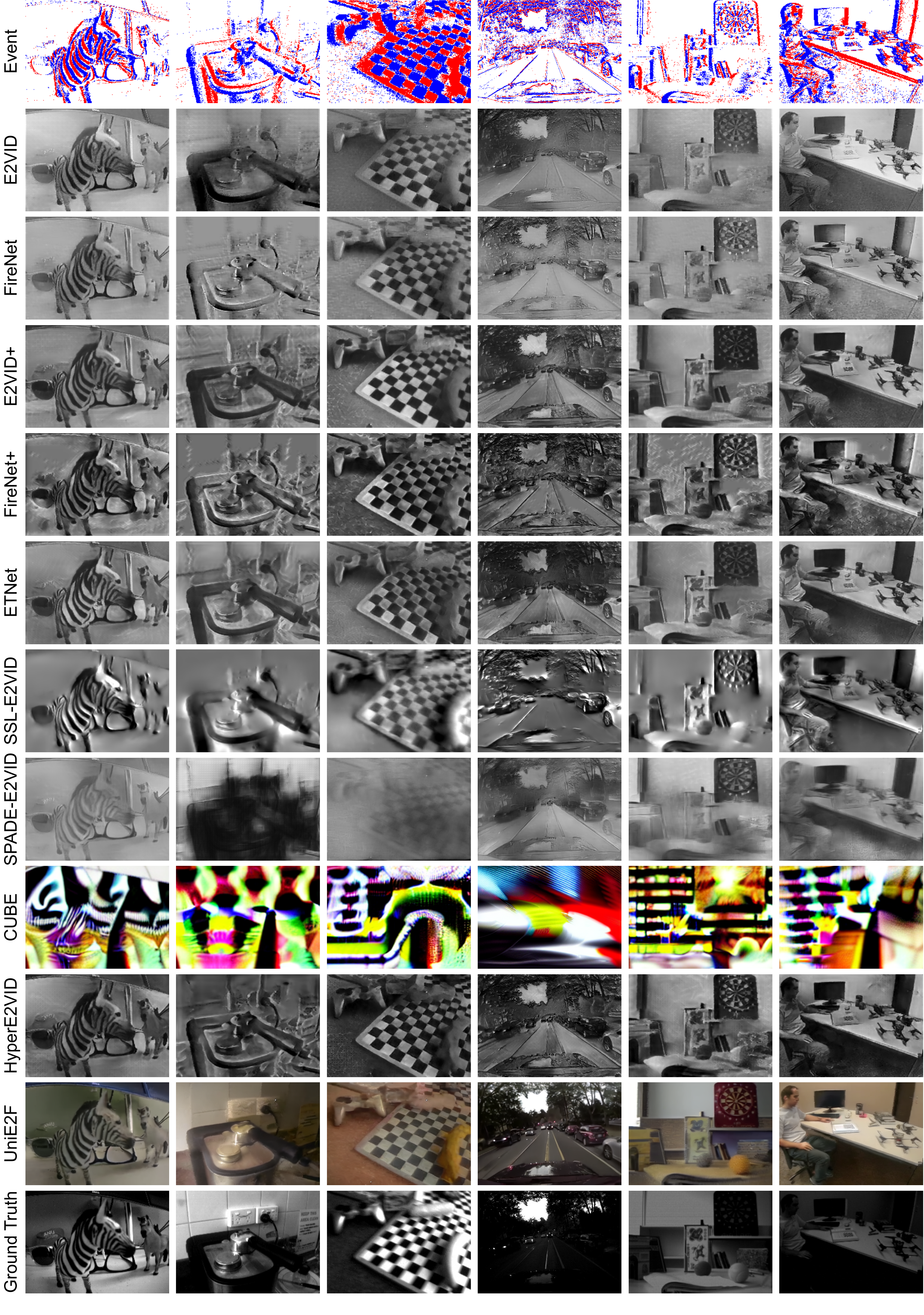}
\caption{Qualitative Comparison on HQF~\cite{stoffregen2020reducing}, IJRR~\cite{mueggler2017event}, and MVSEC~\cite{zhu2018multivehicle}.}
\label{fig:vfr:hqf_ijrr_and_mvsec}
\end{figure*}

\vspace{0.5em}
\noindent
\revision{\textbf{Comparison on EventAid Dataset.}}
\revision{
To further evaluate the generalization capability of our method on real-captured event data, we additionally conduct experiments on the EventAid benchmark~\cite{duan2025eventaid}.
For each scene, we sample multiple subsequences consisting of 12 consecutive frames for evaluation.
As shown in~\tableref{table:eventaid}, \methodname~achieves the best MSE and SSIM among the compared methods, reducing the MSE from the second-best result of 0.0803 to 0.0721 and improving the SSIM from 0.4220 to 0.4280.
Meanwhile, \methodname~achieves an LPIPS of 0.6350, which remains competitive with existing approaches.
These results demonstrate that \methodname~maintains favorable reconstruction performance when directly applied to unseen real-captured event data, further validating its cross-dataset generalization capability.
}

\begin{table}[t]
\small
\centering
\caption{\revision{Quantitative comparison of event-based frame reconstruction on the EventAid~\cite{duan2025eventaid} benchmark. MSE is reported in units of $10^{-2}$. The best result in each column is highlighted in \textbf{bold}.}}
\setlength{\tabcolsep}{12pt}
\begin{tabular}{l|c|c|c}
\toprule
\multirow{2}{*}{Method} 
& \multicolumn{3}{c}{EventAid~\cite{duan2025eventaid}} \\
\cline{2-4}
& MSE~$\downarrow$ 
& SSIM~$\uparrow$ 
& LPIPS~$\downarrow$ \\
\hline
\rowcolor[rgb]{ .95, .95, .95}
E2VID~\cite{rebecq2019high} & 16.58 & 0.3410 & 0.6600 \\
FireNet~\cite{scheerlinck2020fast} & 16.36 & 0.3450 & 0.6410 \\
\rowcolor[rgb]{ .95, .95, .95}
E2VID+~\cite{stoffregen2020reducing} & 8.03 & 0.3870 & \textbf{0.6080} \\
FireNet+~\cite{stoffregen2020reducing} & 8.81 & 0.3380 & 0.6410 \\
\rowcolor[rgb]{ .95, .95, .95}
ETNet~\cite{weng2021event} & 8.55 & 0.3800 & 0.6360 \\
SSL-E2VID~\cite{paredes2021back} & 8.61 & 0.4220 & 0.6220 \\
\rowcolor[rgb]{ .95, .95, .95}
SPADE-E2VID~\cite{cadena2021spade} & 16.51 & 0.3560 & 0.6450 \\
CUBE~\cite{zhao2024controllable} & 12.40 & 0.2770 & 0.7370 \\
\rowcolor[rgb]{ .95, .95, .95}
HyperE2VID~\cite{ercan2024hypere2vid} & 8.22 & 0.3730 & 0.6260 \\
\textbf{\methodname~(Ours)} & \textbf{7.21} & \textbf{0.4280} & 0.6350 \\
\bottomrule
\end{tabular}
\label{table:eventaid}
\vspace{-10pt}
\end{table}

\subsection{Results of Video Frame Interpolation and Prediction}
\label{sec:experiment:unified framework for video frame reconstruction}

Here, we conducted experiments to evaluate the \textit{zero-shot} capability of our \methodname~on the event‑based video frame interpolation and prediction tasks.
Specifically, we defined three tasks: VFI-4$\times$, VFI-11$\times$, and VFP, which respectively increased the frame rate to 4 times and 11 times the original and predicted subsequent frames given the first frame.
Following the above setup, we compared our method with several representative baselines for interpolation using their official pretrained weights or retraining them on our synthetic training set.
Since TimeTracker~\cite{liu2025timetracker} had \textit{not been open-sourced}, its weights and results could not be obtained and were therefore omitted.
More detailed descriptions of the experimental setting are provided in Appendix \textcolor{blue}{D}.

The quantitative and qualitative results are presented in~\tableref{table:vfi and vfp:sota:hs-ergb and synthetic} and~\figref{fig:vfi and vfp:sota:synthetic}, respectively.
\revision{On the synthetic dataset, UniE2F delivers a clear advantage in both short-range and long-range interpolation as well as prediction, achieving the best performance on most evaluation metrics. On the real-world datasets, UniE2F shows relatively weaker quantitative performance compared to the task-specific baselines, which is mainly attributed to the domain gap between synthetic training and real-world event distributions. Specifically, UniE2F is trained on simulated event--frame pairs, while discrepancies in the event formation process and event--frame mapping between simulated and real data inevitably affect its generalization to real-world scenarios. Nevertheless, considering that our model operates in a strict zero-shot setting without any fine-tuning for VFI/VFP or on real-world data, its ability to remain competitive on unseen real-world interpolation and prediction tasks is encouraging.}
Moreover, beyond numerical scores, \methodname~preserves motion dynamics while maintaining high-fidelity color, texture, and structural details, whereas outputs of other methods exhibit artifacts and degradation due to the lack of generative priors.
Remarkably, these zero-shot results are obtained \textbf{without any fine-tuning on interpolation or prediction datasets}, which confirms that \methodname~provides a unified and flexible framework that not only delivers excellent performance in event‑driven frame reconstruction but also delivers outstanding zero‑shot performance on both interpolation and prediction tasks, highlighting its \revision{cross-task generalization capability} and practical utility.

\section{Ablation Study\protect\footnote{More ablation studies are provided in the Appendix \textcolor{blue}{E}}}
\label{sec:ablation study}

\noindent
\textbf{Guidance Strength Strategy.}
We further investigated the optimal guidance strength strategy by evaluating four scheduling approaches under the experimental setup where the guidance was applied exclusively during the final 10 sampling steps.
Specifically, we designed four scheduling strategies over the designated steps: (\textbf{1}) a baseline configuration with no guidance, i.e., the guidance strength was set to 0; (\textbf{2}) maintaining a constant guidance strength of 0.1; (\textbf{3}) linearly decreasing the guidance strength from 0.1 to 0, and (\textbf{4}) linearly increasing the guidance strength from 0 to 0.1.
The comparison results \tableref{table:vfr:ablation:synthetic:guidance strength} show that the linearly decreasing schedule (0.1 $\rightarrow$ 0.0) achieves the best performance compared with other configurations.
The results indicate that applying stronger guidance early in the reverse diffusion process helps to enforce accurate inter-frame residual alignment, while gradually reducing the guidance allows the model’s generative prior to effectively refine finer details and prevent over-constraining the reconstruction. 
This progressive relaxation appears critical for balancing reconstruction fidelity with visual diversity.
\begin{table}[t]
\small
\centering
\caption{
Comparison of different guidance strength strategies.
MSE is reported in units of $10^{-2}$.
}
\setlength{\tabcolsep}{12pt}
\begin{tabular}{c|c|c|c}
\toprule
\multirow{2}{*}{Guidance Strength} & \multicolumn{3}{c}{Synthetic} \\
\cline{2-4}
& MSE~$\downarrow$ & SSIM~$\uparrow$ & LPIPS~$\downarrow$ \\
\hline
\rowcolor[rgb]{ .95,  .95,  .95}
0.0 $\rightarrow$ 0.0 & 1.91 & 0.6880 & 0.4020 \\
0.1 $\rightarrow$ 0.1 & 2.34 & 0.6610 & 0.4190 \\
\rowcolor[rgb]{ .95,  .95,  .95}
0.0 $\rightarrow$ 0.1 & 2.34 & 0.6540 & 0.4210 \\
0.1 $\rightarrow$ 0.0 & \textbf{1.67} & \textbf{0.7100} & \textbf{0.3940} \\
\bottomrule
\end{tabular}
\vspace{-15pt}
\label{table:vfr:ablation:synthetic:guidance strength}
\end{table}

\vspace{0.5em}
\noindent
\textbf{Guidance Mode.}
Since the SVD model conducts the denoising process in the latent space, a direct strategy to enhance inter-frame consistency is to impose explicit constraints on the residuals between adjacent frames within this space. 
To this end, we trained a ResNet with the aim of predicting inter-frame residuals within latent representation.
As shown in \figref{fig:vfr:ablation:synthetic:guidance-mode} and \tableref{table:vfr:ablation:synthetic:guidance mode}, the results of latent-level guidance are inferior to frame-level guidance results.
Moreover, the images generated with frame-level guidance exhibit higher fidelity and richer details, producing sharper structures and more natural textures.
In contrast, the outputs under latent-level guidance often suffer from noticeable distortions and artifacts, leading to degraded perceptual quality.
The performance gap can be attributed to the Gaussian distribution inherent in the latent inter-frame residuals, making it particularly challenging for networks trained with MAE loss to accurately model the precise variations.
\begin{table}[t]
\small
\centering
\setlength{\tabcolsep}{13pt}
\caption{
Quantitative comparison of different guidance modes.
MSE is reported in units of $10^{-2}$.
}
\vspace{-5pt}
\begin{tabular}{c|c|c|c}
\toprule
\multirow{2}{*}{Guidance Mode} & \multicolumn{3}{c}{Synthetic} \\
\cline{2-4}
& MSE~$\downarrow$ & SSIM~$\uparrow$ & LPIPS~$\downarrow$ \\
\hline
\rowcolor[rgb]{ .95,  .95,  .95}
Latent & 1.97 & 0.6920 & 0.4060 \\
Frame & \textbf{1.67} & \textbf{0.7100} & \textbf{0.3940} \\
\bottomrule
\end{tabular}
\label{table:vfr:ablation:synthetic:guidance mode}
\vspace{-15pt}
\end{table}
\begin{figure}[t]
% \vspace{-30pt}
\includegraphics[width=\linewidth]{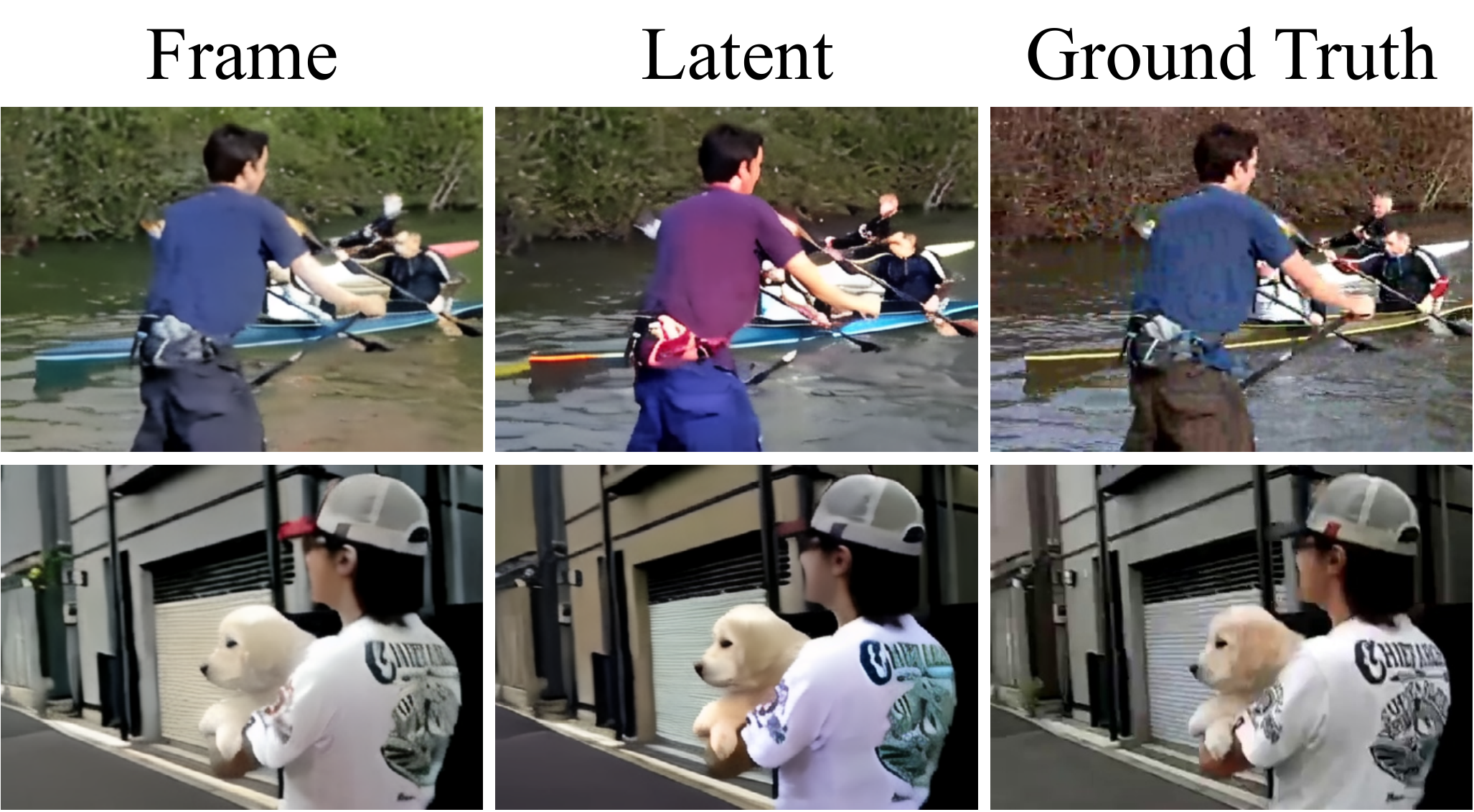}
\caption{Visual comparison of different guidance modes.}
\vspace{-5pt}
\label{fig:vfr:ablation:synthetic:guidance-mode}
\end{figure}

\vspace{0.5em}
\noindent
\textbf{Max Guidance Strength.}
The value of the weighting coefficient $s$ controls the gradient update magnitude for the estimated clean latent $U_t$ by weighting the contribution of the inter-frame residual loss. In the ablation experiments for the video frame reconstruction task, to evaluate the effect of the weighting coefficient $s$, we expanded the maximum guidance strength from 0.1 to larger values such as 0.5, 1.0, 5.0 and 10.0, and compared MSE, SSIM, and LPIPS (see ~\tableref{table:vfr:ablation:synthetic:maximum guidance strength}). The results show that as the coefficient grows, MSE increases, SSIM drops, and LPIPS rises—indicating that overly strong residual guidance causes the reconstruction to depend too heavily on the event-frame residual. Because the mapping from events to RGB involves nonlinear steps (e.g., gamma correction, ISP pipeline) and is not one‑to‑one, amplifying this loss instead introduces artifacts and degrades visual quality.
\begin{table}[t]
\small
\centering
\caption{
Quantitative comparison of different maximum guidance strength, evaluated using MSE, SSIM, and LPIPS.
MSE is reported in units of $10^{-2}$.
}
\setlength{\tabcolsep}{11pt}
\begin{tabular}{c|c|c|c}
\toprule
\multirow{2}{*}{\shortstack{Maximum Guidance\\Strength}} & \multicolumn{3}{c}{Synthetic} \\
\cline{2-4}
& MSE~$\downarrow$ & SSIM~$\uparrow$ & LPIPS~$\downarrow$ \\
\hline
\rowcolor[rgb]{ .95,  .95,  .95}
10.0 & 9.64 & 0.2950 & 0.7070 \\
5.0 & 5.73 & 0.4140 & 0.6510 \\
\rowcolor[rgb]{ .95,  .95,  .95}
1.0 & 2.28 & 0.6140 & 0.4770 \\
0.5 & 1.88 & 0.6710 & 0.4240 \\
\rowcolor[rgb]{ .95,  .95,  .95}
0.1 & \textbf{1.67} & \textbf{0.7100} & \textbf{0.3940} \\
\bottomrule
\end{tabular}
\label{table:vfr:ablation:synthetic:maximum guidance strength}
\vspace{-5pt}
\end{table}

\vspace{0.5em}
\noindent
\textbf{Inter-Frame Residual Guidance.}
To demonstrate the effectiveness of the Inter-Frame Residual Guidance (IFRG), we have included a visual comparison in \figref{fig:exp-ifrg}. As shown, without IFRG (top row), the reconstructions exhibit noticeable blurring and structural distortion: the bars of the fence are wavy and over-smoothed, and the ground textures are largely washed out. With IFRG (middle row), edges and fine structures align much better with the ground truth (bottom row): the fence bars are straighter and more regular, and the sand and background textures are more faithfully recovered. These improvements indicate that IFRG effectively constrains the reconstruction with inter-frame intensity changes, yielding frames that are both structurally more accurate to ground truth.
\begin{figure}[t]
% \vspace{-10pt}
\centering
\includegraphics[width=1.0\linewidth]{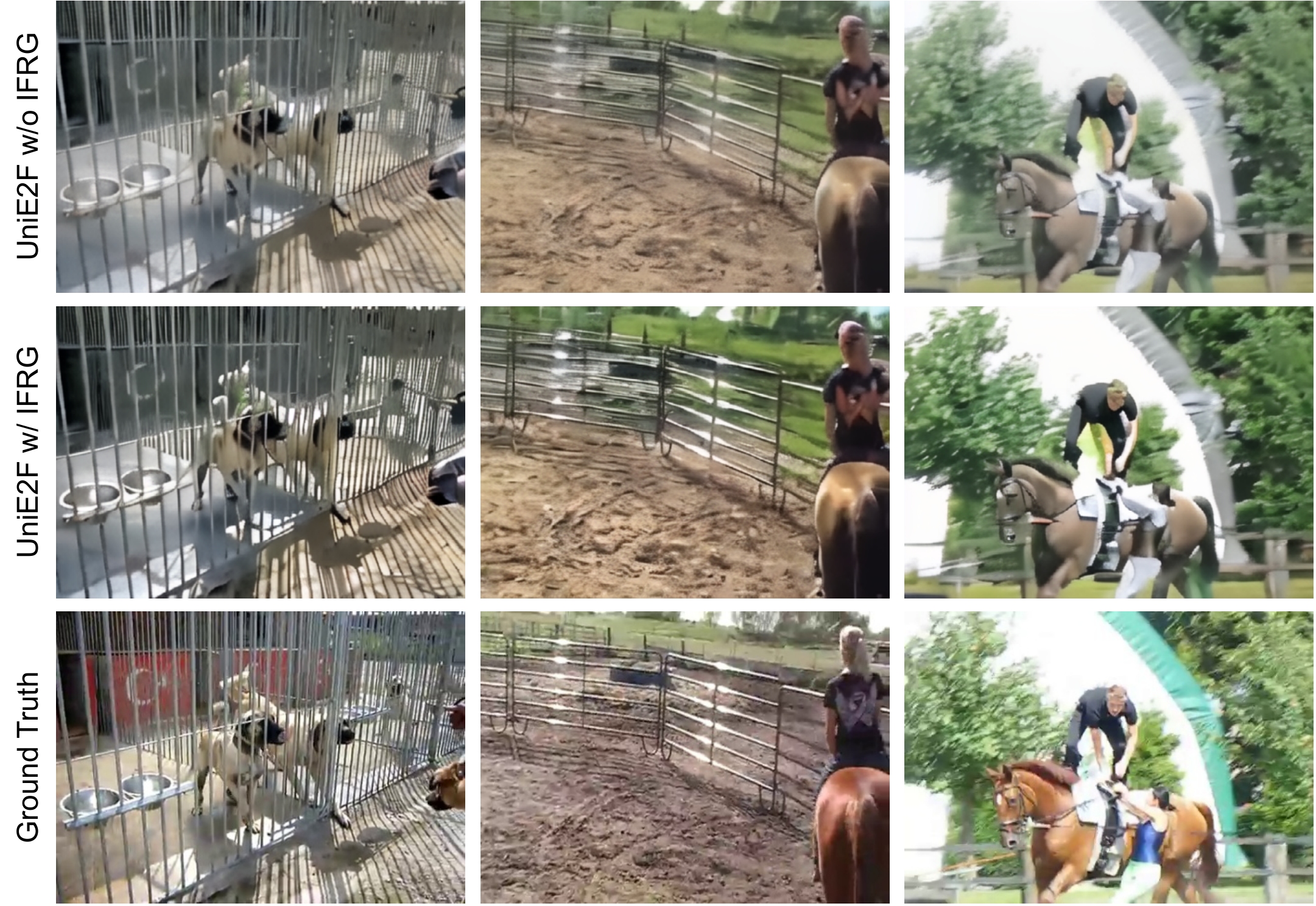}
\caption{Effect of Inter-Frame Residual Guidance (IFRG). Compared with the baseline (top), adding IFRG (middle) yields sharper edges and more faithful textures that better match the ground truth (bottom).}
\label{fig:exp-ifrg}
\vspace{-5pt}
\end{figure}

\vspace{0.5em}
\noindent
\textbf{Perceptual Quality Evaluation via Fréchet Inception Distance.}
The real-world datasets we used contained relatively few scenes, and the color distribution of HS-ERGB may not fully align with real-world conditions, leading to perceptual deviations from natural scenes. Since UniE2F was trained with RGB frames from real-world scenes in TrackingNet as supervisory targets, the color distribution of the reconstructed frames tends to be closer to real-world distributions, as seen in rows 3 and 4 of \figref{fig:comparison with the state-of-the-art methods}. 

As a result, perceptual differences in high-dimensional feature spaces, such as those captured by LPIPS, may appear larger. Thus, here, we evaluated the reconstruction performance of our method and the competing approaches using the Fréchet Inception Distance (FID) metric (calculated via the Inception v3 network) as shown in the~\tableref{table:vfr:fid}.
As can be seen, our method outperforms the competitors in terms of FID~\cite{heusel2017gans}, further supporting its advantages in generating perceptually high-quality reconstructions.
\begin{table}[t]
\small
\centering
\caption{
Quantitative comparison of perceptual quality using FID (lower is better) for video frame reconstruction. UniE2F achieves the best FID, demonstrating its advantage in generating perceptually natural frames.
}
\setlength{\tabcolsep}{16pt}
\begin{tabular}{l|c|c}
\toprule
\multirow{2}{*}{Method} & Real-World & Synthetic \\
\cline{2-3}
& FID~$\downarrow$ & FID~$\downarrow$ \\
\hline
\rowcolor[rgb]{ .95,  .95,  .95}
E2VID~\cite{rebecq2019high}        & 223.7926 & 179.1581 \\
FireNet~\cite{scheerlinck2020fast} & 241.5848 & 172.3014 \\
\rowcolor[rgb]{ .95,  .95,  .95}
E2VID+~\cite{stoffregen2020reducing} & 240.4527 & 203.2372 \\
FireNet+~\cite{stoffregen2020reducing} & 250.4045 & 245.4307 \\
\rowcolor[rgb]{ .95,  .95,  .95}
ETNet~\cite{weng2021event}         & 252.4636 & 207.1250 \\
SSL-E2VID~\cite{paredes2021back}   & 252.2566 & 241.1340 \\
\rowcolor[rgb]{ .95,  .95,  .95}
SPADE-E2VID~\cite{cadena2021spade} & 248.8114 & 193.7386 \\
CUBE~\cite{zhao2024controllable}   & 207.4970 & 192.1830 \\
\rowcolor[rgb]{ .95,  .95,  .95}
HyperE2VID~\cite{ercan2024hypere2vid} & 272.1271 & 224.4130 \\
\textbf{\methodname~(Ours)}         & \textbf{184.2509} & \textbf{57.1092} \\
\bottomrule
\end{tabular}
% \vspace{-5pt}
\label{table:vfr:fid}
\end{table}

\begin{table}[t]
\small
\centering
\caption{
Comparison of computational overhead between our \methodname~and other methods, evaluated by reconstructing a sequence of 12 RGB frames at $448\times320$ resolution.
}
\setlength{\tabcolsep}{10pt}
\begin{tabular}{l|c|c}
\toprule
\multirow{2}{*}{Method} & \multirow{2}{*}{\shortstack{Computational\\Cost (TMACs)}}  &  \multirow{2}{*}{\shortstack{Peak Memory\\Footprint (MB)}}  \\
& & \\
\hline
\rowcolor[rgb]{ .95,  .95,  .95}
E2VID~\cite{rebecq2019high} & 0.583 & 622 \\
FireNet~\cite{scheerlinck2020fast} & 0.064 & 462 \\
\rowcolor[rgb]{ .95,  .95,  .95}
E2VID+~\cite{stoffregen2020reducing} & 0.782 & 816 \\
FireNet+~\cite{stoffregen2020reducing} & 0.064 & 436 \\
\rowcolor[rgb]{ .95,  .95,  .95}
ETNet~\cite{weng2021event} & 1.737 & 1074 \\
SSL-E2VID~\cite{paredes2021back} & 0.065 & 640 \\
\rowcolor[rgb]{ .95,  .95,  .95}
SPADE-E2VID~\cite{cadena2021spade} & 0.226 & 1096 \\
CUBE~\cite{zhao2024controllable} & 300.116 & 8215 \\
\rowcolor[rgb]{ .95,  .95,  .95}
HyperE2VID~\cite{ercan2024hypere2vid} & 0.060 & 1052 \\
\textbf{\methodname~(Ours)} & 344.34 & 46753 \\
\bottomrule
\end{tabular}
\vspace{0pt}
\label{table:computational overhead:sota:synthetic}
\end{table}

\begin{table}[t]
\small
\centering
\caption{Quantitative comparison of reconstruction performance and computational overhead of our method under different sampling steps. MSE is reported in units of $10^{-2}$.}
\setlength{\tabcolsep}{7.5pt}
\begin{tabular}{c|c|c|c|c}
\toprule
\multirow{2}{*}{\shortstack{Sampling\\Steps}} & \multirow{2}{*}{\shortstack{Computational\\Cost (TMACs)}} & \multicolumn{3}{c}{Synthetic} \\
\cline{3-5}
& & MSE~$\downarrow$ & SSIM~$\uparrow$ & LPIPS~$\downarrow$ \\
\hline
\rowcolor[rgb]{ .95,  .95,  .95}
1 & 37.14 & 1.97 & 0.6490 & 0.4380 \\
5 & 77.13 & 1.76 & 0.6760 & 0.4130 \\
\rowcolor[rgb]{ .95,  .95,  .95}
15 & 162.30 & 1.68 & 0.6980 & 0.4000 \\
30 & 344.34 & \textbf{1.67} & \textbf{0.7100} & \textbf{0.3940} \\
\bottomrule
\end{tabular}
\label{table:vfr:ablation:sampling_steps}
\vspace{-5pt}
\end{table}

\vspace{0.5em}
\noindent
\revision{
\textbf{Temporal Consistency.}
Here, we evaluated the optical-flow consistency of UniE2F, UniE2F w/o IFRG, and existing event-based frame reconstruction methods to provide a quantitative assessment of inter-frame temporal consistency.
Specifically, for each pair of consecutive frames, we first estimate the optical flow between the ground-truth frames and that between the reconstructed frames using the same optical-flow estimator. Let $\mathbf{V}_{f}$ and $\hat{\mathbf{V}}_{f}$ denote the $f$-th ground-truth and reconstructed frames, respectively. The corresponding optical flows are computed as
\begin{equation}
\mathbf{O}^{\mathrm{GT}}_{f}
=
\Phi(\mathbf{V}_{f},\mathbf{V}_{f+1}),
\qquad
\mathbf{O}^{\mathrm{Rec}}_{f}
=
\Phi(\hat{\mathbf{V}}_{f},\hat{\mathbf{V}}_{f+1}),
\end{equation}
where $\Phi(\cdot,\cdot)$ denotes the optical-flow estimator.
}

\revision{
We then measure the discrepancy between the two optical-flow fields using the average endpoint error (EPE):
\begin{equation}
\begin{aligned}
\mathcal{O}_{\mathrm{flow}}
&=
\frac{1}{(F-1)HW}
\sum_{f=0}^{F-2}
\sum_{x=0}^{W-1}
\sum_{y=0}^{H-1} \\
&\quad \times
\left\|
\mathbf{O}^{\mathrm{Rec}}_{f}(x,y)
-
\mathbf{O}^{\mathrm{GT}}_{f}(x,y)
\right\|_{2},
\end{aligned}
\end{equation}
where a lower value indicates that the reconstructed video preserves motion between adjacent frames more consistently with the ground-truth video. 
}

\revision{
The comparison results on the real-world dataset are reported in~\tableref{table:4-4-1}. It can be seen that our method still achieves an optical-flow consistency error that is 35.79\% lower than that of the second-best method SSL-E2VID~\cite{paredes2021back}. More importantly, UniE2F with IFRG achieves better optical-flow consistency than the UniE2F w/o IFRG variant, demonstrating that the proposed IFRG effectively improves inter-frame temporal consistency rather than merely enhancing frame-wise reconstruction quality.
}

\begin{table}[t]
\centering
\small
\caption{
\revision{Quantitative comparison of optical-flow consistency on the real-world dataset. Lower EPE indicates better inter-frame temporal consistency. The best result in each column is highlighted in \textbf{bold}.}}
\label{tab:optical_flow_consistency}
\setlength{\tabcolsep}{36pt}
\begin{tabular}{l|c}
\toprule
Method & EPE $\downarrow$ \\
\midrule
\rowcolor[rgb]{ .95, .95, .95}
E2VID~\cite{rebecq2019high}             & 2.5554  \\
FireNet~\cite{scheerlinck2020fast}      & 7.9732  \\
\rowcolor[rgb]{ .95, .95, .95}
E2VID+~\cite{stoffregen2020reducing}    & 8.2152  \\
FireNet+~\cite{stoffregen2020reducing}  & 8.1082  \\
\rowcolor[rgb]{ .95, .95, .95}
ETNet~\cite{weng2021event}              & 7.1900  \\
SSL-E2VID~\cite{paredes2021back}        & 2.5231  \\
\rowcolor[rgb]{ .95, .95, .95}
SPADE-E2VID~\cite{cadena2021spade}      & 3.2342 \\
CUBE~\cite{zhao2024controllable}        & 4.3118 \\
\rowcolor[rgb]{ .95, .95, .95}
HyperE2VID~\cite{ercan2024hypere2vid}   & 3.8875 \\
\midrule
\methodname~w/o IFRG                    & 2.1925 \\
\rowcolor[rgb]{ .95, .95, .95}
\textbf{\methodname~(Ours)}             & \textbf{1.6202} \\
\bottomrule
\end{tabular}
\label{table:4-4-1}
\vspace{-5pt}
\end{table}

\vspace{0.5em}
\noindent
\textbf{Computational Overhead.}
We evaluated the efficiency of our \methodname~and other methods on a single NVIDIA RTX A6000 GPU by reconstructing a sequence of 12 RGB frames at 448×320 resolution. The result is listed in~\tableref{table:computational overhead:sota:synthetic}. Although \methodname’s use of a pre-trained stable video diffusion model’s generative prior incurs higher computational cost and GPU memory usage than non-diffusion methods, it provides a high-quality baseline for event-to-frame reconstruction. Moreover, the Stable Diffusion–based CUBE requires both user‐provided text prompts and event data, and its reconstruction fails when only pure events are supplied. By contrast, our method still significantly outperforms CUBE in reconstruction fidelity and visual quality even without any textual prompts. 

It is important to note that computational efficiency is not the primary goal of our approach. 
\revision{Accordingly, UniE2F is primarily intended for offline, high-fidelity reconstruction scenarios where reconstruction quality and robustness are prioritized over real-time latency.}
Instead, since reconstructing video frames from events is a highly challenging information recovery task where significant visual details are missing, it inherently necessitates the integration of powerful generative priors.
The core objective of this work is to investigate how to effectively leverage existing large-scale pretrained video diffusion models and transfer their powerful generative priors to event-based vision tasks, rather than restricting the discussion to comparisons under a fixed small-model computational budget.
While utilizing such foundation models naturally raises concerns regarding computational complexity, we demonstrate that this can be alleviated by reducing the number of sampling steps to achieve a practical trade-off between efficiency and performance, as shown in \tableref{table:vfr:ablation:sampling_steps}.
By reducing the number of sampling steps to 15, 5 and 1, the computational cost of our method significantly drops from 344.34 to 162.30, 77.13 and 37.14 TMACs, while still maintaining superior reconstruction fidelity compared to all other state-of-the-art methods for event-based video frame reconstruction. 
In future work, building on the proposed \methodname, we will investigate diffusion-model distillation and network pruning to further speed up inference for real-time applications.

\section{Conclusion and Discussion}
\label{sec:conclusion}

We have presented a novel event-based video frame reconstruction approach by fine-tuning a pre-trained SVD model with event data as conditional inputs. 
Leveraging the powerful generative prior from the pre-trained video diffusion model, our method significantly enhances the fidelity and realism of reconstructed frames, especially in complex dynamic scenes. 
The introduction of event-based inter-frame residual guidance further improves the accuracy while maintaining diversity in the reconstruction.
Our unified framework is versatile, not only excelling in video frame reconstruction but also extending seamlessly to tasks such as interpolation and prediction. 
Experimental results on both synthetic and real-world datasets validate the effectiveness of our approach, demonstrating improved performance compared to existing methods across multiple evaluation metrics.

\vspace{0.5em}
\noindent
\textbf{Limitation and Future Work.}
\revision{UniE2F still has several limitations. Extremely sparse or noisy events may provide insufficient or unreliable temporal information, which can degrade reconstruction accuracy and temporal consistency. Moreover, the large diffusion backbone introduces considerable computational and memory overhead, making the current framework more suitable for offline high-fidelity reconstruction than real-time applications. A more detailed discussion of these limitations and possible future directions is provided in Appendix \textcolor{blue}{G}.}

%\section*{Data Availability}

%All data used in our paper are publicly available. The details of the datasets and their acquisition are provided in Section \ref{sec:experiment:implementation details} and Appendix \ref{Sec:dataset_app}. Specifically, our synthetic data were generated using the DVS-Voltmeter~\cite{lin2022dvs} simulator based on the publicly accessible TrackingNet~\cite{muller2018trackingnet} dataset.

%\section*{Conflict of Interest}

%The authors affirm that there are no commercial or associative relationships that could be perceived as a conflict of interest related to the submitted work.

% \section*{Acknowledgments}

% This research was supported by the National Natural
% Science Foundation of China (No.62272433, No.62402468,
% No.U25A20390), and the Fundamental Research Funds for the
% Central Universities.

%\clearpage

\bibliographystyle{IEEEtran}
\bibliography{IEEE-bibliography}

\clearpage

\vfill

\end{document}

% --- supplement: supp.tex ---

\title{\methodname: A Unified Diffusion Framework for Event-to-Frame Reconstruction with Video Foundation Models}

\author{Gang~Xu, Zhiyu~Zhu, and~Junhui~Hou,~\IEEEmembership{Senior Member,~IEEE}% <-this % stops a space
\thanks{Gang Xu and Zhiyu Zhu contributed equally to this work. \textit{(Corresponding author: Junhui Hou.)}}
\thanks{Gang Xu is with the Department of Computer Science, City University of Hong Kong, Hong Kong, China, and also with the Guangdong Laboratory of Artificial Intelligence and Digital Economy (SZ), Shenzhen, China (e-mail: xugang@gml.ac.cn).}
\thanks{Zhiyu Zhu is with the Department of Computer Science, City University of Hong Kong, Hong Kong, China, and also with the Department of Computer Science, City University of Hong Kong (Dongguan), Dongguan, China (e-mail: zhiyu.zhu@cityu-dg.edu.cn).}
\thanks{Junhui Hou is with the Department of Computer Science, City University of Hong Kong, Hong Kong, China (e-mail: jh.hou@cityu.edu.hk).}
\thanks{This work was supported in part by the National Natural Science Foundation of China under Grant 62422118, and in part by the Hong Kong Research Grants Council under Grant 11218121 and Grant N\_CityU1114/25.}
}

\maketitle

\appendices

% \setcounter{topnumber}{5}

\section*{Appendix Contents} % 这一行只是目录上面的标题，可改名

\noindent
\hyperref[Sec:Proofs]{\texttt{A} Proof of Proposition 1: Residual-Error Propagation Bound}

\noindent
\hyperref[Sec:dataset_app]{\texttt{B} Details of Datasets}

\noindent
\hyperref[sec:ifre]{\texttt{C} Training Strategy for the Inter-Frame Residual Estimator}

\noindent
\hyperref[Sec:unified_setup]{\texttt{D} Detailed Experiment Settings for Video Frame Interpolation and Prediction}

\noindent
\hyperref[Sec:additional_ablation_studies]{\texttt{E} Additional Ablation Studies}

\noindent
\hyperref[Sec:additional_implementation_details]{\texttt{F} Additional Implementation Details}

\noindent
\hyperref[Sec:limitation_and_future_work]{\texttt{G} Limitation and Future Work}

\noindent
\hyperref[Sec:visual_result]{\texttt{H} More Visual Results}

\section{Proof of Proposition 1: Residual-Error Propagation Bound}
\label{Sec:Proofs}

\revision{
We provide the proof of Proposition 1, which characterizes how the inconsistency between the reconstructed inter-frame residuals and the event-derived residuals contributes to the reconstruction error of individual video frames.
}

\revision{
At an arbitrary reverse diffusion step $t$ where the proposed inter-frame residual guidance is applied, let
\begin{equation}
\mathbf{F}^{t}
=
\mathcal{D}(\mathbf{U}^{t})
=
\{\mathbf{F}^{t}_0,\mathbf{F}^{t}_1,\ldots,\mathbf{F}^{t}_{F-1}\}
\end{equation}
denote the frame sequence decoded from the estimated clean latent $\mathbf{U}^{t}$.
Let
\begin{equation}
\mathbf{V}
=
\{\mathbf{V}_0,\mathbf{V}_1,\ldots,\mathbf{V}_{F-1}\}
\end{equation}
denote the corresponding ground-truth video frame sequence.
}

\revision{
For two consecutive frames, we define
\begin{equation}
\Delta\mathbf{V}_k
=
\mathbf{V}_{k+1}-\mathbf{V}_k,
\qquad
\Delta\mathbf{F}^{t}_k
=
\mathbf{F}^{t}_{k+1}-\mathbf{F}^{t}_k,
\end{equation}
where $k\in\{0,\ldots,F-2\}$.
}

\revision{
The proposed inter-frame residual guidance constrains the reconstructed inter-frame changes using the following sum-reduced L1 loss:
\begin{equation}
\mathcal{L}_{\mathrm{residual}}(\mathbf{U}^{t})
=
\sum_{k=0}^{F-2}
\left\|
\Delta\mathbf{F}^{t}_k-\mathbf{R}_k
\right\|_1,
\label{eq:supp_residual_loss}
\end{equation}
where $\mathbf{R}_k$ denotes the inter-frame residual estimated from the corresponding event representation, and the L1 norm sums the absolute differences over the spatial and channel dimensions.
}

\revision{
In our implementation, Eq.~(\ref{eq:supp_residual_loss}) is evaluated over temporal windows. For a window size of $w$, consecutive windows are sampled with a stride of $w-1$, such that adjacent windows share only their boundary frame and each consecutive-frame pair is included once. The gradients computed from the individual window losses are accumulated before updating the latent, which preserves the sum-reduced form of the residual objective in Eq.~(\ref{eq:supp_residual_loss}).
}

\revision{
For notational simplicity, we omit the diffusion-step superscript $t$ in the following derivation and write
$\mathbf{F}_k$, $\Delta\mathbf{F}_k$, and
$\mathcal{L}_{\mathrm{residual}}$ in place of
$\mathbf{F}^{t}_k$, $\Delta\mathbf{F}^{t}_k$, and
$\mathcal{L}_{\mathrm{residual}}(\mathbf{U}^{t})$, respectively.
The following derivation therefore applies to the decoded frame sequence at any guidance step.
}

\revision{
We define the residual-estimation error as
\begin{equation}
\boldsymbol{\epsilon}_k
=
\mathbf{R}_k-\Delta\mathbf{V}_k.
\end{equation}
This error accounts for discrepancies caused by imperfect event observations, sensor noise, contrast-threshold variations, and the learned event-to-residual mapping.
We assume that its magnitude is bounded as
\begin{equation}
\left\|
\boldsymbol{\epsilon}_k
\right\|_1
=
\left\|
\mathbf{R}_k-\Delta\mathbf{V}_k
\right\|_1
\leq
\varepsilon_k.
\label{eq:supp_residual_error}
\end{equation}
}

\revision{
For the $k$-th reconstructed frame, its reconstruction error can be written exactly as
\begin{equation}
\begin{aligned}
\mathbf{F}_k-\mathbf{V}_k
&=
\left(\mathbf{F}_0-\mathbf{V}_0\right)
+
\sum_{i=0}^{k-1}
\left[
\left(\mathbf{F}_{i+1}-\mathbf{F}_i\right)
-
\left(\mathbf{V}_{i+1}-\mathbf{V}_i\right)
\right]
\\
&=
\left(\mathbf{F}_0-\mathbf{V}_0\right)
+
\sum_{i=0}^{k-1}
\left(
\Delta\mathbf{F}_i-\Delta\mathbf{V}_i
\right).
\end{aligned}
\label{eq:supp_error_decomposition}
\end{equation}
}

\revision{
Applying the triangle inequality yields
\begin{equation}
\left\|
\mathbf{F}_k-\mathbf{V}_k
\right\|_1
\leq
\left\|
\mathbf{F}_0-\mathbf{V}_0
\right\|_1
+
\sum_{i=0}^{k-1}
\left\|
\Delta\mathbf{F}_i-\Delta\mathbf{V}_i
\right\|_1.
\label{eq:supp_error_1}
\end{equation}
}

\revision{
For each inter-frame residual term, we insert the event-derived residual $\mathbf{R}_i$:
\begin{equation}
\begin{aligned}
\Delta\mathbf{F}_i-\Delta\mathbf{V}_i
&=
\left(
\Delta\mathbf{F}_i-\mathbf{R}_i
\right)
+
\left(
\mathbf{R}_i-\Delta\mathbf{V}_i
\right).
\end{aligned}
\end{equation}
Therefore, by the triangle inequality,
\begin{equation}
\left\|
\Delta\mathbf{F}_i-\Delta\mathbf{V}_i
\right\|_1
\leq
\left\|
\Delta\mathbf{F}_i-\mathbf{R}_i
\right\|_1
+
\left\|
\mathbf{R}_i-\Delta\mathbf{V}_i
\right\|_1.
\label{eq:supp_error_2}
\end{equation}
}

\revision{
Using the residual-estimation error bound in Eq.~(\ref{eq:supp_residual_error}), we obtain
\begin{equation}
\left\|
\Delta\mathbf{F}_i-\Delta\mathbf{V}_i
\right\|_1
\leq
\left\|
\Delta\mathbf{F}_i-\mathbf{R}_i
\right\|_1
+
\varepsilon_i.
\label{eq:supp_error_3}
\end{equation}
}

\revision{
Substituting Eq.~(\ref{eq:supp_error_3}) into Eq.~(\ref{eq:supp_error_1}) gives
\begin{equation}
\boxed{
\left\|
\mathbf{F}_k-\mathbf{V}_k
\right\|_1
\leq
\left\|
\mathbf{F}_0-\mathbf{V}_0
\right\|_1
+
\sum_{i=0}^{k-1}
\left\|
\Delta\mathbf{F}_i-\mathbf{R}_i
\right\|_1
+
\sum_{i=0}^{k-1}
\varepsilon_i
}.
\label{eq:supp_frame_error_bound}
\end{equation}
}

\revision{
Since all terms in the residual loss are non-negative, for any
$k\in\{0,\ldots,F-1\}$, we have
\begin{equation}
\sum_{i=0}^{k-1}
\left\|
\Delta\mathbf{F}_i-\mathbf{R}_i
\right\|_1
\leq
\sum_{i=0}^{F-2}
\left\|
\Delta\mathbf{F}_i-\mathbf{R}_i
\right\|_1
=
\mathcal{L}_{\mathrm{residual}},
\label{eq:supp_partial_residual_bound}
\end{equation}
where the summation is understood to be zero when $k=0$.
Therefore, Eq.~(\ref{eq:supp_frame_error_bound}) further yields
\begin{equation}
\boxed{
\left\|
\mathbf{F}_k-\mathbf{V}_k
\right\|_1
\leq
\left\|
\mathbf{F}_0-\mathbf{V}_0
\right\|_1
+
\mathcal{L}_{\mathrm{residual}}
+
\sum_{i=0}^{k-1}
\varepsilon_i
}.
\label{eq:supp_global_error_bound}
\end{equation}
}

\revision{
If the residual-estimation error is uniformly bounded by
\begin{equation}
\varepsilon_i\leq\varepsilon,
\qquad
\forall i\in\{0,\ldots,F-2\},
\end{equation}
we further obtain
\begin{equation}
\boxed{
\left\|
\mathbf{F}_k-\mathbf{V}_k
\right\|_1
\leq
\left\|
\mathbf{F}_0-\mathbf{V}_0
\right\|_1
+
\mathcal{L}_{\mathrm{residual}}
+
k\varepsilon
}.
\label{eq:supp_uniform_error_bound}
\end{equation}
}

\revision{
The bound in Eq.~(\ref{eq:supp_frame_error_bound}) explicitly separates three sources of reconstruction error:
\begin{equation}
\underbrace{
\left\|
\mathbf{F}_0-\mathbf{V}_0
\right\|_1
}_{\text{initial-frame error}}
+
\underbrace{
\sum_{i=0}^{k-1}
\left\|
\Delta\mathbf{F}_i-\mathbf{R}_i
\right\|_1
}_{\text{event-guided residual inconsistency}}
+
\underbrace{
\sum_{i=0}^{k-1}
\varepsilon_i
}_{\text{residual-estimation error}}.
\end{equation}
}

\revision{
The first term is particularly important for event-only reconstruction.
Since event streams mainly describe relative intensity changes rather than absolute scene appearance, the initial frame cannot, in general, be uniquely recovered from event information alone.
Consequently, the proposed inter-frame residual guidance does not eliminate the ambiguity in the initial frame.
Instead, the pre-trained video diffusion model provides a generative prior for recovering plausible absolute appearance, while the event-derived residual guidance constrains the temporal changes between consecutive reconstructed frames.
}

\revision{
According to Eq.~(\ref{eq:supp_global_error_bound}), for fixed initial-frame error and residual-estimation error, reducing
$\mathcal{L}_{\mathrm{residual}}$
tightens the derived upper bound on the reconstruction error of the corresponding decoded frame sequence.
This analysis does not imply that each gradient update necessarily decreases the actual reconstruction error, nor does it guarantee a monotonic improvement in the final generation quality.
Instead, it shows that the event-guided residual inconsistency explicitly contributes to an upper bound on the frame reconstruction error, thereby providing an error-propagation motivation for the proposed inter-frame residual guidance.
}

\section{Details of Datasets}
\label{Sec:dataset_app}
% 
The training of the network relies on a sufficiently large collection of event sequences paired with corresponding ground-truth video frames.
% 
However, due to the limited availability of real-world datasets, we employ the event simulator to synthesize a large-scale training dataset from existing video data.

Specifically, we select 1,800 video frame sequences which have large motion from TrackingNet~\cite{muller2018trackingnet}—a large-scale object tracking dataset that has an extensive collection of real-world scenarios captured from YouTube videos.
% 
Each sequence comprises approximately 400 to 500 frames.
% 
Subsequently, we employ the DVS-Voltmeter~\cite{lin2022dvs} to simulate event streams between consecutive frames, thereby creating synthetic event data.
% 
Unlike previous works that generate synthetic datasets from MS-COCO~\cite{lin2014microsoft}, which only contains globally homographic motion, our dataset is simulated from real-world videos and incorporates both globally homomorphic motion and locally independent motion.
% 
This diversity can enable the network to generalize effectively to the various camera movements and scene variations encountered in real-world scenarios.

To ensure an accurate assessment of the algorithm’s performance in reconstructing video frames, we also construct a synthetic test set and a real-world test set. 
% 
The synthetic test set is created by selecting 212 video frame sequences from TrackingNet~\cite{muller2018trackingnet}, with each sequence containing 12 frames. 
% 
In contrast, the real-world test set, consisting of video sequences (each with 12 frames), is collected from the High-Speed Events and RGB (HS-ERGB)~\cite{tulyakov2021time} dataset, which contains synchronized event data and RGB frames recorded from a hybrid camera system.

% Notably, due to the architectural constraints of the diffusion model—where both the autoencoder and U-Net require the height and width of the inputs to be multiples of 8, and considering memory limitations, all event and video frames with $F=12$ are resized to a spatial resolution of $320\times448$ during training and inference phases of our method.

\section{Training Strategy for the Inter-Frame Residual Estimator}
\label{sec:ifre}

First, we select event streams and their corresponding RGB video frames from the synthetic dataset’s training set, and convert each event stream into a three‑channel event representation; 
% 
then, this representation is fed into a ResNet backbone (identical to the standard ResNet except that its output layer is modified to predict pixel‑level residuals between consecutive frames). 

During training, we supervise the network with the pixel‑wise difference between consecutive video frames, $\mathbf{V}_{t+1} - \mathbf{V}_t$, and optimize it by minimizing the $L_1$ loss between the predicted and ground‑truth residuals.
% 
The network is trained for 9,000 iterations with the Adam optimizer ($\beta_1=0.9,\ \beta_2=0.999$) at a fixed learning rate of $1\times10^{-4}$. 
% 
This setup ensures the estimator effectively learns the mapping from event representations to inter-frame residuals, providing accurate priors for the subsequent residual guidance.

\clearpage
% \newpage
\section{Detailed Experiment Settings for Video Frame Interpolation and Prediction\noindent}
% \vspace{-100pt}
\label{Sec:unified_setup}

Given a sequence of twelve consecutive frames from the test set, the network takes the reference frames in the case of video frame interpolation (VFI) to reconstruct the intermediate frames, or the first frame in the case of video frame prediction (VFP) to generate the subsequent frames.

For the interpolation task, to ensure compatibility with different interpolation factors supported by various pretrained baselines, we introduce two temporal up‑sampling settings during evaluation: VFI-4$\times$ and VFI-11$\times$.
% 
Specifically, in the VFI-4$\times$ setting, we use frames at indices {0, 4, 8} as references and interpolate six intermediate frames (three intermediate frames between each pair. i.e., frames 1–3 for {0, 4} and 5–7 for {4, 8}). 
% 
In the VFI-11$\times$ setting, we interpolate ten frames between frames at indices {0, 11}.
% 
For the prediction task, we use the first frame at index 0 to generate the subsequent eleven frames at indices from 1 to 11.
% 
After obtaining the intermediate or subsequent frames from the network output, we compare them with the corresponding ground-truth frames from the test set to compute MSE, SSIM, and LPIPS.

\section{Additional Ablation Studies}
\label{Sec:additional_ablation_studies}

\noindent
\textbf{Weighting Coefficient Scheduling.}
% 
In video frame interpolation tasks, we compared two weight‐scheduling strategies: 1) Nonlinear schedule, as defined by the formula in the main text; 2) Linear schedule, i.e., linearly decreasing (increasing) from 1.0 (0.0) to 0.0 (1.0). The quantitative results are presented in~\tableref{table:vfi:ablation:synthetic:weighting coefficient}. 
% 
As shown in the table, the nonlinear schedule generally achieves lower MSE and LPIPS scores while maintaining competitive SSIM, indicating a more effective fusion of reference‐frame
% 
information compared to the linear schedule. In particular, the ``Nonlinear 1.00 $\rightarrow$ 0.00'' configuration yields the best overall performance , consistently reducing distortion and perceptual error. 
% 
Therefore, comprehensively considering MSE, SSIM, and LPIPS, we adopt ``Nonlinear 1.00 $\rightarrow$ 0.00'' as the default setting.

\vspace{0.5em}
\noindent
\textbf{Long-Sequence Event-Based Video Reconstruction.}
% 
To further evaluate our method on long sequences, we followed the same data collection setup in main paper. The long-sequence synthetic test set was built by sampling 200 video sequences from TrackingNet~\cite{muller2018trackingnet}, each containing 23 frames.
% 
For UniE2F, we processed the sequences using a sliding window approach. 
% 
We first performed event-to-frame reconstruction on the first 12 frames of each sequence, obtaining a 12-frame reconstructed sequence. Then, starting from the 12th frame, we used the following events and applied our prediction mode to generate the subsequent 11 frames.
% 
We compared UniE2F with recent event-based video frame reconstruction methods on this setting. As shown in~\tableref{table:long_sequence}, our method achieves better performance across the reported metrics, indicating that UniE2F maintains stronger reconstruction quality and temporal stability even when the sequence length is increased.

\vspace{0.5em}
\noindent
\textbf{Event-Driven Interpolation at Arbitrary Timestamps.}
% 
Here, we provided the experiment to evaluate the interpolation capability of our UniE2F at arbitrary timestamps. Concretely, we took a sequence of 12 frames. Concretely, we take a sequence of 12 frames $\{ V_0, V_1, \dots, V_{11} \}$ from the dataset and the 12 corresponding event groups $\{ \mathcal{G}_{0}, \mathcal{G}_{1}, \dots, \mathcal{G}_{11} \}$. Then, for the interpolation task, we \textbf{focused only on the frames between the first and last frames}, i.e., the 10 event groups $\{\mathcal{G}_{1}, \mathcal{G}_{2}, \dots, \mathcal{G}_{10} \}$ associated with $\{V_1, V_2, \dots, V_{10} \}$. These 10 intermediate event groups were re-partitioned into 7 new event groups, each covering a new temporal sub-interval between $V_0$ and $V_{10}$. Together with the original event groups ($\mathcal{G}_{0}$ and $\mathcal{G}_{11}$) aligned with the first and last frames ($V_0$ and $V_{11}$), thus we can obtained a total of 9 event groups.

Under this setting, we applied the interpolation scheme conditioned on the new sequence of 9 event groups. As shown in~\figref{fig:vfi:interpolation_at_arbitrary_timestamp}, UniE2F successfully reconstructs the video frame at the \emph{end timestamp} of the intermediate event group, even though the temporal duration of this event group has been changed. This demonstrates that, by flexibly adjusting the temporal duration of the event group at inference time, our UniE2F can generate frames at arbitrary user-specified timestamps, rather than being restricted to a fixed, pre-defined set of timestamps.

\vspace{0.5em}
\noindent
\textbf{Reconstruction under Sparse Event Streams.}
% 
To further analyze the limitations of event-based video reconstruction, we investigated the behavior of UniE2F and competing methods in scenarios with extremely sparse events. \figref{fig:vfr:ablation:sparse event} presents qualitative comparisons of reconstructed frames under such sparse-event conditions. As can be seen from the figure, when the event stream becomes highly sparse, all event-based frame reconstruction methods—including ours—can only reliably recover structures in the vicinity of pixels where events are triggered. Regions without event activity provide little or no informative signal, and therefore cannot be faithfully reconstructed by any of these approaches. This illustrates a fundamental limitation shared by event-only reconstruction methods: in the absence of sufficient event observations, the model is unable to infer fine-grained details in event-free areas.

\vspace{0.5em}
\noindent
\textbf{Video Frame Interpolation/Prediction without Event.}
% 
To illustrate the contribution of event data, we conducted experiments to evaluate the performance of UniE2F with only image input in interpolation and prediction tasks, i.e., without using events as conditional guidance. The qualitative results on synthetic datasets are shown in \tableref{table:exp-vfi_and_vfp_without_event} and \figref{fig:exp-vfi_and_vfp_without_event}. As can be seen from the table and figure, UniE2F has learned during training to map event representations to RGB frames. When we remove the event input and, following SVD, condition the model only on RGB frames that lie in a different domain, the network effectively applies this learned event-to-RGB mapping to RGB-to-RGB conditioning instead. This mismatch causes strong artifacts and disturbed appearance patterns. This demonstrates the critical role that event data plays in providing precise constraints, enabling the model to reconstruct intermediate or subsequent frames by adjusting the score function, even without explicit training for interpolation and prediction tasks.

\vspace{0.5em}
\noindent
\revision{\textbf{Analysis of the Synthetic-to-Real Performance Gap.}}
% 
\revision{
To further investigate the source of the performance gap between synthetic and real-world data in the zero-shot video frame interpolation and prediction tasks, we introduce three additional test settings beyond the original evaluation setting. As summarized in~\tableref{table:domain-gap-settings}, these settings are designed to separately examine the effects of event threshold variation, background activity noise, and the discrepancy between simulated and real event streams.
}

\begin{table*}[t]
\small
\centering
\caption{\revision{Summary of the test settings used to analyze the synthetic-to-real performance gap.}}
\setlength{\tabcolsep}{18pt}
\begin{tabular}{
>{\centering\arraybackslash}m{2.5cm}|
>{\centering\arraybackslash}m{1.5cm}|
>{\arraybackslash}m{9.5cm}
}
\toprule
Setting & Data Source & Modification \\
\hline
\rowcolor[rgb]{.95,.95,.95}
Threshold Variation
& Synthetic
& Scale the original event-generation threshold by factors of $2$ and $3$ to simulate variations in the contrast threshold of real event cameras. \\
\hline
Noise Injection
& Synthetic
& Inject background activity noise with average densities of $0.5$ and $1.0$ events/pixel per inter-frame interval. The number of injected events in each inter-frame interval is sampled from a Poisson distribution with expectation $dHW$, where \(d\in\{0.5,1.0\}\) denotes the noise density and \(H\) and \(W\) denote the spatial resolution. \\
\hline
\rowcolor[rgb]{.95,.95,.95}
Synthetic Events on Real RGB
& HS-ERGB
& Retain the original real-world RGB frames while replacing the corresponding real event streams with synthetic events generated using the same event simulator adopted for our synthetic training data. \\
\bottomrule
\end{tabular}
\label{table:domain-gap-settings}
\vspace{-10pt}
\end{table*}

\revision{
For the noise-injection setting, since the synthetic RGB frames do not contain physical timestamps, we assign a fixed temporal interval between consecutive frames solely to construct the event-simulation timeline.
}

\revision{
As shown in~\tableref{table:domain-gap-threshold} and~\tableref{table:domain-gap-noise}, both event threshold variations and event noise degrade the performance of \methodname, indicating that changes in the event distribution can affect the event-to-frame mapping learned from synthetic training data. However, the influence of event noise is relatively limited compared with that of threshold variations, particularly in terms of MSE and SSIM. For example, increasing the event threshold scaling factor from $1$ to $3$ increases the MSE from $0.0072$ to $0.0245$ for VFI-$11\times$ and from $0.0093$ to $0.0314$ for VFP, accompanied by substantial decreases in SSIM. In comparison, injecting background activity noise with a density of $1.0$ event/pixel per inter-frame interval only increases the corresponding MSE values to $0.0078$ and $0.0101$, respectively. These results suggest that variations in the event-generation threshold, which directly alter the number and spatiotemporal distribution of generated events, constitute an important source of the synthetic-to-real performance gap.
}

\begin{table}[t]
\small
\centering
\caption{\revision{Quantitative comparison of UniE2F on the synthetic dataset with different event threshold scaling factors for VFI-$11\times$ and VFP. MSE is reported in units of $10^{-2}$.}}
\setlength{\tabcolsep}{7pt}
\begin{tabular}{c|c|c|c|c}
\toprule
\multirow{2}{*}{\makecell{Event Threshold\\Scaling Factor}} & \multirow{2}{*}{Mode} & \multicolumn{3}{c}{Synthetic} \\
\cline{3-5}
& & MSE~$\downarrow$ & SSIM~$\uparrow$ & LPIPS~$\downarrow$ \\
\hline
\rowcolor[rgb]{ .95,  .95,  .95}
1 & VFI-$11\times$ & \textbf{0.72} & \textbf{0.7400} & \textbf{0.3200} \\
2 & VFI-$11\times$ & 1.61 & 0.5850 & 0.4030 \\
\rowcolor[rgb]{ .95,  .95,  .95}
3 & VFI-$11\times$ & 2.45 & 0.4810 & 0.4680 \\
\hline
1 & VFP & \textbf{0.93} & \textbf{0.7100} & \textbf{0.3470} \\
\rowcolor[rgb]{ .95,  .95,  .95}
2 & VFP & 2.01 & 0.5560 & 0.4260 \\
3 & VFP & 3.14 & 0.4470 & 0.4800 \\
\bottomrule
\end{tabular}
\label{table:domain-gap-threshold}
\vspace{-10pt}
\end{table}

\begin{table}[t]
\small
\centering
\caption{\revision{Quantitative comparison of UniE2F on the synthetic dataset under different event-noise densities for VFI-$11\times$ and VFP. Noise density is measured in events/pixel per inter-frame interval. MSE values are reported in units of $10^{-2}$.}}
\setlength{\tabcolsep}{7pt}
\begin{tabular}{c|c|c|c|c}
\toprule
\multirow{2}{*}{Noise Density} & \multirow{2}{*}{Mode} & \multicolumn{3}{c}{Synthetic} \\
\cline{3-5}
& & MSE~$\downarrow$ & SSIM~$\uparrow$ & LPIPS~$\downarrow$ \\
\hline
\rowcolor[rgb]{ .95,  .95,  .95}
0 & VFI-$11\times$ & \textbf{0.72} & \textbf{0.7400} & \textbf{0.3200} \\
0.5 & VFI-$11\times$ & 0.76 & 0.7220 & 0.3300 \\
\rowcolor[rgb]{ .95,  .95,  .95}
1.0 & VFI-$11\times$ & 0.78 & 0.7130 & 0.3390 \\ 
\hline
0 & VFP & \textbf{0.93} & \textbf{0.7100} & \textbf{0.3470} \\ 
\rowcolor[rgb]{ .95,  .95,  .95}
0.5 & VFP & 0.98 & 0.6900 & 0.3590 \\
1.0 & VFP & 1.01 & 0.6780 & 0.3690 \\
\bottomrule
\end{tabular}
\label{table:domain-gap-noise}
\end{table}

\revision{
More importantly, we further investigate whether the performance gap is primarily associated with the RGB scene content itself or with the discrepancy between simulated and real event streams. To this end, we retain the original RGB video frames from the real-world HS-ERGB~\cite{tulyakov2021time} test set but replace the corresponding real event streams with synthetic events generated using the same event simulator as that used for our synthetic training data.
}

\revision{
As shown in~\tableref{table:domain-gap-real-synthetic-events}, replacing the real events with synthetic events substantially improves the performance for both VFI-$11\times$ and VFP. Specifically, the SSIM increases from $0.6500$ to $0.7750$ for VFI-$11\times$ and from $0.5940$ to $0.7320$ for VFP. The corresponding MSE also decreases from $0.0058$ to $0.0022$ for VFI-$11\times$ and from $0.0100$ to $0.0033$ for VFP. Since the underlying RGB content remains unchanged in this comparison, the improvement can be mainly attributed to the difference between real and simulated event streams rather than to the visual content itself.
}

\begin{table}[t]
\small
\centering
\caption{\revision{Quantitative comparison of UniE2F on different datasets for VFI-$11\times$ and VFP.
MSE is reported in units of $10^{-2}$.
Here, ``$\dagger$'' denotes the real-world dataset with synthetically generated event streams.}
}
\setlength{\tabcolsep}{8pt}
\begin{tabular}{c|c|c|c|c}
\toprule
Dataset & Mode & MSE~$\downarrow$ & SSIM~$\uparrow$ & LPIPS~$\downarrow$ \\
\hline
\rowcolor[rgb]{ .95,  .95,  .95}
Synthetic & VFI-$11\times$ & 0.72 & 0.7400 & \textbf{0.3200} \\
Real-World & VFI-$11\times$ & 0.58 & 0.6500 & 0.4000 \\
\rowcolor[rgb]{ .95,  .95,  .95}
Real-World$^{\dagger}$ & VFI-$11\times$ & \textbf{0.22} & \textbf{0.7750} & 0.3290 \\
\hline
Synthetic & VFP & 0.93 & 0.7100 & 0.3470 \\
\rowcolor[rgb]{ .95,  .95,  .95}
Real-World & VFP & 1.00 & 0.5940 & 0.4110 \\
Real-World$^{\dagger}$ & VFP & \textbf{0.33} & \textbf{0.7320} & \textbf{0.3440} \\
\bottomrule
\end{tabular}
\label{table:domain-gap-real-synthetic-events}
\vspace{-10pt}
\end{table}

\revision{
Taken together, these results indicate that the synthetic-to-real performance gap mainly arises from differences in the event-generation mechanism. In particular, event-threshold variation is an important contributing factor, whereas background activity noise has a relatively smaller influence under the tested settings. Differences between simulated and real event-generation processes can cause the same underlying intensity changes to produce different event counts and spatiotemporal distributions, resulting in a distribution shift in the event representations provided to the model and consequently degrading cross-domain generalization.
}

\vspace{0.5em}
\noindent
\revision{\textbf{Luminance-Space Evaluation.}}
% 
\revision{
Since events encode only relative intensity changes and do not provide absolute chromatic information, the RGB appearance generated by UniE2F is partly inferred from the generative prior of the pretrained SVD model. Consequently, the reconstructed colors can be perceptually plausible and consistent with natural-video statistics, but they are not guaranteed to exactly match the ground-truth scene colors. This ambiguity may affect RGB-space metrics such as MSE, SSIM, LPIPS, and FID, because a reconstruction with correct structures and motion but different plausible colors can still receive a relatively large penalty. 
}

\revision{
Thus, to further evaluate the reconstruction performance of UniE2F and the competing methods under a common modality while excluding the influence of chromatic differences, we  convert both the reconstructed images and the RGB ground-truth frames into a common luminance space and compute MSE, SSIM, and LPIPS. The quantitative results are reported in~\tableref{table:4-3-1}. 
}

\revision{
As shown, our method still outperforms the competing methods under this luminance-space evaluation. These results demonstrate that the advantage of UniE2F is not solely due to the RGB generative prior; instead, it also achieves better reconstruction fidelity in terms of scene structure and intensity variation when color information is excluded from the evaluation.
}

\begin{table*}[t]
\small
\centering
\caption{\revision{Quantitative comparison with state-of-the-art event-based video frame reconstruction methods in the luminance space on real-world and synthetic datasets. MSE is reported in units of $10^{-2}$. The best result in each column is highlighted in \textbf{bold}.}}
\setlength{\tabcolsep}{19pt}
\begin{tabular}{l|c|c|c|c|c|c}
\toprule
\multirow{2}{*}{Method} 
& \multicolumn{3}{c|}{Real-World} 
& \multicolumn{3}{c}{Synthetic} \\
\cline{2-7}
& MSE~$\downarrow$ 
& SSIM~$\uparrow$ 
& LPIPS~$\downarrow$ 
& MSE~$\downarrow$ 
& SSIM~$\uparrow$ 
& LPIPS~$\downarrow$ \\
\hline
\rowcolor[rgb]{ .95, .95, .95}
E2VID~\cite{rebecq2019high} & 12.51 & 0.4400 & \textbf{0.5630} & 6.52 & 0.5220 & 0.4280 \\
FireNet~\cite{scheerlinck2020fast} & 11.84 & 0.4340 & 0.5870 & 5.69 & 0.5770 & 0.4040 \\
\rowcolor[rgb]{ .95,  .95,  .95}
E2VID+~\cite{stoffregen2020reducing} & 6.32 & 0.4620 & 0.5720 & 5.22 & 0.5320 & 0.4240 \\
FireNet+~\cite{stoffregen2020reducing} & 7.17 & 0.4090 & 0.6200 & 5.45 & 0.4680 & 0.4780 \\
\rowcolor[rgb]{ .95,  .95,  .95}
ETNet~\cite{weng2021event} & 8.30 & 0.4460 & 0.5990 & 4.88 & 0.5300 & 0.4330 \\
SSL-E2VID~\cite{paredes2021back} & 6.73 & 0.5210 & 0.6050 & 9.88 & 0.3790 & 0.5260 \\
\rowcolor[rgb]{ .95,  .95,  .95}
SPADE-E2VID~\cite{cadena2021spade} & 9.75 & 0.4700 & 0.5870 & 6.94 & 0.4510 & 0.4890 \\
CUBE~\cite{zhao2024controllable} & 8.17 & 0.3770 & 0.6200  & 12.60 & 0.2080 & 0.6870 \\
\rowcolor[rgb]{ .95,  .95,  .95}
HyperE2VID~\cite{ercan2024hypere2vid} & 7.13 & 0.4040 & 0.5890 & 6.11 & 0.4930 & 0.4480 \\
\textbf{\methodname~(Ours)} & \textbf{5.58} & \textbf{0.5280} & 0.5720 & \textbf{1.12} & \textbf{0.7490} & \textbf{0.2840} \\
\bottomrule
\end{tabular}
\label{table:4-3-1}
\vspace{-10pt}
\end{table*}

\begin{table}[t]
\small
\centering
\caption{Quantitative comparison of different weight-scheduling strategies for video frame interpolation (VFI), evaluated using MSE, SSIM, and LPIPS.
MSE is reported in units of $10^{-2}$.
}
\setlength{\tabcolsep}{7.5pt}
\begin{tabular}{c|c|c|c|c}
\toprule
\multirow{2}{*}{Mode} & \multirow{2}{*}{\shortstack{Weighting\\Coefficient}} & \multicolumn{3}{c}{Synthetic} \\
\cline{3-5}
& & MSE~$\downarrow$ & SSIM~$\uparrow$ & LPIPS~$\downarrow$ \\
\hline
\rowcolor[rgb]{ .95,  .95,  .95}
Linear & 0.50 $\rightarrow$ 0.50 & 0.80 & 0.7110 & 0.3460\\
Linear & 0.00 $\rightarrow$ 1.00 & 0.82 & 0.7030 & 0.3540\\
\rowcolor[rgb]{ .95,  .95,  .95}
Linear & 1.00 $\rightarrow$ 0.00 & 0.76 & \textbf{0.7450} & 0.3220\\
Nonlinear & 0.00 $\rightarrow$ 1.00 & 0.90 & 0.6960 & 0.3630\\
\rowcolor[rgb]{ .95,  .95,  .95}
Nonlinear & 1.00 $\rightarrow$ 0.00 & \textbf{0.72} & 0.7400 & \textbf{0.3200}\\
\bottomrule
\end{tabular}
\label{table:vfi:ablation:synthetic:weighting coefficient}
\vspace{-15pt}
\end{table}

\begin{table}[!htbp]
\small
\centering
\caption{
Quantitative results for long-sequence generation on synthetic dataset.
MSE is reported in units of $10^{-2}$.
}
\setlength{\tabcolsep}{11pt}
\begin{tabular}{l|c|c|c}
\toprule
Method & MSE~$\downarrow$ & SSIM~$\uparrow$ & LPIPS~$\downarrow$ \\
\hline
\rowcolor[rgb]{ .95,  .95,  .95}
SSL-E2VID~\cite{paredes2021back} & 9.06 & 0.4240 & 0.6310 \\
SPADE-E2VID~\cite{cadena2021spade} & 6.82 & 0.5200 & 0.5680 \\
\rowcolor[rgb]{ .95,  .95,  .95}
HyperE2VID~\cite{ercan2024hypere2vid} & 5.91 & 0.4960 & 0.5700 \\
\textbf{\methodname~(Ours)} & \textbf{2.71} & \textbf{0.6530} & \textbf{0.4470} \\
\bottomrule
\end{tabular}
\vspace{-5pt}
\label{table:long_sequence}
\end{table}

\begin{figure}[]
\centering
\includegraphics[width=1.0\linewidth]{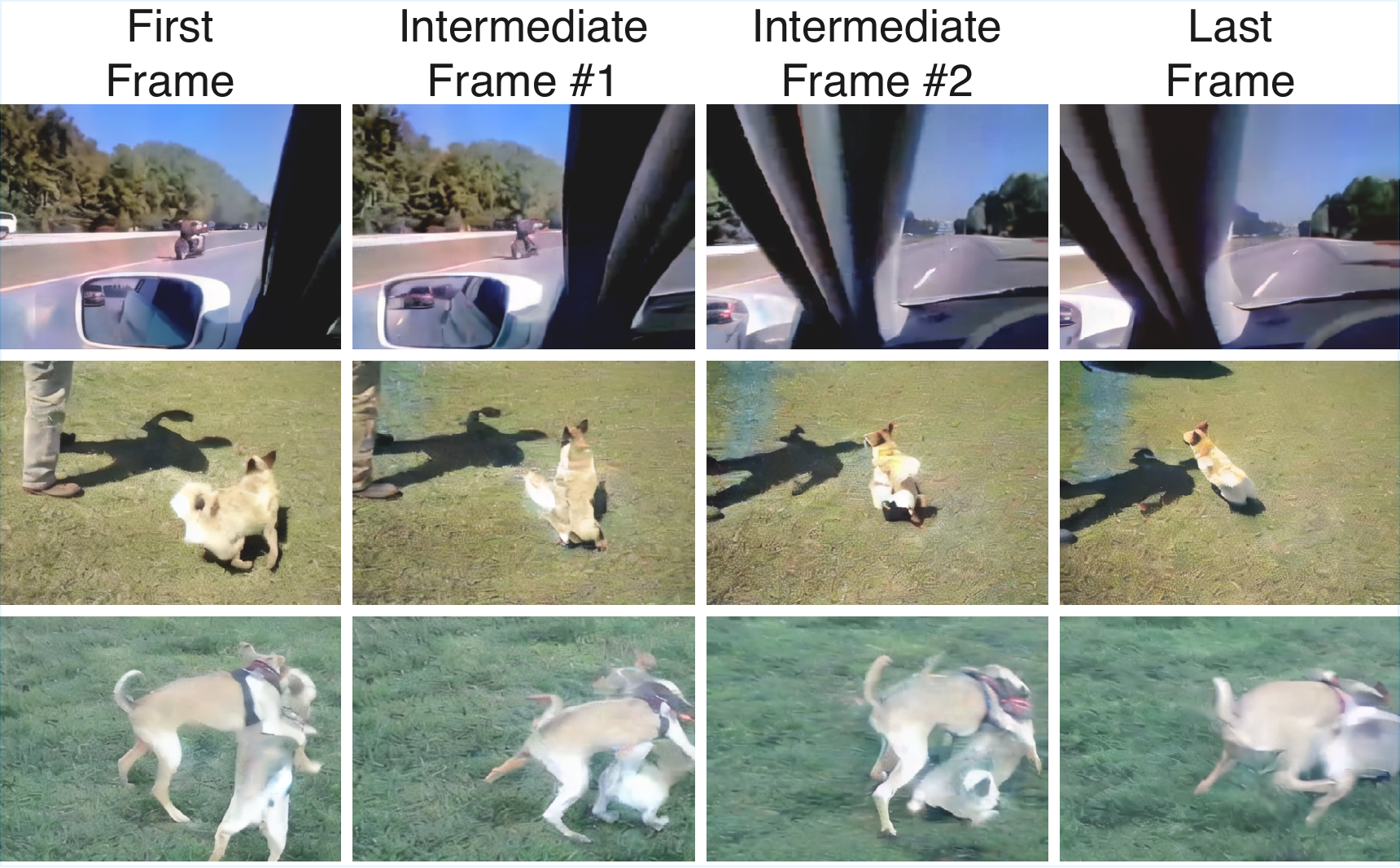}
\caption{Event-driven interpolation at arbitrary timestamps.}
\label{fig:vfi:interpolation_at_arbitrary_timestamp}
\end{figure}

\begin{figure*}[!htbp]
\centering
\includegraphics[width=1.0\linewidth]{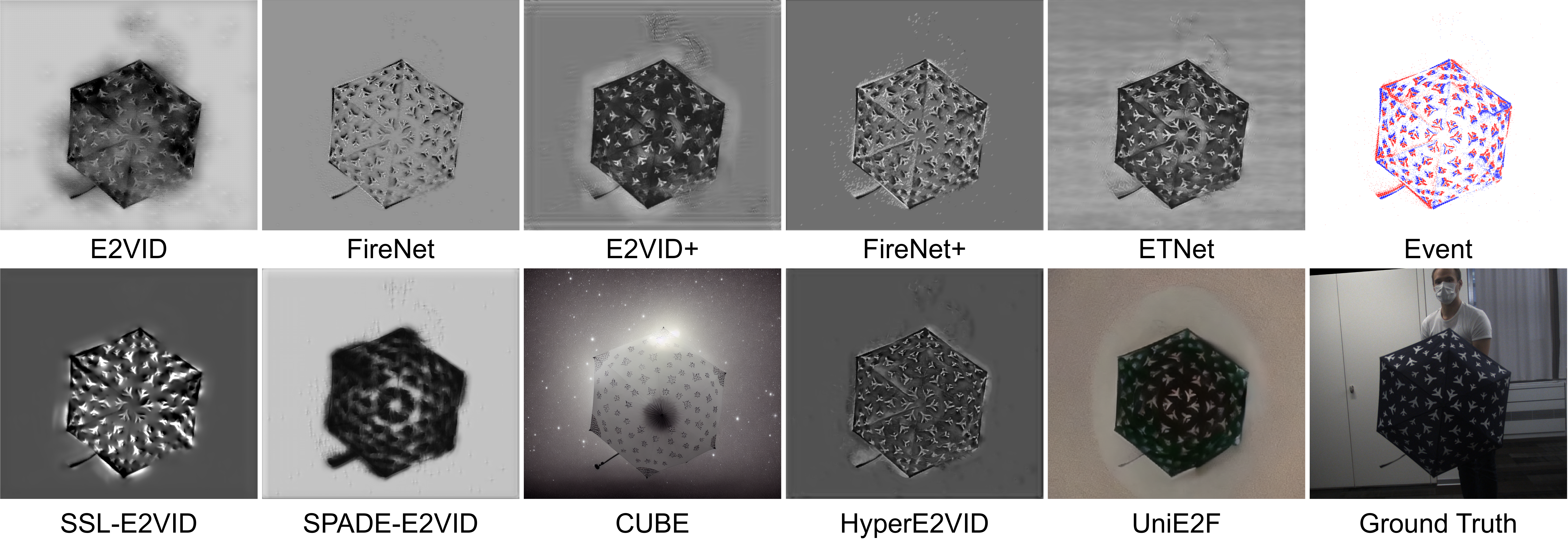}
\caption{Reconstruction under sparse event streams. All event-based methods recover structures near event pixels, but fail to reconstruct regions without events.}
\label{fig:vfr:ablation:sparse event}
\end{figure*}

\begin{table}[!htbp]
\small
\centering
\caption{Quantitative comparison of UniE2F with and without event guidance for prediction and interpolation.
MSE is reported in units of $10^{-2}$.
}
\setlength{\tabcolsep}{9pt}
\begin{tabular}{l|c|c|c}
\toprule
Task & MSE~$\downarrow$ & SSIM~$\uparrow$ & LPIPS~$\downarrow$ \\
\hline
\rowcolor[rgb]{ .95,  .95,  .95}
Prediction (w/ Event) & \textbf{0.93} & \textbf{0.7100} & \textbf{0.3470} \\
Prediction (w/o Event) & 15.63 & 0.107 & 0.658 \\
\hline
\rowcolor[rgb]{ .95,  .95,  .95}
Interpolation (w/ Event) & \textbf{0.72} & \textbf{0.7400} & \textbf{0.3200} \\
Interpolation (w/o Event) & 14.37 & 0.119 & 0.633 \\
\bottomrule
\end{tabular}
\vspace{-5pt}
\label{table:exp-vfi_and_vfp_without_event}
\end{table}

\begin{figure*}[htp]
\centering
\includegraphics[width=1.0\linewidth]{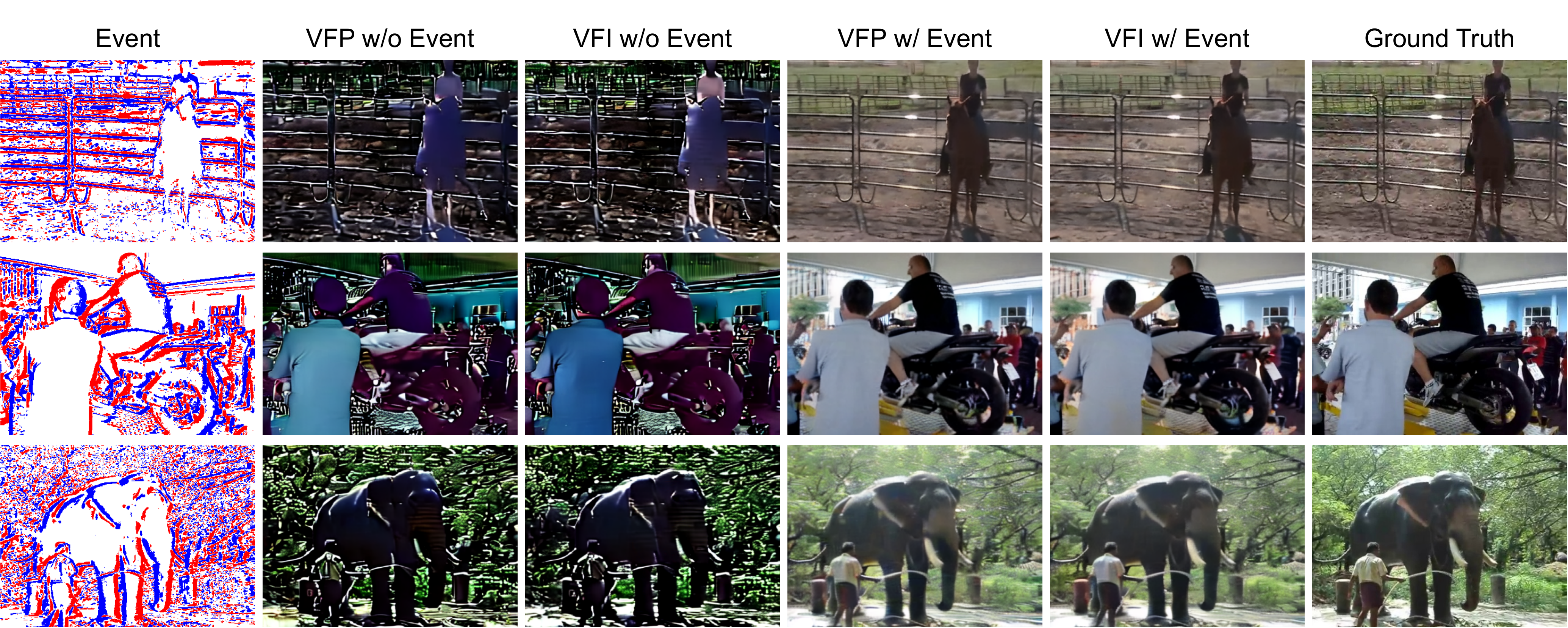}
\caption{Video frame interpolation/prediction without events.}
\label{fig:exp-vfi_and_vfp_without_event}
\vspace{-15pt}
\end{figure*}

\vspace{0.5em}
\noindent
\revision{
\textbf{Reference-Frame Guidance.}
% 
To further verify the necessity of the proposed reference-frame guidance for enabling zero-shot VFI, we construct a variant with a hard reference constraint for the VFI task. Different from our proposed method, this variant removes the diffusion-step-dependent weighting coefficient $\alpha(t)$ and directly applies the reference-frame deviations to the estimated clean latent.
Specifically, let $\mathbf{U}^{t}_{i}$ denote the estimated clean latent of the $i$-th frame at reverse diffusion step $t$. 
Following the definitions in Sec. \textcolor{blue}{IV-C} of main paper, the deviations between the encoded reference frames and their corresponding estimated clean latent are defined as
\begin{equation}
\begin{aligned}
\mathbf{D}^{t}_{0} &= \mathcal{E}_{rgb}(\mathbf{V}_{0})-\mathbf{U}^{t}_{0},\\
\mathbf{D}^{t}_{F-1} &= \mathcal{E}_{rgb}(\mathbf{V}_{F-1})-\mathbf{U}^{t}_{F-1},
\end{aligned}
\end{equation}
where $\mathcal{E}_{rgb}(\mathbf{V}_{0})$ and $\mathcal{E}_{rgb}(\mathbf{V}_{F-1})$ denote the clean latents encoded from the first and last reference frames, respectively.
The hard-constraint variant then directly sets the clean latent expectation of the $i$-th frame at each reverse diffusion step as
\begin{equation}
\tilde{\mathbf{U}}^{t}_{i}
=
\frac{
(\mathbf{D}^{t}_{0}+\mathbf{U}^{t}_{i})
+
(\mathbf{D}^{t}_{F-1}+\mathbf{U}^{t}_{i})
}{2}.
\end{equation}
That is, it directly adopts the mean of the latent estimates corrected by the deviations from the first and last reference frames, without weighted fusion with the original clean latent expectation $\mathbf{U}^{t}_{i}$. 
This variant can therefore be regarded as a direct reference-frame constraint, allowing us to examine whether relying solely on reference-frame correction is sufficient to achieve stable and accurate zero-shot VFI.
}

\revision{
As shown in~\tableref{table:2-7-1}, compared with the proposed reference-frame guidance, this hard-constraint variant exhibits noticeably degraded performance in terms of MSE, SSIM, and LPIPS. This is because, at the early stages of reverse diffusion, the original clean latent estimation $\mathbf{U}^{t}$ still contains considerable uncertainty. Directly modulating this clean latent expectation using the reference-frame correction at this stage may over-constrain the sampling trajectory, preventing the diffusion model from progressively recovering reasonable states for the entire frame sequence based on the event information and its generative prior. 
}

\begin{table}[t]
\small
\centering
\caption{\revision{Quantitative comparison of the proposed reference-frame guidance and the hard reference constraint for zero-shot video frame interpolation.
MSE is reported in units of $10^{-2}$.}
}
\setlength{\tabcolsep}{7pt}
\begin{tabular}{c|c|c|c}
\toprule
\multirow{2}{*}{Reference Modulation} & \multicolumn{3}{c}{Synthetic} \\
\cline{2-4}
& MSE~$\downarrow$ & SSIM~$\uparrow$ & LPIPS~$\downarrow$ \\
\hline
\rowcolor[rgb]{ .95,  .95,  .95}
Hard Reference Constraint & 0.80 & 0.7000 & 0.3550 \\
Proposed Modulation (Ours) & \textbf{0.72} & \textbf{0.7400} & \textbf{0.3200} \\
\bottomrule
\end{tabular}
\label{table:2-7-1}
\vspace{-10pt}
\end{table}

\vspace{0.5em}
\noindent
\textbf{Robustness to Event Noise.}
%
We followed the setting in the SVD model, wherein the inference stage Gaussian noise with standard deviation 0.02 was injected into the 3-channel event representation.
% 
As a reference, we regarded this original noise injection scheme as the \textit{Baseline}.
% 
To further evaluate the robustness of our network to event noise, we introduced a noise-level coefficient $\alpha$ to scale the noise strength relative to the standard deviation of the event representation $E$: 
%
$E^\mathrm{noisy} = E_{f,c,y,x} + \mathcal{N}\!\left(0,\ (\alpha \times \sigma_E)^2\right), \quad \sigma_E = \mathrm{std}(E)$.
By varying $\alpha$, we systematically controlled the injected noise magnitude. 
% 
From~\tableref{table:vfr:ablation:synthetic:noise‐level coefficient}, we observe that even under strong noise ($\alpha=1.0$), the reconstruction quality only slightly degrades: MSE rises from $0.0167$ to $0.0193$, SSIM drops from $0.7100$ to $0.6690$, and LPIPS increases from $0.3940$ to $0.4280$. 
These marginal changes demonstrate that our method maintains strong robustness to event noise.
% 
\begin{table}[t]
\small
\centering
\caption{Quantitative comparison of different noise‐level coefficients, evaluated using MSE, SSIM, and LPIPS. MSE is reported in units of $10^{-2}$. The results demonstrate that our method exhibits strong robustness to event noise.
}
% \vspace{-5pt}
\setlength{\tabcolsep}{15pt}
\begin{tabular}{c|c|c|c}
\toprule
\multirow{2}{*}{\shortstack{Noise‐level\\Coefficient}} & \multicolumn{3}{c}{Synthetic} \\
\cline{2-4}
& MSE~$\downarrow$ & SSIM~$\uparrow$ & LPIPS~$\downarrow$ \\
\hline
\rowcolor[rgb]{ .95,  .95,  .95}
1.0 & 1.93 & 0.6690 & 0.4280 \\
0.5 & 1.77 & 0.6930 & 0.4050 \\
\rowcolor[rgb]{ .95,  .95,  .95}
0.1 & 1.68 & 0.7080 & 0.3950 \\
Baseline & \textbf{1.67} & \textbf{0.7100} & \textbf{0.3940} \\
\bottomrule
\end{tabular}
\label{table:vfr:ablation:synthetic:noise‐level coefficient}
% \vspace{-5pt}
\end{table}

\vspace{0.5em}
\noindent
\textbf{Guidance Strength Scheduling.}
% 
Here, we investigated three different guidance strength strategies—linearly increasing, linearly decreasing, and constant guidance strengths—and the results show that the linearly decreasing schedule achieves the best performance. Here, to evaluate the robustness of this schedule, we introduced a non-linearly exponential decreasing strategy, where the guidance strength decreases from 0.1 to 0 following an exponential curve. The comparison results are presented in the \tableref{table:vfr:ablation:synthetic:guidance strength scheduling}. As shown, the linearly decreasing schedule still outperforms the exponentially decreasing one.
% 
\begin{table}[t]
\small
\centering
\caption{
Quantitative comparison of different guidance strength strategies, evaluated using MSE, SSIM, and LPIPS.
MSE is reported in units of $10^{-2}$.
}
% \vspace{-10pt}
\setlength{\tabcolsep}{12pt}
\begin{tabular}{c|c|c|c}
\toprule
\multirow{2}{*}{\shortstack{Guidance Strength\\ Strategy}} & \multicolumn{3}{c}{Synthetic} \\
\cline{2-4}
& MSE~$\downarrow$ & SSIM~$\uparrow$ & LPIPS~$\downarrow$ \\
\hline
\rowcolor[rgb]{ .95,  .95,  .95}
Constant & 2.34 & 0.6610 & 0.4190 \\
Exponential & 1.81 & 0.6960 & 0.3990 \\
\rowcolor[rgb]{ .95,  .95,  .95}
Linear & \textbf{1.67} & \textbf{0.7100} & \textbf{0.3940} \\
\bottomrule
\end{tabular}
\label{table:vfr:ablation:synthetic:guidance strength scheduling}
\vspace{-10pt}
\end{table}

\vspace{0.5em}
\noindent
\revision{
\textbf{Influence of Residual Prediction Network Performance.}
% 
To further analyze the influence of inter-frame residual estimator performance on the final reconstruction quality of UniE2F, we construct three residual estimators with different model capacities, namely IFRG-Tiny, IFRG-Medium, and IFRG-Large. The corresponding model sizes and residual estimation errors are reported in~\tableref{table:2-7-2}. Note that IFRG-Large corresponds to the standard residual estimator adopted in our final UniE2F model. While keeping all other settings of UniE2F unchanged, we use the inter-frame residuals predicted by each estimator to guide the reverse diffusion process and compare the resulting video reconstruction performance.
}

\revision{
As shown in~\tableref{table:2-7-2}, the performance of IFRG is strongly affected by the prediction capability of the inter-frame residual estimator. Compared with the baseline without IFRG, IFRG-Tiny leads to degraded reconstruction performance in terms of MSE, SSIM, and LPIPS. This indicates that inaccurate residual predictions may provide misleading constraints during reverse diffusion and consequently impair the reconstruction quality. Increasing the estimator capacity to IFRG-Medium substantially alleviates this degradation, although its performance still remains slightly inferior to the baseline without IFRG. In contrast, IFRG-Large, which achieves the lowest residual estimation error, consistently improves all three reconstruction metrics, reducing the MSE from 0.0191 to 0.0167, increasing the SSIM from 0.6880 to 0.7100, and reducing the LPIPS from 0.4020 to 0.3940. 
}

\revision{
These results demonstrate that the effectiveness of IFRG critically depends on the accuracy of inter-frame residual prediction: insufficiently accurate residual estimates may introduce incorrect guidance, whereas a sufficiently capable estimator can provide reliable inter-frame constraints and effectively improve the reconstruction quality. Therefore, we adopt IFRG-Large as the inter-frame residual estimator in the final implementation of UniE2F.
}

\begin{table}[t]
\small
\centering
\caption{
\revision{Effect of inter-frame residual estimator capacity on video reconstruction performance.
The reconstruction MSE is reported in units of $10^{-2}$.}
}
\setlength{\tabcolsep}{0pt}
\begin{tabular}{c|c|c|c|c|c}
\toprule
\multirow{2}{*}{Residual Estimator} 
& \multirow{2}{*}{Params.} 
& \multirow{2}{*}{Residual MSE~$\downarrow$}
& \multicolumn{3}{c}{Synthetic} \\
\cline{4-6}
& & 
& MSE~$\downarrow$ 
& SSIM~$\uparrow$ 
& LPIPS~$\downarrow$ \\
\hline
\rowcolor[rgb]{.95,.95,.95}
w/o IFRG 
& -- 
& -- 
& 1.91 
& 0.6880 
& 0.4020 \\
IFRG-Tiny 
& 0.074M 
& 0.16320 
& 2.25 
& 0.5810 
& 0.4630 \\
\rowcolor[rgb]{.95,.95,.95}
IFRG-Medium 
& 3.276M 
& 0.07527 
& 1.93 
& 0.6440 
& 0.4350 \\
IFRG-Large 
& 12.928M 
& \textbf{0.02594} 
& \textbf{1.67} 
& \textbf{0.7100} 
& \textbf{0.3940} \\
\bottomrule
\end{tabular}
\label{table:2-7-2}
\vspace{-5pt}
\end{table}

\section{Additional Implementation Details}
\label{Sec:additional_implementation_details}

\noindent
\revision{
\textbf{Evaluation Protocol for Cross-Modality Comparison.} When computing the RGB-space evaluation metrics, for competing methods that produce grayscale outputs, the single grayscale channel is replicated across the R, G, and B channels to form three-channel images, whereas the native RGB outputs of UniE2F are directly used for evaluation.
}

\vspace{0.5em}
\noindent
\revision{
\textbf{Event Encoder Architecture.} We adopt the same encoder architecture as the VAE encoder used in SVD. Specifically, the VAE encoder consists of an initial 3$\times$3 convolution followed by four hierarchical 2D downsampling blocks, each containing two ResNet layers, and a U-Net-style middle block with self-attention; the resulting features are further processed by GroupNorm, SiLU activation, and a 3$\times$3 convolution to parameterize the mean and log-variance of the latent distribution. Given an event-representation sequence $\mathbf{E}\in\mathbb{R}^{F \times 3\times H\times W}$, the event encoder produces the corresponding event feature
$\mathcal{E}_{event}(\mathbf{E})\in\mathbb{R}^{F\times4\times \frac{H}{8} \times \frac{W}{8}}$.
}

\vspace{0.5em}
\noindent
\revision{
\textbf{Event Representation and Preprocessing.} All methods construct their model inputs from event streams within the same temporal intervals, but we do not enforce exactly the same event representation across different methods. This is because the event representation itself is an integral part of the network design and training protocol of each method. Directly replacing the officially adopted event representation may introduce a significant distribution shift and therefore cannot faithfully reflect the performance of these methods under their original settings. Accordingly, for all baseline methods, we follow the event preprocessing and representation schemes specified in their official implementations. Specifically, \methodname~adopts the three-channel event representation described in Section III and encodes it as the conditional information for the video diffusion model. RE-VDM~\cite{chen2025repurposing} adopts the multi-stack event representation proposed in its original work. CUBE~\cite{zhao2024controllable} first divides the raw event streams into $V$ event slices along the temporal dimension and accumulates the number of events at each pixel location within each slice, followed by normalization, to construct event-edge representations for conditioning the diffusion model. Except for RE-VDM~\cite{chen2025repurposing} and CUBE~\cite{zhao2024controllable}, the other compared methods convert the raw event stream into voxel-grid representations as model inputs according to their official implementations.
}

\vspace{0.5em}
\noindent
\revision{
\textbf{Sequence Length.} For event-based video frame reconstruction, all methods are evaluated on the sequences with the same length of $12$ frames, ensuring that different methods process the same temporal intervals and the corresponding event streams. 
For VFI-$4\times$, VFI-$11\times$, and VFP, the corresponding reference frames and target frames are further selected from the same $12$-frame sequences according to the definition of each task, ensuring that all compared methods are evaluated over the same sequence length.
}

\vspace{0.5em}
\noindent
\revision{
\textbf{Input Resolution.} On the synthetic test set, all methods are evaluated using the same input resolution of $448\times320$. For the real-world data with higher spatial resolutions, since \methodname~is built upon a large-scale pretrained video diffusion model, its GPU memory consumption is substantially higher than that of conventional lightweight reconstruction networks. To avoid out-of-memory errors during inference, we first resize the input frames and their corresponding event data to $448\times320$ before feeding them into \methodname. After reconstruction, the outputs are resized back to the original spatial resolution, and the evaluation metrics are then computed against the corresponding ground-truth frames at the original spatial resolution. The other baseline methods are directly evaluated at the original spatial resolution.
}

\vspace{0.5em}
\noindent
\revision{
\textbf{Event-conditioning Mechanism.} In our method, we incorporate the encoded event information into SVD through channel-wise concatenation. Specifically, at diffusion step $t$, the event feature
$\mathcal{E}_{event}(\mathbf{E}) \in \mathbb{R}^{F \times 4 \times \frac{H}{8} \times \frac{W}{8}}$
is concatenated with the noisy video latent
$\mathbf{X}^{t} \in \mathbb{R}^{F \times 4 \times \frac{H}{8} \times \frac{W}{8}}$
along the channel dimension, resulting in an 8-channel latent input:
\begin{equation}
\operatorname{Concat}
\left[
\mathbf{X}^{t},
\mathcal{E}_{event}(\mathbf{E})
\right]
\in
\mathbb{R}^{F\times 8 \times \frac{H}{8} \times \frac{W}{8}}.
\end{equation}
This concatenated latent is directly fed into the denoising U-Net, allowing the encoded event information to condition the denoising process at each diffusion step.
}

\vspace{0.5em}
\noindent
\revision{
\textbf{Trainable Modules.} In the proposed method, all trainable parameters are restricted to three components: (i) the event encoder, (ii) the temporal transformer blocks in the SVD denoising U-Net, and (iii) the event-based inter-frame residual estimator. All other parameters of the pre-trained SVD model are kept frozen during training.
}

\vspace{0.5em}
\noindent
\revision{
\textbf{Video-Level Split of TrackingNet.} We selected 1,800 video sequences from TrackingNet to construct the training set. The selection was performed at the video level rather than by randomly sampling individual frames or clips, such that sequences used for training and testing do not overlap.
}

\revision{
To obtain a training subset covering a broad range of motion dynamics, we use the normalized average optical-flow magnitude as a measure of motion magnitude. Specifically, for each video, we first estimate the dense optical flow between every pair of adjacent frames using the optical-flow estimator. For an adjacent frame pair $(I_j,I_{j+1})$, given the estimated flow field
$\mathbf{f}_j(x,y)=(u_j(x,y),v_j(x,y))$, its average optical-flow magnitude is calculated as
\begin{equation}
m_j =
\frac{1}{HW}
\sum_{x=1}^{W}\sum_{y=1}^{H}
\sqrt{u_j(x,y)^2+v_j(x,y)^2}.
\end{equation}
We further normalize this value by the image diagonal length to make the motion measure independent of image resolution:
\begin{equation}
\hat{m}_j =
\frac{m_j}{\sqrt{H^2+W^2}}.
\end{equation}
Finally, the normalized average optical-flow magnitude of a video is defined as the median value over all adjacent frame pairs:
\begin{equation}
M_{\mathrm{video}}
=
\operatorname{median}_{j}\left(\hat{m}_j\right).
\end{equation}
Then, according to predefined thresholds, the candidate videos are divided into Slow, Medium, and Fast Motion. We then select 600 videos from each motion category, resulting in 1,800 training videos in total. We further construct the synthetic test set by selecting 212 unseen TrackingNet sequences from the remaining videos, while ensuring that the selected sequences span different levels of motion magnitude. None of these videos overlap with the training set.
}

\section{Limitation and Future Work}
\label{Sec:limitation_and_future_work}

\revision{The limitations of UniE2F mainly arise from three aspects.}

\revision{First, when the input events are extremely sparse, the event representation provides limited effective information about scene variation. This issue is particularly pronounced in low-motion or low-texture regions, where a large number of pixels may not trigger any events for a relatively long period. In such cases, the event condition cannot provide sufficient conditional information for the diffusion model, while the inter-frame residuals estimated from the events also become less informative. Consequently, UniE2F has to rely more heavily on the generative prior of the pre-trained video diffusion model to recover the missing content. Although the generated results may remain visually plausible, textures or structures may deviate from the ground truth.}

\revision{Second, our method is sensitive to the quality of the input events to some extent. When real-world event streams contain severe sensor noise or other abnormal events, these event streams may not only affect the encoding of the event representation but also further degrade the estimation of the event-based inter-frame residuals, thereby degrading the spatial reconstruction accuracy and inter-frame consistency.}

\revision{Third, in terms of computational overhead, while UniE2F achieves high-quality reconstruction, its reliance on a large diffusion backbone incurs higher computational cost and memory usage than non-diffusion methods. Accordingly, the current UniE2F is primarily intended for offline, high-fidelity reconstruction scenarios where reconstruction quality and robustness are prioritized over real-time latency, and it is not suitable for real-time or latency-critical deployment in its current form. Computational efficiency therefore remains an important limitation and direction for future work. We will investigate model compression and acceleration techniques, such as distillation, pruning, and consistency-model-based acceleration, to reduce the number of sampling steps and model complexity while preserving reconstruction quality.}

\section{More Visual Results}
\label{Sec:visual_result}

From \figref{fig:x1} to \figref{fig:x8}, we provide additional visual comparisons to demonstrate that our method achieves superior performance.
% TBD TBD TBD TBD TBD TBD TBD TBD TBDTBD TBD TBDTBD TBD TBDTBD TBD TBDTBD TBD TBDTBD TBD TBDTBD TBD TBDTBD TBD TBDTBD TBD TBDTBD TBD TBDTBD TBD TBDTBD TBD TBDTBD TBD TBDTBD TBD TBDTBD TBD TBDTBD TBD TBDTBD TBD TBDTBD TBD TBDTBD TBD TBDTBD TBD TBDTBD TBD TBDTBD TBD TBDTBD TBD TBDTBD TBD TBDTBD TBD TBD

\newpage

\begin{figure*}[h!]
\centering
\includegraphics[width=\linewidth]{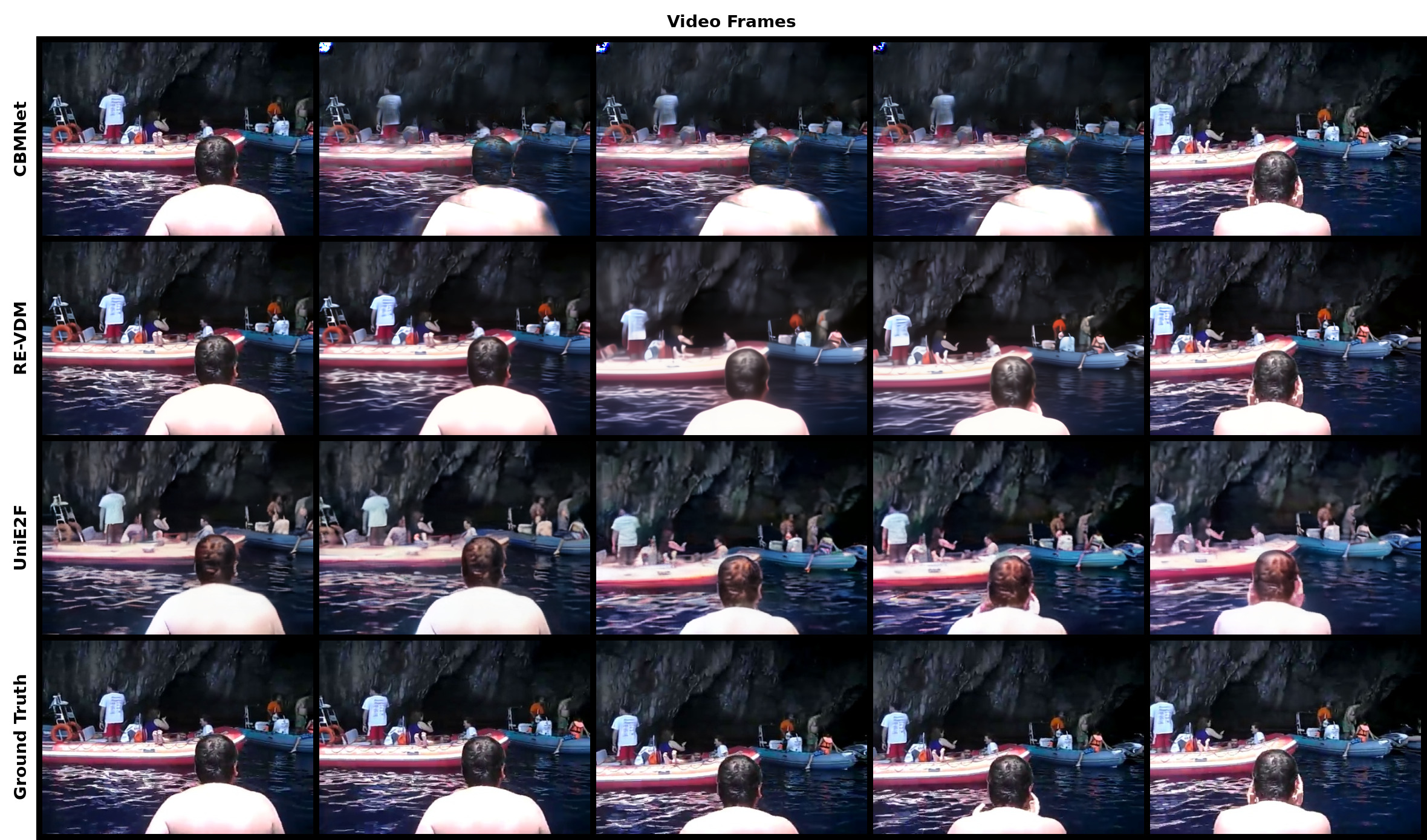}
% \vspace{-0.4cm}
\caption{Visual comparison of event-based video frame Interpolation results on synthetic dataset.}
\label{fig:x1}
% \vspace{-0.3cm}
\end{figure*}

\begin{figure*}[h!]
\centering
\includegraphics[width=\linewidth]{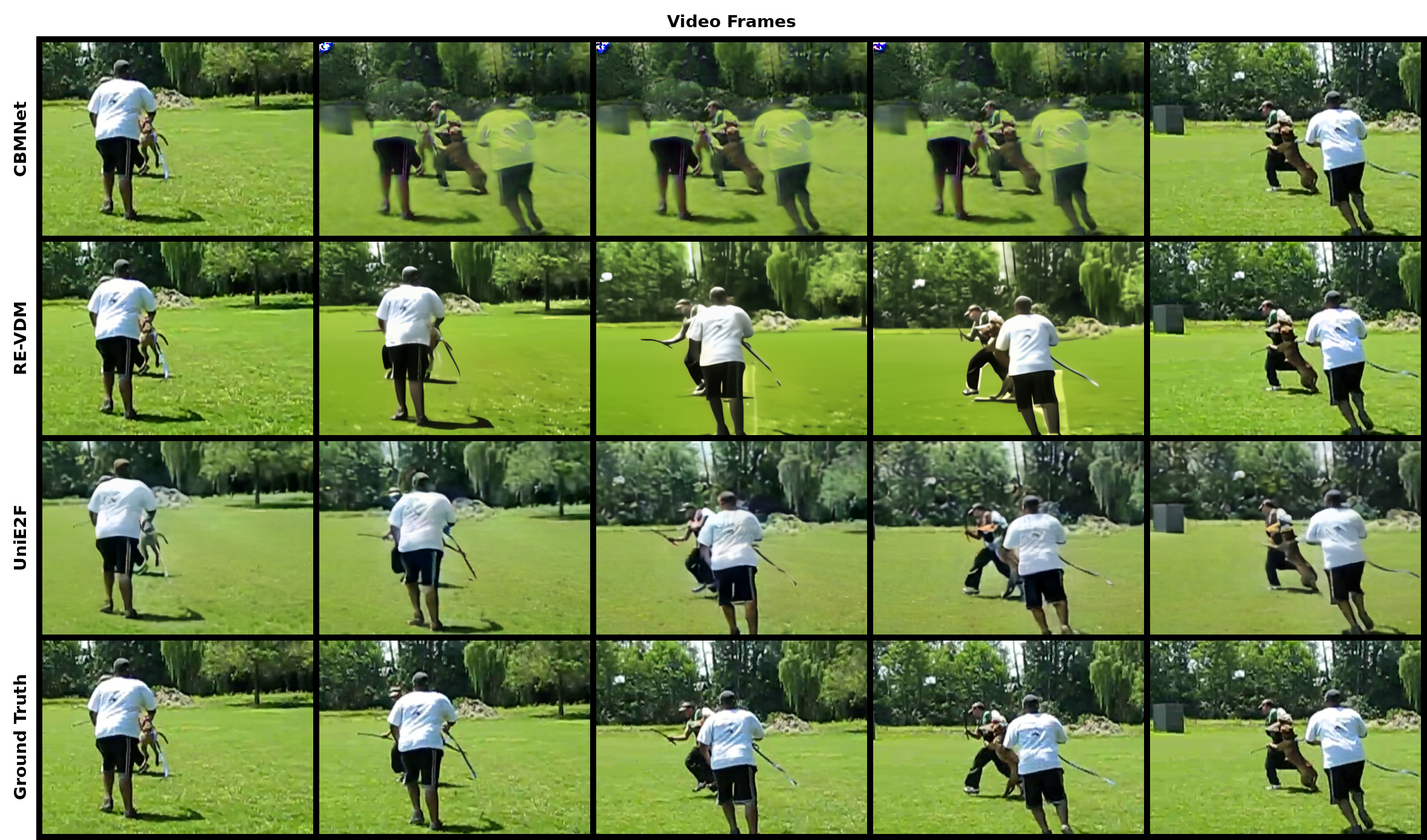}
% \vspace{-0.4cm}
\caption{Visual comparison of event-based video frame Interpolation results on synthetic dataset.}
% \vspace{-0.3cm}
\label{fig:x2}
\end{figure*}

\newpage
\begin{figure*}[h!]
\centering
\includegraphics[width=\linewidth]{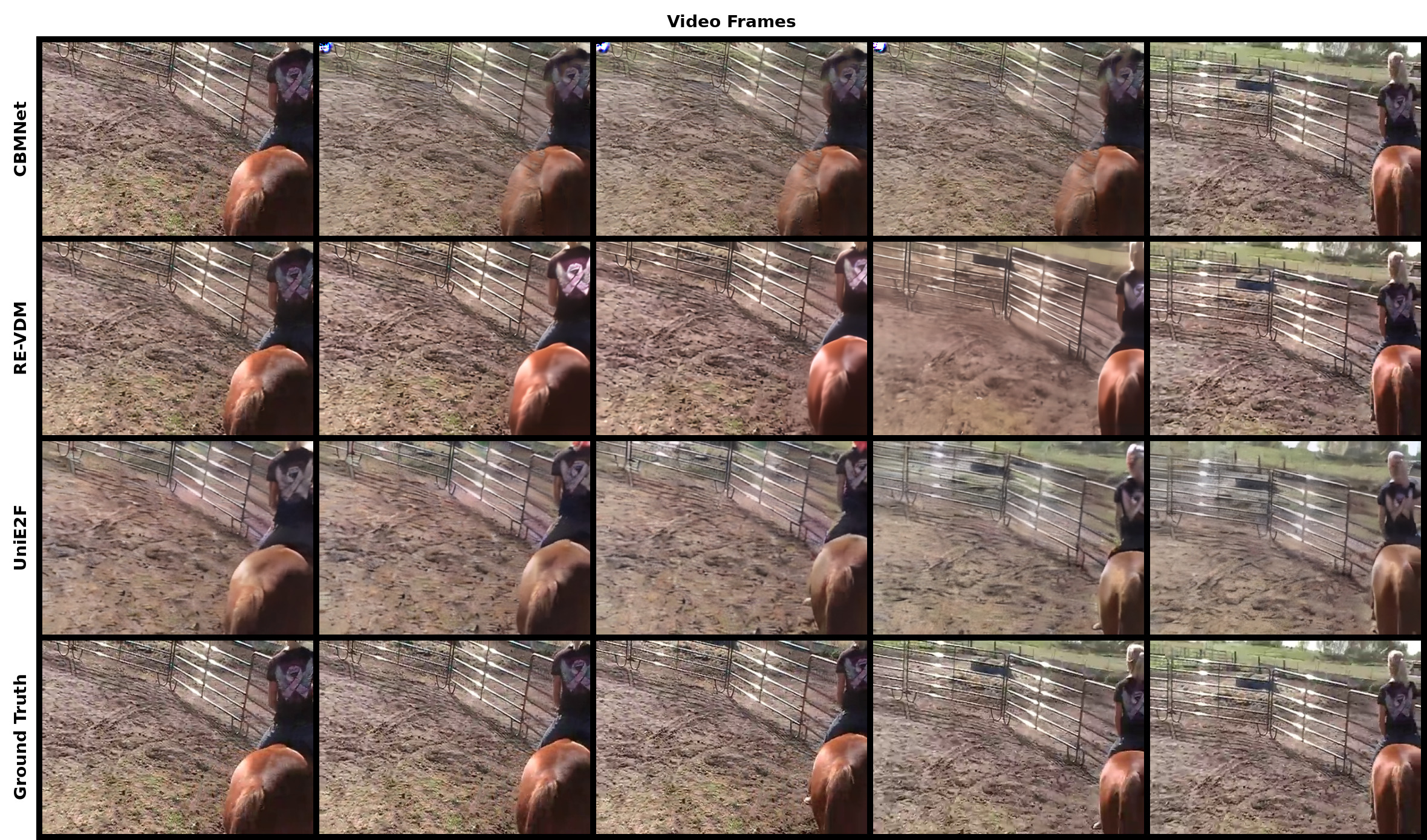}
% \vspace{-0.4cm}
\caption{Visual comparison of event-based video frame Interpolation results on synthetic dataset.}
% \vspace{-0.3cm}
\label{fig:x3}
\end{figure*}

\begin{figure*}[h!]
\centering
\includegraphics[width=\linewidth]{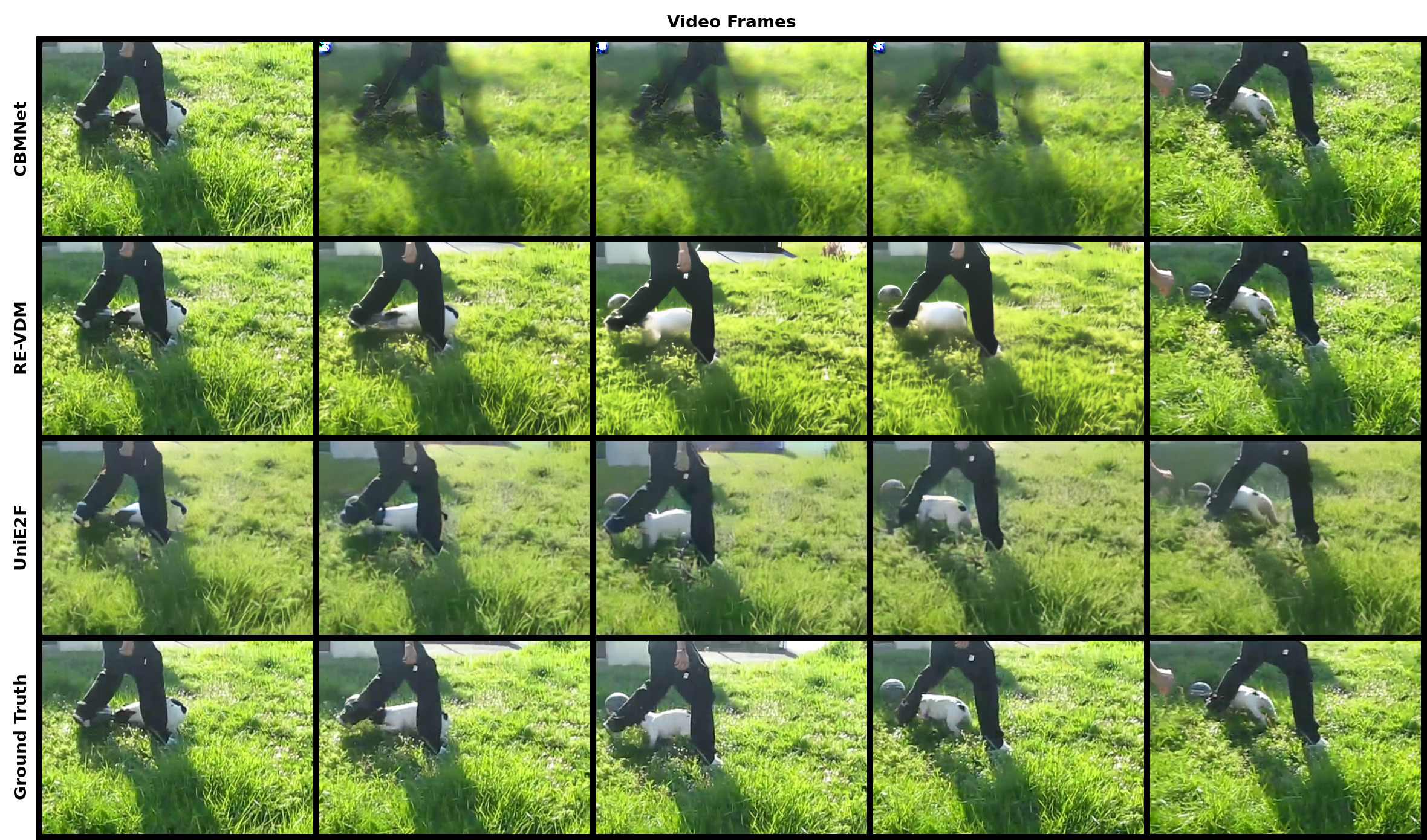}
% \vspace{-0.4cm}
\caption{Visual comparison of event-based video frame Interpolation results on synthetic dataset.}
% \vspace{-0.3cm}
\label{fig:x4}
\end{figure*}

\newpage
\begin{figure*}[h!]
\centering
\includegraphics[width=\linewidth]{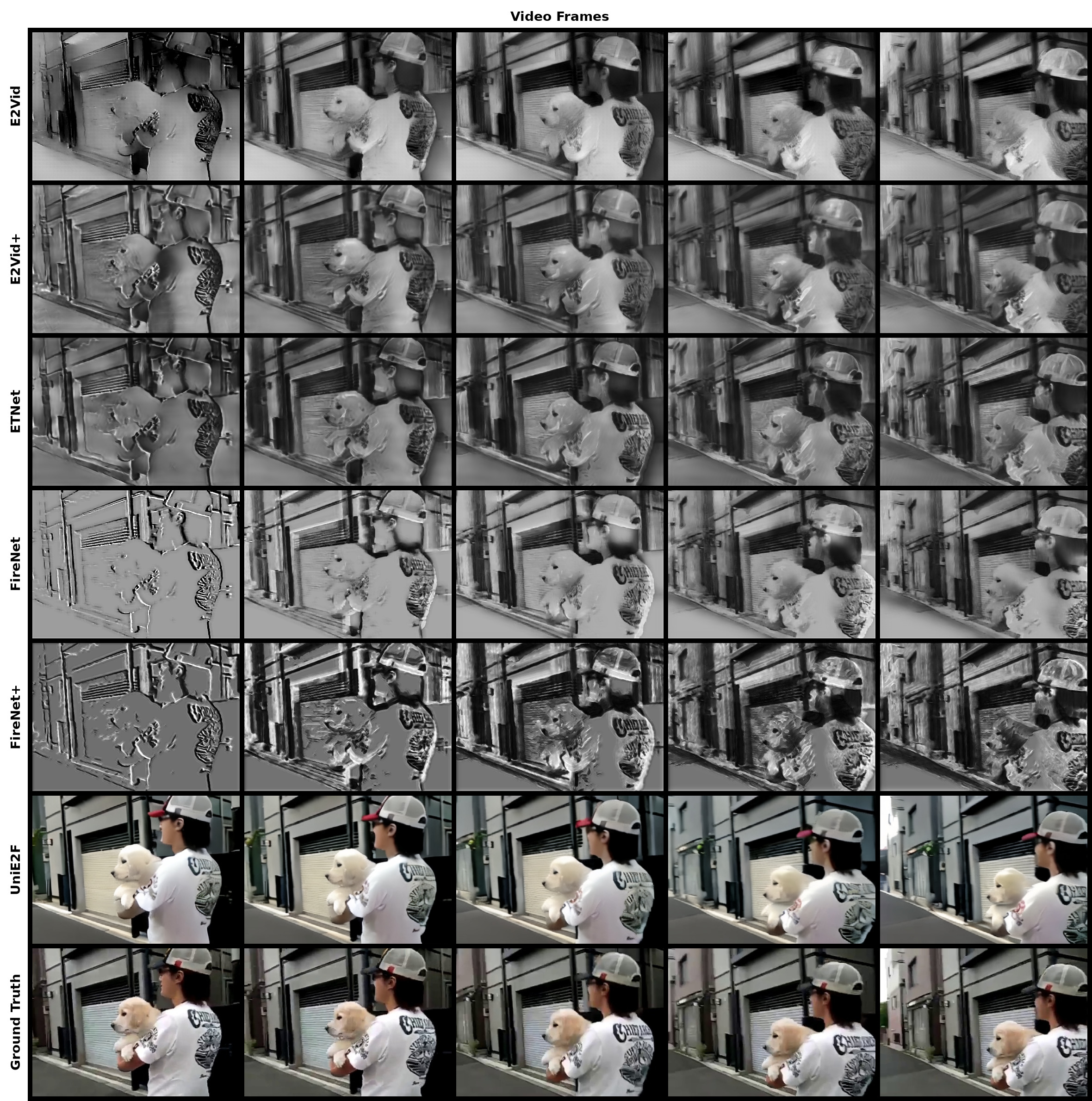}
% \vspace{-0.4cm}
\caption{Visual comparison of event-based video frame reconstruction results on synthetic dataset.}
% \vspace{-0.3cm}
\label{fig:x5}
\end{figure*}
\newpage
\begin{figure*}[h!]
\centering
\includegraphics[width=\linewidth]{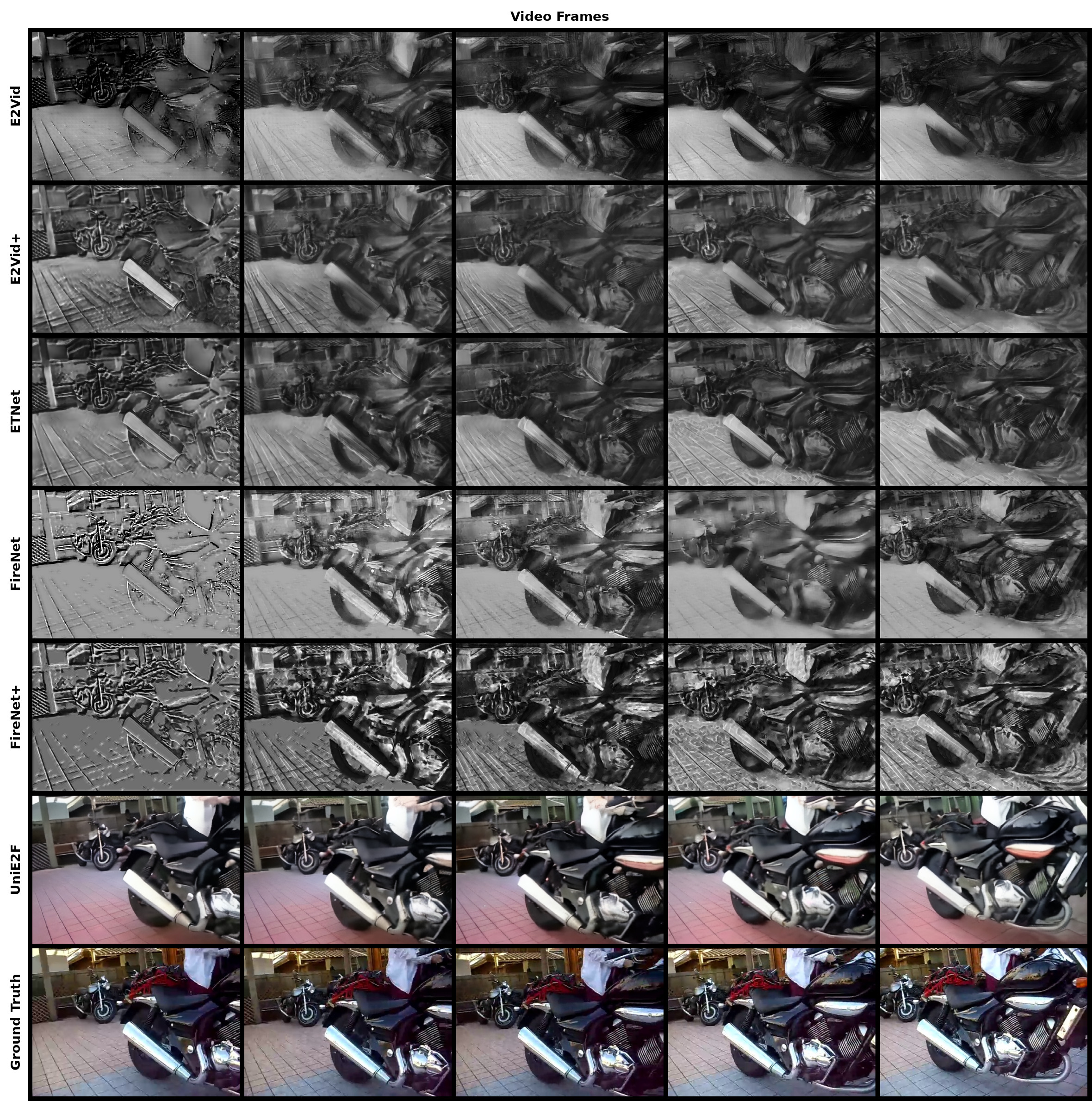}
% \vspace{-0.4cm}
\caption{Visual comparison of event-based video frame reconstruction results on synthetic dataset.}
% \vspace{-0.3cm}
\label{fig:x6}
\end{figure*}

\newpage

\begin{figure*}[h!]
\centering
\includegraphics[width=\linewidth]{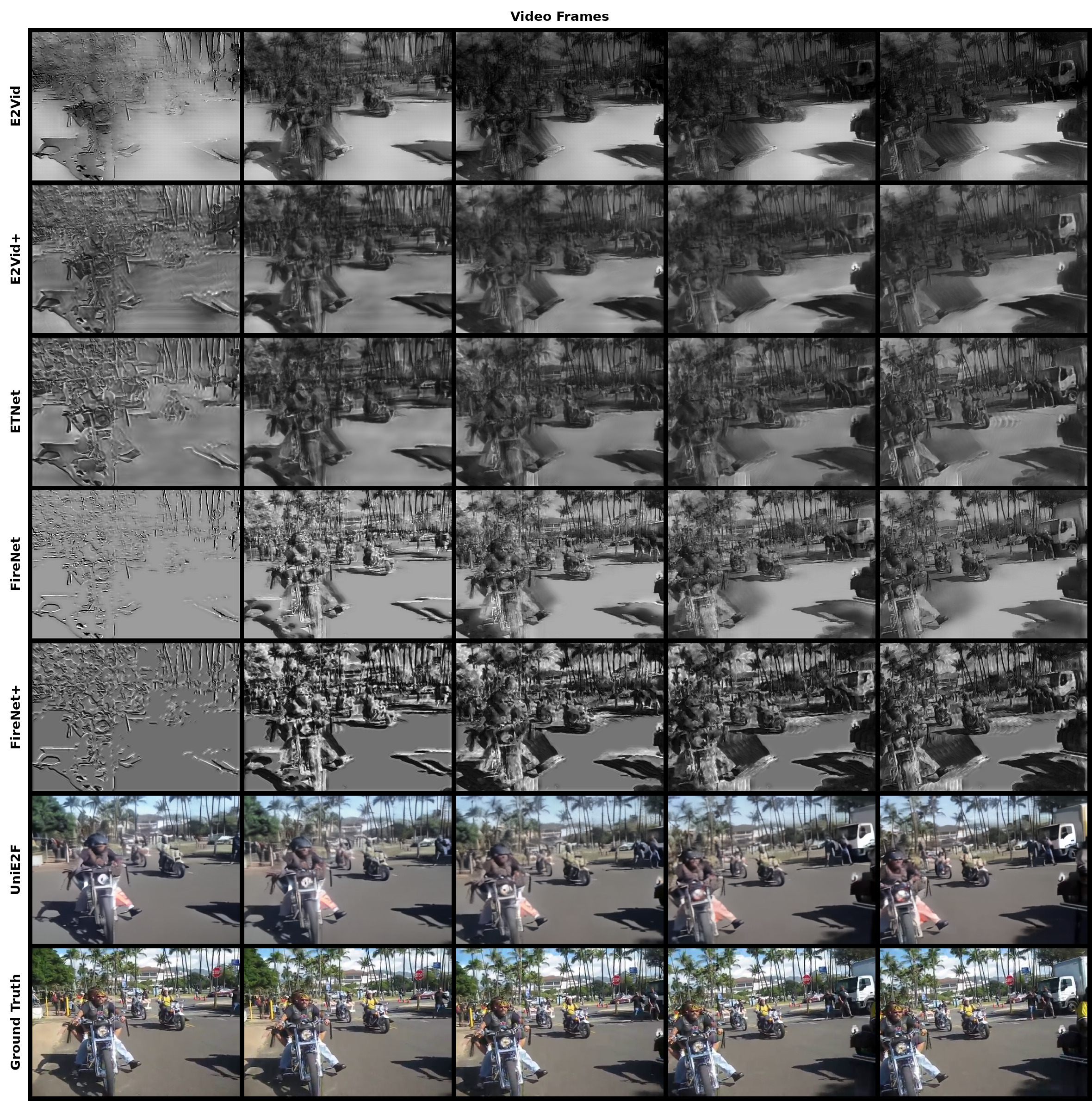}
% \vspace{-0.4cm}
\caption{Visual comparison of event-based video frame reconstruction results on synthetic dataset.}
% \vspace{-0.3cm}
\label{fig:x7}
\end{figure*}
\newpage
\begin{figure*}[h!]
\centering
\includegraphics[width=1.0\linewidth]{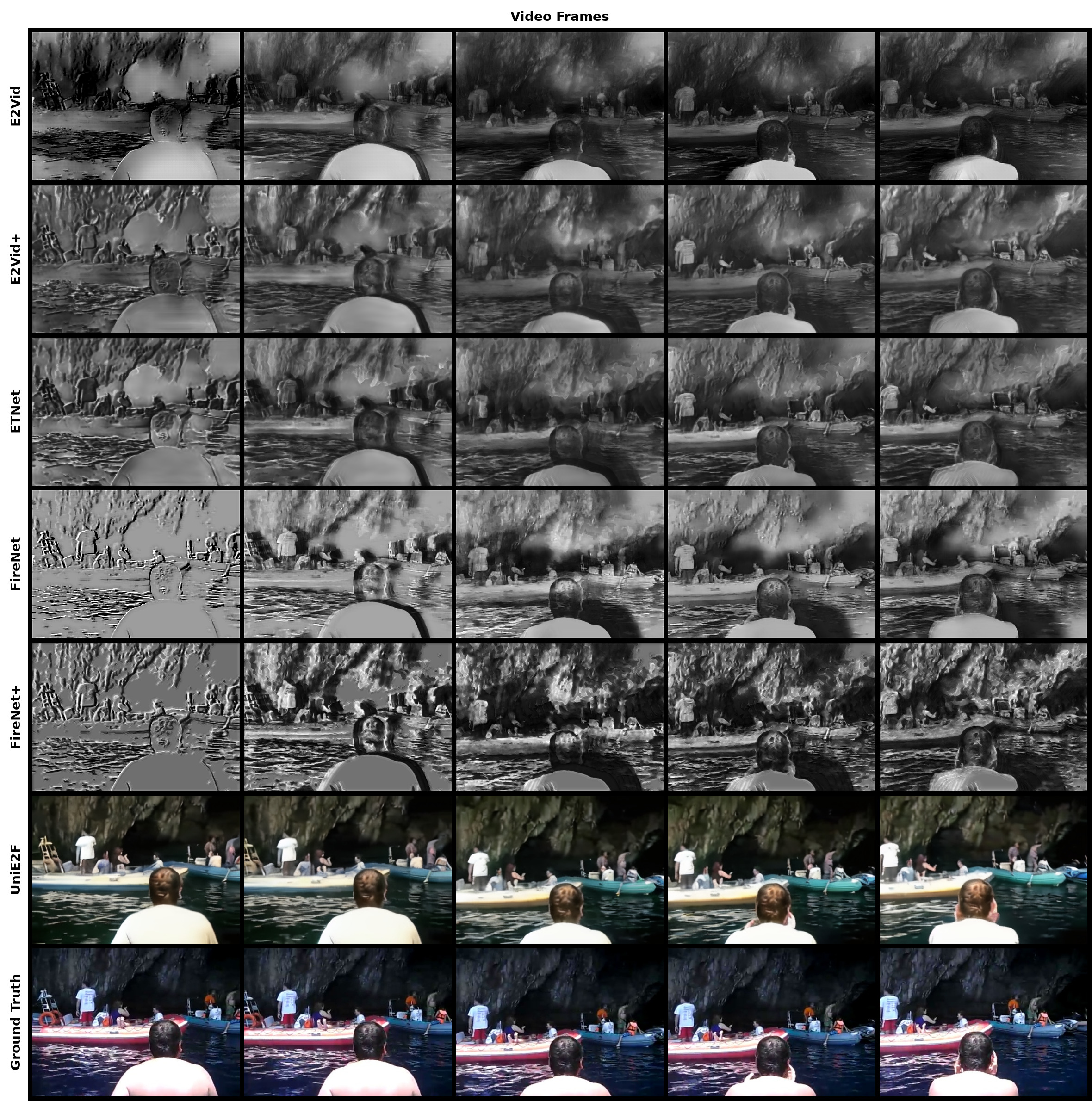}
% \vspace{-0.4cm}
\caption{Visual comparison of event-based video frame reconstruction results on synthetic dataset.}
% \vspace{-0.3cm}
\label{fig:x8}
\end{figure*}

\clearpage

\bibliographystyle{IEEEtran}
\bibliography{IEEE-bibliography}